\newtheorem{assumption}{Assumption}
\DeclareMathOperator*{\argmax}{arg\,max}
\DeclareMathOperator*{\argmin}{arg\,min}
\def \cF {\mathcal{F}}
\def \bx {\mathbf{x}}
\def \bX {\mathbf{X}}
\def \by {\mathbf{y}}
\def \cA {\mathcal{A}}
\def \bR {\mathbb{R}}
\def \bE {\mathbb{E}}
\def \bbE {\mathbf{E}}
\def \bV {\mathbb{V}}
\def \cB {\mathcal{B}}
\def \cS {\mathcal{S}}
\newcommand{\algone}{{FedGLB-UCB}}
\begin{document}

\title{Communication Efficient Federated Learning for \\Generalized Linear Bandits}

\author{\name Chuanhao Li \email cl5ev@virginia.edu \\
       \addr Department of Computer Science\\
       University of Virginia\\
       Charlottesville, VA 22903, USA\\
       \AND
       \name Hongning Wang \email hw5x@virginia.edu \\
       \addr Department of Computer Science\\
       University of Virginia\\
       Charlottesville, VA 22903, USA
       }


\maketitle

\begin{abstract}
Contextual bandit algorithms have been recently studied under the federated learning setting to satisfy the demand of keeping data decentralized and pushing the learning of bandit models to the client side. But limited by the required communication efficiency, existing solutions are restricted to linear models to exploit their closed-form solutions for parameter estimation. Such a restricted model choice greatly hampers these algorithms' practical utility. 
In this paper, we take the first step to addressing this challenge by studying generalized linear bandit models under the federated learning setting. We propose a communication-efficient solution framework that employs online regression for local update and offline regression for global update. We rigorously proved, though the setting is more general and challenging, our algorithm can attain sub-linear rate in both regret and communication cost, which is also validated by our extensive empirical evaluations.
\end{abstract}

\begin{keywords}
  generalized linear bandit, federated learning, communication efficiency
\end{keywords}

\section{Introduction}
As a classic model for sequential decision making problems, contextual bandit has been widely used for a variety of real-world applications, including recommender systems \citep{li2010contextual}, display advertisement \citep{li2010exploitation} and clinical trials \citep{durand2018contextual}. While most existing bandit solutions are designed under a centralized setting (i.e., data is readily available at a central server), in response to the increasing application scale and public concerns of privacy, there is 
increasing research effort on federated bandit learning lately \citep{wang2019distributed,dubey2020differentially,shi2021federated,huang2021federated,li2022asynchronous},
where $N$ clients collaborate with limited communication bandwidth to minimize the overall cumulative regret incurred over a finite time horizon $T$, while keeping each client's raw data local.
Compared with standard federated learning \citep{mcmahan2017communication,kairouz2019advances} that works with fixed datasets, federated bandit learning is characterized by its online interactions with the environment, which continuously provides new data samples to the clients over time. 
This brings in new challenges in addressing the conflict between the need of timely data/model aggregation for regret minimization and the need of communication efficiency with decentralized data. 
A carefully designed 
model update method and communication strategy become vital to strike this balance.



Existing federated bandit learning solutions only partially addressed this challenge by considering simple bandit models, like context-free bandit \citep{shi2021federated} and contextual linear bandit \citep{wang2019distributed,dubey2020differentially,li2022asynchronous}, 
where closed-form solution for both local and global model update exists. Therefore, efficient communication for global bandit model update is realized by directly aggregating local sufficient statistics,
such that the only concern left is how to control the communication frequency over time horizon $T$. 
However, such a solution framework does not apply to the more complicated bandit models that are often preferred in practice, such as generalized linear bandit (GLB) \citep{filippi2010parametric} or neural bandit \citep{zhou2020neural}, where only iterative solutions exist for parameter estimation (e.g., gradient-based optimization). 
To enable joint model estimation, now the learning system needs to solve distributed optimization for multiple times as new data is collected from the environment, and each requires iterative gradient/model aggregation among clients.
This is much more expensive compared with linear models, and it naturally leads to the question: whether a communication efficient solution to this challenging problem is still possible?

In this paper, we answer this question affirmatively by proposing the first provably communication efficient algorithm for federated GLB that only requires $\tilde{O}(\sqrt{T})$ communication cost, while still attaining the optimal order of regret.
Our proposed algorithm employs a combination of online and offline regression, with online regression adjusting each client's model using its newly collected data, and offline (distributed) regression occasionally soliciting local gradients from all $N$ clients for joint model estimation when sufficient amount of new data has been accumulated.
In order to balance exploration and exploitation in arm selection, we propose a novel way to construct the confidence set based on the sequence of \textit{offline-and-online} model updates that each client has received. The initialization of online regression with offline regression introduces dependencies that break the standard martingale argument, which requires proof techniques unique to this paper.

We also explored other non-trivial solution ideas to further justify our current design. Specifically, in practice, a common way to update the deployed model for applications with streaming data is to set a schedule and periodically re-train the model using iterative optimization methods. For comparison, we propose and rigorously analyze a federated GLB algorithm designed based on this idea, as well as a variant that further enables online updates on the clients. We also consider another solution idea motivated by distributed/batched online convex optimization, which is characterized by lazy online updates over batches of data. Moreover, extensive empirical evaluations on both synthetic and real-world datasets are performed to validate the effectiveness of our algorithm.

\section{Related Work}
GLB, as an important extension of linear bandit models, has demonstrated encouraging performance in modeling binary rewards (such as clicks) that are ubiquitous in real-world applications \citep{li2012unbiased}. The study of GLB under a centralized setting dates back to Filippi et al. \cite{filippi2010parametric}, who proposed a UCB-type algorithm that achieved $\tilde{O}(d\sqrt{T})$ regret. Li et al. \cite{li2017provably} later proposed two improvements: a similar UCB-type algorithm that improves the result of \citep{filippi2010parametric} by a factor of $O(\log{T})$, which has been popularly used in practice as it avoids the projection step needed in \citep{filippi2010parametric}; and another impractical algorithm that further improves the result by a factor of $O(\sqrt{d})$ assuming fixed number of arms.
To improve the time and space complexity of the aforementioned GLB algorithms, followup works adopted online regression methods.
In particular, motivated by the online-to-confidence-set conversion technique from \cite{abbasi2012online}, Jun et al. \cite{jun2017scalable} proposed both UCB and Thompsan sampling algorithms with online Newton step, and Ding et al. \cite{ding2021efficient} proposed a Thompson sampling algorithm with online gradient descent, which, however, requires an additional context regularity assumption to obtain a sub-linear regret.



GLB under federated/distributed setting still remains under-explored. The most related works are the federated/distributed linear bandits \citep{korda2016distributed,wang2019distributed,dubey2020differentially,huang2021federated,li2022asynchronous}.
In these works, thanks to the existence of closed-form solution for linear models, the clients only communicate their local sufficient statistics for global model update. 
Korda et al. \cite{korda2016distributed} considered a peer-to-peer (P2P) communication network and assumed the clients form clusters, i.e., each cluster is associated with a unique bandit problem. But as they only focused on reducing \textit{per-round} communication, the communication cost is still linear over time. 
Huang et al. \cite{huang2021federated} considered a star-shaped communication network as in our paper, but their proposed phase-based elimination algorithm only works in fixed arm set setting.
The closest works to ours are \citep{wang2019distributed,dubey2020differentially,li2022asynchronous}, which uses event-triggered communication protocols to obtain sub-linear communication cost over time for federated linear bandit with a time-varying arm set.
Another related line of research is the standard federated learning that considers offline supervised learning problems \citep{kairouz2019advances}.
Since its debut in \citep{mcmahan2017communication}, FedAvg has become the most popularly used algorithm for offline federated learning. However, despite its popularity, several works \citep{li2019convergence,karimireddy2020scaffold,mitra2021achieving} identified that FedAvg suffers from a \emph{client-drift} problem when the clients' data are non-IID (which is an important signature of our case), i.e., local iterates in each client drift towards their local minimum. This leads to a sub-optimal convergence rate of FedAvg: for example, one has to suffer a sub-linear convergence rate for strongly convex and smooth losses, though a linear convergence rate is expected under a centralized setting. To alleviate this, Pathak and Wainwright \cite{pathak2020fedsplit} proposed an operator splitting procedure to guarantee linear convergence to a neighborhood of the global minimum. Later,
Mitra et al. \cite{mitra2021achieving} introduced variance reduction techniques to guarantee exact linear convergence to the global minimum.


\section{Preliminaries} \label{sec:problem_formulation}
In this section, we first introduce the general problem formulation of federated bandit learning, and discuss the existing solutions under the linear reward assumption. Then we formulate the federated GLB problem considered in this paper, followed by detailed discussions about the new challenges compared with its linear counterpart.

\subsection{Federated Bandit Learning}
Consider a learning system with 1) $N$ clients responsible for taking actions and receiving corresponding reward feedback from the environment, e.g., each client being an edge device directly interacting with a user, and 2) a central server responsible for coordinating the communication between the clients for joint model estimation.

At each time step $t=1,2,...,T$, all $N$ clients interact with the environment in a round-robin manner, i.e., each client $i \in [N]$ chooses an arm $\bx_{t,i}$ from its time-varying candidate set $\cA_{t,i}=\{\bx_{t,i}^{(1)}, \bx_{t,i}^{(2)}, \dots, \bx_{t,i}^{(K)}\}$, where $\bx_{t,i}^{(a)} \in \bR^{d}$ denotes the context vector associated with the $a$-th arm for client $i$ at time $t$. 
Without loss of generality, we assume $||\bx_{t,i}^{(a)}||_{2}\leq 1,\forall i,a,t$. 
Then client $i$ receives the corresponding reward $y_{t,i} \in \bR$ from the environment, which is drawn from the reward distribution governed by an unknown parameter $\theta_{\star} \in \bR^{d}$ (assume $\lVert \theta_{\star} \rVert \leq S$), i.e., $y_{t,i} \sim p_{\theta_{\star}}(y|\bx^{(a)}_{t,i})$.
The interaction between the learning system and the environment repeats itself, and the goal of the learning system is to minimize the cumulative (pseudo) regret over all $N$ clients in the finite time horizon $T$, i.e., $R_{T}=\sum_{t=1}^{T} \sum_{i=1}^{N}r_{t,i}$, where $r_{t,i}=\max_{\bx \in \cA_{t,i}} \bbE[y|\bx]- \bbE[y_{t,i}|\bx_{t,i}]$.

In a federated learning setting, the clients cannot directly communicate with each other, but through the central server, i.e., a star-shaped communication network. Raw data collected by each client $i \in [N]$, i.e., $\{(\bx_{s,i},y_{s,i})\}_{s \in [T]}$, is stored locally and cannot be shared with anyone else. Instead, the clients can only communicate the parameters of the learning algorithm, e.g., models, gradients, or sufficient statistics; and the communication cost is measured by 
the total number of times data being transferred across the system up to time $T$, which is denoted as $C_{T}$.


\subsection{Federated Linear Bandit} \label{subsec:federated_linear}
Prior works have studied communication-efficient federated linear bandit \citep{wang2019distributed,dubey2020differentially}, i.e., the reward function is a linear model $y_{t,i}=\bx_{t,i}^{\top}\theta_{\star} +\eta_{t,i}$, where $\eta_{t,i}$ denotes zero-mean sub-Gaussian noise.
Consider an imaginary centralized agent that has direct access to the data of all clients, so that it can compute the global sufficient statistics $A_{t}=\sum_{i \in [N]} \sum_{s \in [t]} \bx_{s,i} \bx_{s,i}^{\top},b_{t}=\sum_{i \in [N]}\sum_{s\in [t]}\bx_{s,i}y_{s,i}$. 
Then the cumulative regret incurred by this distributed learning system can match that under a centralized setting, if all $N$ clients select arms based on the global sufficient statistics $\{A_{t},b_{t}\}$. 
However, it requires $N^{2}T$ communication cost for the immediate sharing of each client's update to the sufficient statistics
with all other clients, which is expensive for most applications.

To ensure communication efficiency, prior works like DisLinUCB \cite{wang2019distributed} let each client $i$ maintain a local copy $\{A_{t-1,i},b_{t-1,i}\}$ for arm selection, which receives immediate local update using each newly collected data sample, i.e., $A_{t,i}=A_{t-1,i}+\bx_{t,i} \bx_{t,i}^{\top}, b_{t,i}=b_{t-1,i}+\bx_{t,i} y_{t,i}$. Then client $i$ checks whether the event $(t-t_\text{last}) \log(\frac{\det A_{t,i}}{\det A_{t_\text{last}}})>D$ is true, where $t_\text{last}$ denotes the time step of last global update. If true, a new global update is triggered, such that the server will collect all clients' local update since $t_\text{last}$, aggregate them to compute $\{A_{t},b_{t}\}$, and then synchronize the local sufficient statistics of all clients, i.e., set $\{A_{t,i},b_{t,i}\}=\{A_{t},b_{t}\},\forall i \in [N]$.


\subsection{Federated Generalized Linear Bandit} \label{subsec:federated_generalized_linear}
In this paper, we study federated bandit learning with generalized linear models, i.e., the conditional distribution of reward $y$ given context vector $\bx$ is drawn from the exponential family \citep{filippi2010parametric,li2017provably}:
\begin{equation} \label{eq:glm_model}
    p_{\theta_{\star}}(y|\bx) = \exp{\left( \frac{y \bx^{\top} \theta_{\star} - m(\bx^{\top} \theta_{\star})}{g(\tau)} + h(y,\tau) \right)}
\end{equation}
where $\tau \in \bR^{+}$ is a known scale parameter.
Given a function $f:\bR \rightarrow \bR$, we denote its first and second derivatives by $\dot{f}$ and $\ddot{f}$, respectively. 
It is known that $\dot{m}(\bx^{\top} \theta_{\star})=\bE[y|\bx]:=\mu(\bx^{\top}\theta_{\star})$, which is called the inverse link function, and $\ddot{m}(\bx^{\top}\theta_{\star}) =\bV(y|\bx^{\top}\theta_{\star})$.
Based on Eq.\eqref{eq:glm_model}, the reward $y_{t,i}$ observed by client $i$ at time $t$ can be equivalently represented as $y_{t,i}=\mu(\bx_{t,i}^{\top}\theta_{\star}) + \eta_{t,i}$, where $\eta_{t,i}$ denotes the sub-Gaussian noise. 
Then we denote the negative log-likelihood of $y_{i,t}$ given $\bx_{i,t}$ as $l(\bx_{t,i}^{\top}\theta_{\star},y_{t,i})= -\log{p_{\theta_{\star}}(y_{t,i}|\bx_{t,i})}=-y_{t,i} \bx_{t,i}^{\top}\theta_{\star} + m(\bx_{t,i}^{\top}\theta_{\star})$. 
In addition, we adopt the following two assumptions about the reward, which are standard for GLB \citep{filippi2010parametric}.
\begin{assumption}\label{assump:1}
The link function $\mu$ is continuously differentiable on $(-S,S)$, $k_{\mu}$-Lipschitz on $[-S,S]$, and $\inf_{z \in [-S,S]} \dot{\mu}(z) = c_{\mu} >0$.
\end{assumption}
\begin{assumption}\label{assump:2}
$\bE[\eta_{t,i}|\cF_{t,i}]=0, \forall t,i$, where $\cF_{t,i}=\sigma\{ \bx_{t,i}, [\bx_{s,j}, y_{s,j}]_{(s,j):s < t \cap j = i} \}$ denotes the $\sigma$-algebra generated by client $i$'s previously pulled arms and observed rewards, 
and $\max_{t , i} |\eta_{t,i}| \leq R_{\max}$ for some constant $R_{\max} > 0$.
\end{assumption}

\paragraph{New Challenges} Compared with federated linear bandit discussed in Section \ref{subsec:federated_linear}, new challenges arise in designing a communication-efficient algorithm for federated GLB due to the absence of a closed form solution:
\begin{itemize}[nosep]
    \item \emph{Iterative communication for global update:} compared with the global update for federated linear bandit that only requires one round of communication to share the sufficient statistics, now it takes multiple iterations of gradient aggregation to obtain converged global optimization.
    Moreover, as the clients collect more data samples over time during bandit learning, the required number of iterations for convergence also increases.
    \item \emph{Drifting issue with local update:} 
    during local model update, 
    iterative optimization using only local gradient can push the updated model away from the global model, i.e., forget the knowledge gained during previous communications \cite{kirkpatrick2017overcoming}.
\end{itemize}

\begin{figure}[t]
\centering
\includegraphics[width=0.9\textwidth]{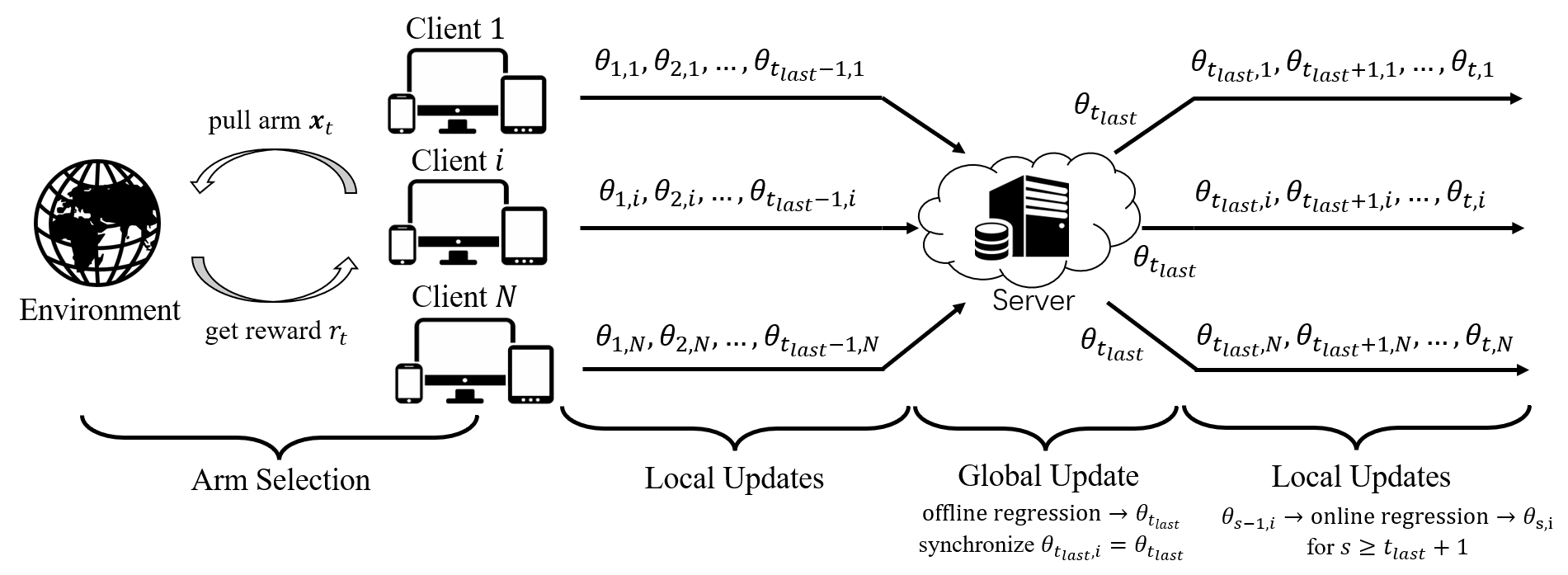}
\caption{Illustration of \algone{} algorithm,
which uses online regression for local update, i.e., immediately update each client's local model $\theta_{t, i}$ using its newly collected data sample, and uses offline regression for global update, i.e., synchronize all $N$ clients to a globally updated model $\theta_{t_\text{last}}$ using all the data samples collected so far.
}
\label{fig:fedglb_illustration}
\end{figure}
\section{Methodology}
In this section we propose the first algorithm for federated GLB that addresses the aforementioned challenges. 
We rigorously prove that it attains sub-linear rate in $T$ for both regret and communication cost. In addition, we propose and analyze different variants of our algorithm to facilitate understanding of our algorithm design.

\subsection{\algone{} Algorithm} \label{subsec:algo_description}
To ensure communication-efficient model updates for federated GLB, 
we propose to use online regression for local update, i.e., update each client's local model only with its newly collected data samples, and use offline regression for global update, i.e., solicit all clients' local gradients for joint model estimation. Based on the resulting sequence of \emph{offline-and-online} model updates, the confidence ellipsoid for $\theta_{\star}$ is constructed for each client to select arms using the OFUL principle. We name this algorithm Federated Generalized Linear Bandit with Upper Confidence Bound, or \algone{} for short. We illustrate its key components in Figure \ref{fig:fedglb_illustration} and describe its procedures in Algorithm \ref{algo:3}. In the following, we discuss about each component of \algone{} in details.

\begin{algorithm}[h]
    \caption{\algone{}} \label{algo:3}
  \begin{algorithmic}[1]
    \STATE \textbf{Input:} threshold $D$, regularization parameter $\lambda>0$, $\delta \in (0,1)$ and $c_{\mu}$.
    \STATE \textbf{Initialize} $\forall i\in[N]$: ${A}_{0,i}=\frac{\lambda}{c_{\mu}} \textbf{I} \in \bR^{d \times d}, b_{0,i}=\textbf{0} \in \bR^{d} ,\theta_{0,i}=\textbf{0} \in \bR^{d}, \Delta {A}_{0,i}= \textbf{0} \in \bR^{d \times d}$;
    ${A}_{0}=\frac{\lambda}{c_{\mu}} \textbf{I} \in \bR^{d \times d}, b_{0}=\textbf{0} \in \bR^{d}, \theta_{0}= \textbf{0} \in \bR^{d}$, $t_{\text{last}}=0$
    \FOR{ $t=1,2,...,T$}
        \FOR{client $i=1,2,...,N$}
            \STATE Observe arm set $\mathcal{A}_{t,i}$ for client $i$
            \STATE Select arm $\bx_{t,i}\in\mathcal{A}_{t,i}$ by Eq.\eqref{eq:UCB}, and observe reward $y_{t,i}$
            \STATE Update client $i$: ${A}_{t,i} = {A}_{t-1,i} + \bx_{t,i} \bx_{t,i}^{\top}$, $\Delta {A}_{t,i} = \Delta {A}_{t-1,i} + \bx_{t,i} \bx_{t,i}^{\top}$ 
            \IF{$(t-t_{\text{last}})\log{\frac{\det(A_{t,i})}{\det(A_{t,i}-\Delta A_{t,i})}} < D$}
                \STATE \textbf{Client} $i$: perform local update $\theta_{t,i}=\text{ONS-Update}(\theta_{t-1,i},A_{t,i},\nabla l(\bx_{t,i}^{\top} \theta_{t-1,i},y_{t,i}))$, $b_{t,i}=b_{t-1,i}+ \bx_{t,i}\bx_{t,i}^{\top}\theta_{t-1,i}$
            \ELSE
                \STATE \textbf{Clients} $\forall i \in [N]$: send $\Delta A_{t,i}$ to server, and reset $\Delta A_{t,i}=\textbf{0}$
                \STATE \textbf{Server}: compute $A_{t}=A_{t_{\text{last}}}+\sum_{i=1}^{N}\Delta A_{t,i}$
                \STATE \textbf{Server}: perform global update $\theta_{t}=\text{AGD-Update}(\theta_{t_{\text{last}}},J_{t})$ (see Eq.\eqref{eq:AGD_J_upper_bound} for the choice of $J_{t}$), $b_{t}=b_{t_{\text{last}}}+\sum_{i=1}^{N}\Delta A_{t,i}\theta_{t}$, and set $t_{\text{last}}=t$
                \STATE \textbf{Clients} $\forall i \in [N]$: set $\theta_{t,i}=\theta_{t}, A_{t,i}=A_{t}, b_{t,i}=b_{t}$
            \ENDIF
        \ENDFOR
    \ENDFOR
  \end{algorithmic}
\end{algorithm}


\noindent\textbf{$\bullet$ Local update.}
As mentioned earlier, iterative optimization over local dataset $\{(\bx_{s,i},y_{s,i})\}_{s \in [t]}$ leads to the drifting issue that pushes the updated model to the local optimum. Due to the small size of this local dataset, the confidence ellipsoid centered at the converged model has increased width,
which leads to increased regret in bandit learning. 
However, as we will prove in Section \ref{subsec:theorectical_analysis}, completely disabling local update and restricting all clients to use the previous globally updated model for arm selection is also a bad choice, because the learning system will then need more frequent global updates to adapt to the growing dataset.

To enable local update while alleviating the drifting issue, we adopt online regression in each client, such that the local model estimation $\theta_{t,i}$ is only updated for one step using the sample $(\bx_{t,i},y_{t,i})$ collected at time $t$.
Prior works \cite{abbasi2012online,jun2017scalable} showed that UCB-type algorithms with online regression can attain comparable cumulative regret to the standard UCB-type algorithms \citep{abbasi2011improved,li2017provably}, as long as the selected online regression method guarantees logarithmic online regret. 
As the negative log-likelihood loss defined in Section \ref{subsec:federated_generalized_linear} is exp-concave and online Newton step (ONS) is known to attain logarithmic online regret in this case \citep{hazan2007logarithmic,jun2017scalable}, ONS is chosen for the local update of \algone{} and its description is given in 
Algorithm \ref{alg:ons_update}.
At time step $t$, after client $i$ pulls an arm $\bx_{t,i} \in \cA_{t,i}$ and observes the reward $y_{t,i}$, its model $\theta_{t-1,i}$ is immediately updated by the ONS update rule (line 9 in Algorithm \ref{algo:3}), 
where $\nabla l(\bx_{t,i}^{\top} \theta_{t-1,i},y_{t,i})$ denotes the gradient w.r.t. $\theta_{t-1,i}$, and $A_{t,i}$
denotes the covariance matrix for client $i$ at time $t$.

\begin{algorithm}[h]
    \caption{$\quad \text{ONS-Update}$} \label{alg:ons_update}
  \begin{algorithmic}[1]
    \STATE \textbf{Input:} $\theta_{t-1,i}, A_{t,i}, \nabla l(\bx_{t,i}^{\top} \theta_{t-1,i},y_{t,i})$
    \STATE $\theta_{t,i}^{\prime} = \theta_{t-1,i} - \frac{1}{c_{\mu}} A_{t,i}^{-1} \nabla l(\bx_{t,i}^{\top} \theta_{t-1,i},y_{t,i})$
    \STATE $\theta_{t,i} = \argmin_{\theta \in \cB_{d}(S)} ||\theta-\theta_{t,i}^{\prime}||^{2}_{A_{t,i}}$
    \STATE \textbf{Output:} $\theta_{t,i}$
  \end{algorithmic}
\end{algorithm}

\noindent\textbf{$\bullet$ Global update}
The global update of \algone{} requires communication among the $N$ clients,
which imposes communication cost in two aspects: 1) each global update for federated GLB requires multiple rounds of communication among $N$ clients, i.e., iterative aggregation of local gradients; and 2) global update needs to be performed for multiple times over time horizon $T$, in order to adapt to the growing dataset collected by each client during bandit learning.
Consider a particular time step $t\in [T]$ when global update happens, 
the distributed optimization objective is:
\begin{equation}\label{eq:objective}
    \min_{\theta \in \Theta} F_{t}(\theta) := \frac{1}{N} \sum_{i=1}^{N} F_{t,i}(\theta)
\end{equation}
where 
$F_{t,i}(\theta)=\frac{1}{t}\sum_{s=1}^{t} l(\bx_{s,i}^{\top}\theta,y_{s,i}) + \frac{\lambda}{2t} ||\theta||_{2}^{2}$ denotes the \textit{average} regularized negative log-likelihood loss for client $i \in [N]$, and $\lambda > 0$ denotes the regularization parameter. 
Based on Assumption \ref{assump:1}, $\{F_{t,i}(\theta)\}_{i \in [N]}$ are $\frac{\lambda}{Nt}$-strongly-convex and $(k_{\mu}+\frac{\lambda}{N t})$-smooth in $\theta$ (proof in Appendix \ref{sec:technical_lemmas}), and we denote the unique minimizer of Eq.\eqref{eq:objective} as $\hat{\theta}^{\text{MLE}}_{t}$.
In this case, it is known that
the number of communication rounds $J_{t}$ required to attain a specified sub-optimality $\epsilon_{t}$, such that $F_{t}(\theta)-\min_{\theta \in \Theta} F_{t}(\theta) \leq \epsilon_{t}$, has a lower bound 
$J_{t}=\Omega\left(\sqrt{(k_{\mu}Nt)/\lambda+1}\log{\frac{1}{\epsilon_{t}}}\right)$
\citep{arjevani2015communication}, which means $J_{t}$ increases at least at the rate of $\sqrt{Nt}$. 
This lower bound is matched by the distributed version of accelerated gradient descent (AGD) \citep{nesterov2003introductory}:
\small
\begin{equation} \label{eq:AGD_J_upper_bound}
\begin{split}
    J_{t} & \leq 1+\sqrt{(k_{\mu}Nt)/\lambda+1}\log{\frac{(k_{\mu}+\frac{2\lambda}{N t})\lVert \theta_{t}^{(1)}-\hat{\theta}_{t}^{\text{MLE}} \rVert_{2}^{2}}{2\epsilon_{t}}} 
\end{split}
\end{equation}
\normalsize
where the superscript $(i)$ denotes the $i$-th iteration of AGD.

In order to minimize the number of communication rounds in one global update, AGD is chosen as the offline regression method for \algone{}, and its description is given in 
Algorithm \ref{alg:agd_update} (subscript $t$ is omitted for simplicity). 
However, other federated/distributed optimization methods can be readily used in place of AGD,
as our analysis only requires the convergence result of the adopted method.
We should note that $\epsilon_{t}$ is essential to the regret-communication trade-off during the global update at time $t$: a larger $\epsilon_{t}$ leads to a wider confidence ellipsoid, which increases regret, while a smaller $\epsilon_{t}$ requires more communication rounds $J_{t}$, which increases communication cost. In Section \ref{subsec:theorectical_analysis}, we will discuss the proper choice of $\epsilon_{t}$ to attain desired trade-off between the two conflicting objectives.

\begin{algorithm}[h]
    \caption{$\text{AGD-Update}$ } \label{alg:agd_update}
  \begin{algorithmic}[1]
    \STATE \textbf{Input} : initial $\theta$, number of inner iterations $J$
    \STATE \textbf{Initialization}: set $\theta^{(1)}=\vartheta^{(1)}=\theta$, and define the sequences $\{\upsilon_{j}:=\frac{1+\sqrt{1+4 \upsilon_{j-1}^{2}}}{2}\}_{j\in[J]}$ (with $\upsilon_{0}=0$), and $\{\gamma_{j}=\frac{1-\upsilon_{j}}{\upsilon_{j+1}}\}_{j\in[J]}$
    \FOR{$j=1,2,\dots,J$}
        \STATE \textbf{Clients} compute and send local gradient $\{\nabla F_{i}(\theta^{(j)})\}_{i \in [N]}$ to the server
        \STATE \textbf{Server} aggregates local gradients $\nabla F(\theta^{(j)})=\frac{1}{N}\sum_{i=1}^{N}\nabla F_{i}(\theta^{(j)})$, and execute the following update rule to get $\theta^{(j+1)}$:
        \begin{itemize}[nosep]
            \item $\vartheta^{(j+1)} = \theta^{(j)} - \frac{1}{k_{\mu}+\frac{\lambda}{N t}} \nabla F(\theta^{(j)})$
            \item $\theta^{(j+1)} = (1-\gamma_{j}) \vartheta^{(j+1)} + \gamma_{j} \vartheta^{(j)}$
        \end{itemize}
    \ENDFOR
    \STATE \textbf{Output:} $\argmin_{\theta \in \cB_{d}(S)} \lVert g_{t}(\theta^{(J+1)}) -g_{t}(\theta) \rVert_{A_{t}^{-1}}$
  \end{algorithmic}
\end{algorithm}

To reduce the total number of global updates over time horizon $T$, we adopt the event-triggered communication from \cite{wang2019distributed}, such that global update is triggered if the following event is true for any client $i \in [N]$ (line 8):
\begin{equation} \label{eq:event_trigger}
    (t-t_{\text{last}})\log{\frac{\det(A_{t,i})}{\det(A_{t,i}-\Delta A_{t,i})}} > D
\end{equation}
where $\Delta A_{t,i}$
denotes client $i$'s local update to its covariance matrix since last global update at $t_{\text{last}}$, and $D>0$ is the chosen threshold for the event-trigger. During the global update, the model estimation $\theta_{t,i}$, covariance matrix $A_{t,i}$ and vector $b_{t,i}$ for all clients $i \in [N]$ will be updated (line 11-14). 
We should note that the LHS of Eq.\eqref{eq:event_trigger} is essentially an upper bound of the cumulative regret that client $i$'s locally updated model has incurred since $t_{\text{last}}$. Therefore, this event-trigger guarantees that a global update only happens when effective regret reduction is possible.

\noindent\textbf{$\bullet$ Arm selection}
To balance exploration and exploitation during bandit learning, \algone{} uses the OFUL principle for arm selection \citep{abbasi2011improved}, which requires the construction of a confidence ellipsoid for each client $i$.
We propose a novel construction of the confidence ellipsoid based on the sequence of model updates that each client $i$ has received up to time $t$: basically, there are 1) one global update at $t_{\text{last}}$, i.e., the joint offline regression across all clients' accumulated data till $t_{\text{last}}$: $\{(\bx_{s,i}, y_{s,i})\}_{s \in [t_{\text{last}}], i \in [N]}$, which resets all clients' local models to $\theta_{t_{\text{last}}}$; and 2) multiple local updates from $t_{\text{last}}+1$ to $t$, i.e., the online regression on client $i$'s own data sequence $\{(\bx_{s,i}, y_{s,i})\}_{s \in [t_{\text{last}}+1,t]}$ to get $\{\theta_{s,i}\}_{s \in [t_{\text{last}}+1,t]}$ step by step.
This can be more easily understood by the illustration in Figure \ref{fig:fedglb_illustration}.
The resulting confidence ellipsoid is centered at the ridge regression estimator $\hat{\theta}_{t,i}=A_{t,i}^{-1} b_{t,i}$  \citep{abbasi2012online,jun2017scalable}, which is computed using the predicted rewards given by the past sequence of model updates $\{\theta_{t_{\text{last}}}\} \cup  \{\theta_{s,i}\}_{s \in [t_{\text{last}}+1,t]}$ (see the update of $b_{t,i}$ in line 9 and 13 of Algorithm \ref{algo:3}).
Then at time step $t$, client $i$ selects the arm that maximizes the UCB score:
\begin{equation}\label{eq:UCB}
    \bx_{t,i}=\argmax_{ \bx \in \cA_{t,i}}{ \bx^{\top}\hat{\theta}_{t-1,i}
    +\alpha_{t-1,i}||\bx||_{A_{t-1,i}^{-1}}}
\end{equation}
where $\alpha_{t-1,i}$ is the parameter of the confidence ellipsoid given in Lemma \ref{lem:confidence_ellipsoid_fedglb}. 
Note that compared with standard federated/distributed learning where clients only need to communicate gradients for joint model estimation, in our problem, due to the time-varying arm set, it is also necessary to communicate the confidence ellipsoid among clients, i.e., $A_{t} \in \bR^{d \times d}$ and $b_{t} \in \bR^{d}$ (line 14 in Algorithm \ref{algo:3}), as the clients need to be prepared for all possible arms $\bx \in \bR^{d}$ that may appear in future for the sake of regret minimization. 

\subsection{Theoretical Analysis} \label{subsec:theorectical_analysis}
In this section, we construct the confidence ellipsoid based on the \emph{offline-and-online} estimators described in Section \ref{subsec:algo_description}. Then we analyze the cumulative regret and communication cost of \algone{}, followed by theoretical comparisons with its different variants.

\noindent\textbf{$\bullet$ Construction of confidence ellipsoid}
Compared with prior works that convert a sequence of online regression estimators to confidence ellipsoid \citep{abbasi2012online,jun2017scalable}, our confidence ellipsoid is built on the combination of an offline regression estimator $\theta_{t_{\text{last}}}$ for global update, and the subsequent online regression estimators $\{\theta_{s-1,i}\}_{s \in [t_{\text{last}}+1,t]}$ for local updates on each client $i$. This construction is new and requires proof techniques unique to our proposed solution. In the following, we highlight the key steps, and refer our readers to the appendix for details.

To simplify the use of notations, we assume without loss of generality that the global update at $t_{\text{last}}$ is triggered by the $N$-th client, such that 
no more new data will be collected at $t_\text{last}$, i.e., the first data sample obtained after the global update has index $t_{\text{last}}+1$.
We start our construction by considering the following loss difference introduced by the global and local model updates: $\sum_{s=1}^{t_{\text{last}}} \sum_{i=1}^{N} \bigl[ l(\bx_{s,i}^{\top} \theta_{t_{\text{last}}}, y_{s,i}) - l(\bx_{s,i}^{\top} \theta_{\star}, y_{s,i}) \bigr] + \sum_{s=t_{\text{last}}+1}^{t} \bigl[ l(\bx_{s,i}^{\top} \theta_{s-1,i}, y_{s,i}) - l(\bx_{s,i}^{\top} \theta_{\star}, y_{s,i}) \bigr]$,
where the first term is the loss difference between the globally updated model $\theta_{t_{\text{last}}}$ and $\theta_{\star}$,
and the second term is between the sequence of locally updated models $\{\theta_{s-1,i}\}_{s \in [t_{\text{last}}+1,t]}$ and $\theta_{\star}$.
This extends the definition of online regret used in the construction in \citep{abbasi2012online,jun2017scalable}; and due to the existence of offline regression, the obtained upper bounds in Lemma \ref{lem:loss_diff} are unique to our solution.


\begin{lemma}[Upper Bound of Loss Difference] \label{lem:loss_diff}
Denote the sub-optimality of the global model update procedure at time step $t_{\text{last}}$ as $\epsilon_{t_{\text{last}}}$, such that $F_{t_{\text{last}}}(\theta)-\min_{\theta \in \cB_{d}(S)} F_{t_{\text{last}}}(\theta) \leq \epsilon_{t_{\text{last}}}$. Then under Assumption \ref{assump:1} and \ref{assump:2}, we have
\begin{equation} \label{eq:loss_diff_1}
    \sum_{s=1}^{t_{\text{last}}} \sum_{i=1}^{N} \bigl[ l(\bx_{s,i}^{\top} \theta_{t_{\text{last}}}, y_{s,i}) - l(\bx_{s,i}^{\top} \theta_{\star}, y_{s,i}) \bigr] \leq B_{1}
\end{equation}
where $B_{1}=N t_{\text{last}} \epsilon_{t_{\text{last}}} + \frac{\lambda}{2} S^{2}$, and with probability at least $1-\delta$,
\begin{equation}\label{eq:loss_diff_2}
    \sum_{s=t_{\text{last}}+1}^{t} \bigl[ l(\bx_{s,i}^{\top} \theta_{s-1,i}, y_{s,i}) - l(\bx_{s,i}^{\top} \theta_{\star}, y_{s,i}) \bigr] \leq B_{2}
\end{equation}
where 
\small
$B_{2}=\frac{1}{2 c_{\mu}} \sum_{s=t_{\text{last}}+1}^{t} \lVert \nabla l(\bx_{s,i}^{\top} \theta_{s-1,i}, y_{s,i}) \rVert^{2}_{A_{s,i}^{-1}}  + \frac{c_{\mu}}{2} \Big[ \frac{1}{c_{\mu}} R_{\max} \sqrt{d\log{(1+{N t_{\text{last}} c_{\mu}}/{d \lambda})}+2\log{({1}/{\delta})}} + 2 N t_{\text{last}} \sqrt{\frac{2 k_{\mu} }{\lambda c_{\mu}} + \frac{2}{ N t_{\text{last}} c_{\mu}}} \sqrt{\epsilon_{t_{\text{last}}}}  + \sqrt{\frac{\lambda}{c_{\mu}}}S  \Big]^{2}$,
\normalsize
respectively.
\end{lemma}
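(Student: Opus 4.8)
The statement splits into two separate bounds, and my plan is to treat Eq.~\eqref{eq:loss_diff_1} (the offline/global contribution) by a direct optimality argument, and Eq.~\eqref{eq:loss_diff_2} (the online/local contribution) by an online-Newton-step regret telescoping whose leading ``initialization'' term is then controlled by a dedicated generalized-linear-model estimation analysis.

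For Eq.~\eqref{eq:loss_diff_1} I would simply exploit that $\theta_{\star}$ is a feasible comparator. Since $\lVert\theta_{\star}\rVert\le S$ we have $\theta_{\star}\in\cB_{d}(S)$, so $\min_{\theta\in\cB_{d}(S)}F_{t_{\text{last}}}(\theta)\le F_{t_{\text{last}}}(\theta_{\star})$, and the sub-optimality hypothesis gives $F_{t_{\text{last}}}(\theta_{t_{\text{last}}})-F_{t_{\text{last}}}(\theta_{\star})\le\epsilon_{t_{\text{last}}}$. Multiplying through by $Nt_{\text{last}}$ converts the averaged objective $F_{t_{\text{last}}}$ into the plain sum of per-sample losses plus the regularizer difference $\tfrac{\lambda}{2}(\lVert\theta_{t_{\text{last}}}\rVert^2-\lVert\theta_{\star}\rVert^2)$; dropping the nonpositive $-\tfrac\lambda2\lVert\theta_{t_{\text{last}}}\rVert^2$ and using $\lVert\theta_{\star}\rVert^2\le S^2$ yields $B_{1}=Nt_{\text{last}}\epsilon_{t_{\text{last}}}+\tfrac\lambda2 S^2$.

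For Eq.~\eqref{eq:loss_diff_2} I would run the standard exp-concave ONS argument on client $i$'s stream from $t_{\text{last}}+1$ to $t$ (writing $\nabla l_{s,i}$ for $\nabla l(\bx_{s,i}^\top\theta_{s-1,i},y_{s,i})$). A second-order Taylor expansion of $l(\bx_{s,i}^\top\cdot,y_{s,i})$ together with $\dot\mu\ge c_\mu$ on $[-S,S]$ (Assumption~\ref{assump:1}) gives $l(\bx_{s,i}^\top\theta_{s-1,i},y_{s,i})-l(\bx_{s,i}^\top\theta_{\star},y_{s,i})\le \nabla l_{s,i}^\top(\theta_{s-1,i}-\theta_{\star})-\tfrac{c_\mu}{2}(\bx_{s,i}^\top(\theta_{s-1,i}-\theta_{\star}))^2$. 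Then, using the ONS update with step $1/c_\mu$ and the fact that the projection in Algorithm~\ref{alg:ons_update} is nonexpansive in the $A_{s,i}$-norm (with $\theta_{\star}\in\cB_{d}(S)$), I would bound $\nabla l_{s,i}^\top(\theta_{s-1,i}-\theta_{\star})\le\tfrac{c_\mu}{2}(\lVert\theta_{s-1,i}-\theta_{\star}\rVert_{A_{s,i}}^2-\lVert\theta_{s,i}-\theta_{\star}\rVert_{A_{s,i}}^2)+\tfrac1{2c_\mu}\lVert\nabla l_{s,i}\rVert_{A_{s,i}^{-1}}^2$. Substituting, using $A_{s,i}=A_{s-1,i}+\bx_{s,i}\bx_{s,i}^\top$ to absorb the quadratic penalty into the telescope, and summing over $s$ leaves exactly the gradient sum $\tfrac1{2c_\mu}\sum_s\lVert\nabla l_{s,i}\rVert_{A_{s,i}^{-1}}^2$ plus the boundary term $\tfrac{c_\mu}{2}\lVert\theta_{t_{\text{last}}}-\theta_{\star}\rVert_{A_{t_{\text{last}}}}^2$, all other telescoped terms being nonpositive.

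The crux, and the step I expect to be hardest, is bounding the initialization term $\lVert\theta_{t_{\text{last}}}-\theta_{\star}\rVert_{A_{t_{\text{last}}}}$ by the three-term bracket in $B_{2}$. Unlike the usual online-to-confidence-set conversion, here the online regression is seeded by the \emph{inexact offline} estimator $\theta_{t_{\text{last}}}$, so this boundary term is a genuine GLM estimation error rather than a fixed constant. I would introduce the map $g_{t_{\text{last}}}(\theta)=\sum_{i,s\le t_{\text{last}}}\mu(\bx_{s,i}^\top\theta)\bx_{s,i}+\lambda\theta$ and first show, via the mean value theorem and $\dot\mu\ge c_\mu$, that $c_\mu A_{t_{\text{last}}}\preceq$ the Jacobian of $g_{t_{\text{last}}}$ (this is why $A_{0,i}$ is initialized at $\tfrac{\lambda}{c_\mu}I$), which yields $\lVert\theta_{t_{\text{last}}}-\theta_{\star}\rVert_{A_{t_{\text{last}}}}\le\tfrac1{c_\mu}\lVert g_{t_{\text{last}}}(\theta_{t_{\text{last}}})-g_{t_{\text{last}}}(\theta_{\star})\rVert_{A_{t_{\text{last}}}^{-1}}$. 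Splitting the right side at the exact minimizer $\hat\theta^{\text{MLE}}_{t_{\text{last}}}$ by the triangle inequality separates an optimization error from a statistical error. For the optimization error I would use $g_{t_{\text{last}}}(\theta)-g_{t_{\text{last}}}(\hat\theta^{\text{MLE}}_{t_{\text{last}}})=Nt_{\text{last}}\nabla F_{t_{\text{last}}}(\theta)$, bound $\lVert\nabla F_{t_{\text{last}}}\rVert_2^2\le 2(k_\mu+\tfrac{\lambda}{Nt_{\text{last}}})\epsilon_{t_{\text{last}}}$ by smoothness, pass to the $A_{t_{\text{last}}}^{-1}$-norm using $A_{t_{\text{last}}}\succeq\tfrac{\lambda}{c_\mu}I$, and pick up an extra factor of $2$ from the $g$-space projection in Algorithm~\ref{alg:agd_update}; this produces the $\sqrt{\epsilon_{t_{\text{last}}}}$ (second) summand. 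For the statistical error, the first-order optimality of $\hat\theta^{\text{MLE}}_{t_{\text{last}}}$ and $y_{s,i}=\mu(\bx_{s,i}^\top\theta_{\star})+\eta_{s,i}$ give $g_{t_{\text{last}}}(\hat\theta^{\text{MLE}}_{t_{\text{last}}})-g_{t_{\text{last}}}(\theta_{\star})=\sum_{i,s}\eta_{s,i}\bx_{s,i}-\lambda\theta_{\star}$, whose two pieces yield the $R_{\max}$ (first) summand via the self-normalized martingale bound with the estimate $\det A_{t_{\text{last}}}/\det(\tfrac{\lambda}{c_\mu}I)\le(1+Nt_{\text{last}}c_\mu/(d\lambda))^d$, and the $S$ (third) summand via the regularization bias $\lambda\lVert\theta_{\star}\rVert_{A_{t_{\text{last}}}^{-1}}\le\sqrt{\lambda c_\mu}S$. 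The delicate point throughout is that the design $\bx_{s,i}$ is chosen adaptively from data-dependent models, so I must fix a single filtration respecting the round-robin order across all $N$ clients up to $t_{\text{last}}$ and verify that $\eta_{s,i}$ is a martingale difference with $\bx_{s,i}$ predictable before invoking the self-normalized bound; this is precisely the dependency issue flagged in the introduction.
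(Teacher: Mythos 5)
Your proposal is correct and follows essentially the same route as the paper's proof: the comparator/optimality argument for Eq.~\eqref{eq:loss_diff_1}, the exp-concave ONS telescoping with the $A_{s,i}=A_{s-1,i}+\bx_{s,i}\bx_{s,i}^{\top}$ absorption for Eq.~\eqref{eq:loss_diff_2}, and the boundary term $\frac{c_{\mu}}{2}\lVert \theta_{t_{\text{last}}}-\theta_{\star}\rVert_{A_{t_{\text{last}}}}^{2}$ controlled exactly as in the paper's Lemma~\ref{lem:global_model_confidence_ellipsoid} (the map $g_{t_{\text{last}}}$, triangle inequality at $\hat{\theta}^{\text{MLE}}_{t_{\text{last}}}$, smoothness for the $\sqrt{\epsilon_{t_{\text{last}}}}$ term with the factor of $2$ from the projection, and the self-normalized bound plus regularization bias for the remaining terms). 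The only cosmetic difference is that the paper isolates the boundary-term bound as a standalone lemma and routes the $B_{1}$ bound through $\hat{\theta}^{\text{MLE}}_{t_{\text{last}}}$ rather than comparing directly to $\theta_{\star}$, which is equivalent.
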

Specifically, $B_{1}$ corresponds to the convergence of the offline (distributed) optimization in previous global update; $B_{2}$ is essentially the online regret upper bound of ONS, with the major difference that it is initialized using the globally updated model $\theta_{t_\text{last}}$, instead of an arbitrary model as in standard ONS. 
Then due to the $c_{\mu}$-strongly-convexity of $l(z,y)$ w.r.t. $z$, i.e., $l(\bx_{s}^{\top}\theta,y_{s}) - l(\bx_{s}^{\top}\theta_{\star},y_{s}) \geq \bigl[ \mu(\bx_{s}^{\top}\theta_{\star})-y_{s} \bigr]\bx_{s}^{\top}(\theta-\theta_{\star}) + \frac{c_{\mu}}{2}
\bigl[\bx_{s}^{\top}(\theta-\theta_{\star})\bigr]^{2}$,  
and by rearranging terms in Eq.\eqref{eq:loss_diff_1} and Eq.\eqref{eq:loss_diff_2}, we have: $\sum_{s=1}^{t_{\text{last}}} \sum_{i=1}^{N} \bigl[\bx_{s,i}^{\top}(\theta_{t_{\text{last}}}-\theta_{\star})\bigr]^{2} \leq  \frac{2}{c_{\mu}} B_{1} +  \frac{2}{c_{\mu}}\sum_{s=1}^{t_{\text{last}}} \sum_{i=1}^{N} \eta_{s,i} \bx_{s,i}^{\top}(\theta_{t_{\text{last}}}-\theta_{\star})$, and $\sum_{s=t_{\text{last}}+1}^{t} \bigl[\bx_{s,i}^{\top}(\theta_{s-1,i}-\theta_{\star})\bigr]^{2} \leq \frac{2}{c_{\mu}} B_{2} + \frac{2}{c_{\mu}}\sum_{s=t_{\text{last}}+1}^{t} \eta_{s,i} \bx_{s,i}^{\top}(\theta_{s-1,i}-\theta_{\star})$, whose LHS is quadratic in $\theta_{\star}$.
To further upper bound the RHS, we should note that the term $\frac{2}{c_{\mu}}\sum_{s=t_{\text{last}}+1}^{t} \eta_{s,i} \bx_{s,i}^{\top}(\theta_{s-1,i}-\theta_{\star})$ is standard in \citep{abbasi2012online,jun2017scalable} as $\bx_{s,i}^{\top}(\theta_{s-1,i}-\theta_{\star})$ is $\cF_{s,i}$-measurable for online estimator $\theta_{s-1,i}$. However, this is not true for the term $\frac{2}{c_{\mu}}\sum_{s=1}^{t_{\text{last}}} \sum_{i=1}^{N} \eta_{s,i} \bx_{s,i}^{\top}(\theta_{t_{\text{last}}}-\theta_{\star})$ as the offline regression estimator $\theta_{t_{\text{last}}}$ depends on all data samples collected till $t_{\text{last}}$; and thus we have to develop a different approach to bound it.
This leads to Lemma \ref{lem:confidence_ellipsoid_fedglb}
below, which provides the confidence ellipsoid for $\theta_{\star}$. 
\begin{lemma}[Confidence Ellipsoid of \algone{}] \label{lem:confidence_ellipsoid_fedglb}
With probability at least $1-2\delta$, for all $t \in [T], i \in [N]$,
\begin{align*}
    \lVert \hat{\theta}_{t,i}- \theta_{\star} \rVert_{A_{t,i}}^{2} & \leq \beta_{t,i} + \frac{\lambda}{c_{\mu}} S^{2} - \lVert \textbf{z}_{t,i} \rVert_{2}^{2} + \hat{\theta}_{t,i}^{\top} b_{t,i} 
    := \alpha_{t,i}^{2}
\end{align*}
where
$\textbf{z}_{t,i}$ denotes the vector of predicted rewards $[\bx_{1,1}^{\top}\theta_{t_{\text{last}}}, \bx_{1,2}^{\top}\theta_{t_{\text{last}}},\dots , \bx_{t_{\text{last}},N-1}^{\top} \theta_{t_{\text{last}}},\bx_{t_{\text{last}},N}^{\top} \theta_{t_{\text{last}}}$, $\bx_{t_{\text{last}}+1,i}^{\top} \theta_{t_{\text{last}},i}, \bx_{t_{\text{last}}+2,i}^{\top} \theta_{t_{\text{last}}+1,i},\dots , \bx_{t,i}^{\top} \theta_{t-1,i} ]^{\top}$,
and
\small
$\beta_{t,i} =  \frac{8R_{\max}^{2}}{c_{\mu}^{2}} \log\bigl(\frac{1}{\delta} \sqrt{\det(I + \sum_{s=1}^{t_{\text{last}}} \sum_{i=1}^{N} \bx_{s,i}\bx_{s,i}^{\top})} \bigr) + B_{1} + \frac{4R_{\max}}{c_{\mu}} \sqrt{
    2 \log\bigl(\frac{1}{\delta} \sqrt{\det(I + \sum_{s=1}^{t_{\text{last}}} \sum_{i=1}^{N} \bx_{s,i}\bx_{s,i}^{\top})} \bigr)} \Big( \lVert \theta_{t_{\text{last}}} \rVert_{2}+ \lVert\theta_{\star} \rVert_{2} + \sqrt{B_{1}} \Big) + \frac{4 B_{2}}{c_{\mu}} +  \frac{8 R_{\max}^{2}}{c_{\mu}^{2}} \log\Big( \frac{N}{\delta} \sqrt{4+\frac{8}{c_{\mu}}B_{2} + \frac{64 R_{\max}^{4}}{c_{\mu}^{4}\cdot 4 \delta^{2}}}  \Big)  + 1$.
\normalsize
\end{lemma}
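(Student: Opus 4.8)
The plan is to run an online-to-confidence-set conversion in the spirit of \citep{abbasi2012online,jun2017scalable}, but adapted to the mixed \emph{offline-and-online} estimator. First I would stack the relevant context vectors into a design matrix $\Phi_{t,i}$ whose rows are $\bx_{1,1}^\top,\dots,\bx_{t_{\text{last}},N}^\top,\bx_{t_{\text{last}}+1,i}^\top,\dots,\bx_{t,i}^\top$, so that $\textbf{z}_{t,i}$ collects exactly the predicted rewards produced by the update sequence $\{\theta_{t_{\text{last}}}\}\cup\{\theta_{s-1,i}\}$, with $A_{t,i}=\tfrac{\lambda}{c_\mu}\bI+\Phi_{t,i}^\top\Phi_{t,i}$ and $b_{t,i}=\Phi_{t,i}^\top\textbf{z}_{t,i}$. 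Then $\hat\theta_{t,i}=A_{t,i}^{-1}b_{t,i}$ is the exact minimizer of the ridge objective $L(\theta)=\lVert\Phi_{t,i}\theta-\textbf{z}_{t,i}\rVert_2^2+\tfrac{\lambda}{c_\mu}\lVert\theta\rVert_2^2$, whose Hessian is $2A_{t,i}$. The exact second-order identity $L(\theta_\star)-L(\hat\theta_{t,i})=\lVert\hat\theta_{t,i}-\theta_\star\rVert_{A_{t,i}}^2$, together with $L(\hat\theta_{t,i})=\lVert\textbf{z}_{t,i}\rVert_2^2-\hat\theta_{t,i}^\top b_{t,i}$ (using $A_{t,i}\hat\theta_{t,i}=b_{t,i}$), yields
\begin{equation*}
\lVert\hat\theta_{t,i}-\theta_\star\rVert_{A_{t,i}}^2=\lVert\Phi_{t,i}\theta_\star-\textbf{z}_{t,i}\rVert_2^2+\tfrac{\lambda}{c_\mu}\lVert\theta_\star\rVert_2^2-\lVert\textbf{z}_{t,i}\rVert_2^2+\hat\theta_{t,i}^\top b_{t,i}.
\end{equation*}
Since $\tfrac{\lambda}{c_\mu}\lVert\theta_\star\rVert_2^2\le\tfrac{\lambda}{c_\mu}S^2$, the whole lemma reduces to proving $\lVert\Phi_{t,i}\theta_\star-\textbf{z}_{t,i}\rVert_2^2\le\beta_{t,i}$.

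Next I would observe that this squared residual splits \emph{exactly} into an offline block $\sum_{s=1}^{t_{\text{last}}}\sum_{j=1}^{N}[\bx_{s,j}^\top(\theta_{t_{\text{last}}}-\theta_\star)]^2$ and an online block $\sum_{s=t_{\text{last}}+1}^{t}[\bx_{s,i}^\top(\theta_{s-1,i}-\theta_\star)]^2$, which are precisely the two quantities already isolated just before the statement via $c_\mu$-strong convexity of $l(\cdot,y)$. Invoking those two rearranged inequalities replaces each block by a deterministic loss-difference bound ($B_1$, $B_2$ from Lemma \ref{lem:loss_diff}) plus a noise cross-term of the form $\tfrac{2}{c_\mu}\sum\eta\,\bx^\top(\cdot-\theta_\star)$. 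What remains is to control the two noise sums and fold their bounds into $\beta_{t,i}$.

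The online noise sum $\sum_{s=t_{\text{last}}+1}^{t}\eta_{s,i}\bx_{s,i}^\top(\theta_{s-1,i}-\theta_\star)$ is the easy one: each summand is $\cF_{s,i}$-measurable because $\theta_{s-1,i}$ and $\bx_{s,i}$ are fixed before $\eta_{s,i}$ is drawn, so it is a martingale whose self-normalized (method-of-mixtures) bound is controlled by its own predictable quadratic variation, i.e. the online block itself; substituting the block's upper bound and taking a union bound over the $N$ clients produces the trailing $\tfrac{8R_{\max}^2}{c_\mu^2}\log\bigl(\tfrac{N}{\delta}\sqrt{4+\tfrac{8}{c_\mu}B_2+\cdots}\bigr)$ term. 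The hard part is the offline noise sum $\sum_{s=1}^{t_{\text{last}}}\sum_{j=1}^{N}\eta_{s,j}\bx_{s,j}^\top(\theta_{t_{\text{last}}}-\theta_\star)$: here the joint offline estimator $\theta_{t_{\text{last}}}$ depends on \emph{all} of $\{\eta_{s,j}\}_{s\le t_{\text{last}}}$, so the summands are not predictable and the martingale argument breaks. The decoupling idea is to pull out the common factor and write the sum as $M^\top(\theta_{t_{\text{last}}}-\theta_\star)$ with $M:=\sum_{s=1}^{t_{\text{last}}}\sum_{j=1}^{N}\eta_{s,j}\bx_{s,j}$, then apply Cauchy--Schwarz in the norm $V:=\bI+\sum_{s=1}^{t_{\text{last}}}\sum_{j=1}^{N}\bx_{s,j}\bx_{s,j}^\top$ to get $M^\top(\theta_{t_{\text{last}}}-\theta_\star)\le\lVert M\rVert_{V^{-1}}\lVert\theta_{t_{\text{last}}}-\theta_\star\rVert_{V}$. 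Now $M$ \emph{alone} is a martingale, so the standard self-normalized inequality gives $\lVert M\rVert_{V^{-1}}^2\le 2R_{\max}^2\log\bigl(\det(V)^{1/2}/\delta\bigr)$ (explaining the $\det(\bI+\sum\sum\bx\bx^\top)$ factor), while $\lVert\theta_{t_{\text{last}}}-\theta_\star\rVert_V^2$ expands into the offline block plus $\lVert\theta_{t_{\text{last}}}-\theta_\star\rVert_2^2\le(\lVert\theta_{t_{\text{last}}}\rVert_2+\lVert\theta_\star\rVert_2)^2$. This is self-referential — the offline block sits on both sides — so substituting back into its rearranged inequality produces a quadratic inequality in $\sqrt{\text{offline block}}$; solving it and using $\sqrt{u^2+p}\le u+\sqrt p$ yields exactly the $\tfrac{8R_{\max}^2}{c_\mu^2}\log(\cdots)$ term, the $B_1$ term, and the cross term $\tfrac{4R_{\max}}{c_\mu}\sqrt{2\log(\cdots)}\bigl(\lVert\theta_{t_{\text{last}}}\rVert_2+\lVert\theta_\star\rVert_2+\sqrt{B_1}\bigr)$ in $\beta_{t,i}$.

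Finally I would reassemble the offline and online block bounds (each carrying the $\tfrac{2}{c_\mu}$ factor from strong convexity), absorb the additive slack from the $\bI$ inside the determinants into the trailing $+1$, and settle the probability budget: the offline and online self-normalized concentrations each fail with probability at most $\delta$, so a union bound, combined with the high-probability content of Lemma \ref{lem:loss_diff}, delivers the claimed $1-2\delta$ guarantee, holding uniformly over all $t\in[T]$ and $i\in[N]$ by the time-uniform, union-over-clients form of the self-normalized inequalities. I expect the offline-noise step to be the genuine obstacle, since it is exactly where the dependence of $\theta_{t_{\text{last}}}$ on the past noise forces the Cauchy--Schwarz-plus-self-normalized decoupling and the subsequent quadratic-inequality resolution, neither of which appears in the purely online conversions of \citep{abbasi2012online,jun2017scalable}.
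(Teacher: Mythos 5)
Your proposal is correct and follows essentially the same route as the paper's proof: the same strong-convexity decomposition into the offline and online blocks from Lemma~\ref{lem:loss_diff}, the same Cauchy--Schwarz-plus-self-normalized decoupling (Lemma~\ref{lem:matrix_weighted_cauchy_schwartz} with Lemma~\ref{lem:self_normalized_bound}) for the offline noise term, the same quadratic-inequality resolutions for both blocks, and the same ridge-regression algebra yielding the stated ellipsoid. The only point the paper treats more explicitly is that the self-normalized bounds must be applied with respect to a \emph{batched} filtration (the event-triggered communication makes a data point's observed index depend on future data), whereas you invoke the standard time-uniform form; the core argument is otherwise identical.
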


\noindent\textbf{$\bullet$ Regret and communication cost}
From Lemma \ref{lem:confidence_ellipsoid_fedglb}, we can see that $\alpha_{t,i}$ grows at a rate of $Nt_{\text{last}}\sqrt{\epsilon_{t_{\text{last}}}}$ through its dependence on the $B_{2}$ term. To make sure the growth rate of $\alpha_{t,i}$ matches that in standard GLB algorithms \citep{li2017provably,jun2017scalable}, we set $\epsilon_{t_{\text{last}}}=\frac{1}{N^{2} t_{\text{last}}^{2}}$, which leads to the following corollary.
\begin{corollary}[Order of $\beta_{t,i}$] \label{corollary:order_ellipsoid}
With $\epsilon_{t_{\text{last}}} = \frac{1}{N^{2}t_{\text{last}}^{2}}$, $\beta_{t,i}=O\big(\frac{d\log{NT}}{c_{\mu}^{2}}[k_{\mu}^{2}+R_{\max}^{2}]\big)$.
\end{corollary}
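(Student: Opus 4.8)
The plan is to substitute $\epsilon_{t_{\text{last}}}=\frac{1}{N^{2}t_{\text{last}}^{2}}$ into the expression for $\beta_{t,i}$ from Lemma \ref{lem:confidence_ellipsoid_fedglb} and bound it term by term, showing that the summand $\frac{4B_{2}}{c_{\mu}}$ attains the claimed rate $O\big(\frac{d\log NT}{c_{\mu}^{2}}[k_{\mu}^{2}+R_{\max}^{2}]\big)$ and dominates all the others. First I would record two preliminary estimates that feed into every term. (i) Since $A_{0,i}=\frac{\lambda}{c_{\mu}}I$ and $\|\bx_{s,i}\|_{2}\le 1$, the standard log-determinant bound gives $\log\det\big(I+\sum_{s=1}^{t_{\text{last}}}\sum_{i=1}^{N}\bx_{s,i}\bx_{s,i}^{\top}\big)=O(d\log(Nt_{\text{last}}))=O(d\log NT)$. (ii) With the chosen $\epsilon_{t_{\text{last}}}$ the offline-error factor collapses: $Nt_{\text{last}}\sqrt{\epsilon_{t_{\text{last}}}}=Nt_{\text{last}}\cdot\frac{1}{Nt_{\text{last}}}=1$ and $Nt_{\text{last}}\epsilon_{t_{\text{last}}}=\frac{1}{Nt_{\text{last}}}\le 1$. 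Estimate (ii) is the design crux, as it prevents the offline sub-optimality from inflating the ellipsoid as $t$ grows, and it immediately yields $B_{1}=Nt_{\text{last}}\epsilon_{t_{\text{last}}}+\frac{\lambda}{2}S^{2}=O(1)$.

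The main work is bounding $B_{2}$, which I would split into its gradient-sum term and its bracketed term. For the gradient sum, note $\nabla l(\bx_{s,i}^{\top}\theta_{s-1,i},y_{s,i})=\big(\mu(\bx_{s,i}^{\top}\theta_{s-1,i})-y_{s,i}\big)\bx_{s,i}$, so that $\|\nabla l\|_{A_{s,i}^{-1}}^{2}=\big(\mu(\bx_{s,i}^{\top}\theta_{s-1,i})-y_{s,i}\big)^{2}\,\|\bx_{s,i}\|_{A_{s,i}^{-1}}^{2}$. Writing $y_{s,i}=\mu(\bx_{s,i}^{\top}\theta_{\star})+\eta_{s,i}$, using the $k_{\mu}$-Lipschitzness of $\mu$ on $[-S,S]$ with $\|\theta_{s-1,i}\|,\|\theta_{\star}\|\le S$ (the ONS projection guarantees the former) and $\|\bx_{s,i}\|\le 1$, together with $|\eta_{s,i}|\le R_{\max}$, gives $\big(\mu(\bx_{s,i}^{\top}\theta_{s-1,i})-y_{s,i}\big)^{2}\le 2(2k_{\mu}S+R_{\max})^{2}/2=O(k_{\mu}^{2}+R_{\max}^{2})$. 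The elliptical-potential lemma then gives $\sum_{s=t_{\text{last}}+1}^{t}\|\bx_{s,i}\|_{A_{s,i}^{-1}}^{2}=O(d\log NT)$, so the gradient-sum term is $O\big(\frac{(k_{\mu}^{2}+R_{\max}^{2})d\log NT}{c_{\mu}}\big)$. For the bracketed term, estimate (ii) turns the middle summand $2Nt_{\text{last}}\sqrt{\tfrac{2k_{\mu}}{\lambda c_{\mu}}+\tfrac{2}{Nt_{\text{last}}c_{\mu}}}\sqrt{\epsilon_{t_{\text{last}}}}$ into $2\sqrt{\tfrac{2k_{\mu}}{\lambda c_{\mu}}+\tfrac{2}{Nt_{\text{last}}c_{\mu}}}=O(\sqrt{k_{\mu}/c_{\mu}})$, while the first summand is $O\big(\frac{R_{\max}}{c_{\mu}}\sqrt{d\log NT}\big)$ and the last is $O(1/\sqrt{c_{\mu}})$; squaring and multiplying by $\frac{c_{\mu}}{2}$ gives $O\big(\frac{R_{\max}^{2}d\log NT}{c_{\mu}}+k_{\mu}\big)$, which (using $c_{\mu}\le k_{\mu}$ and $d\ge 1$) is dominated by the gradient-sum term. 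Hence $B_{2}=O\big(\frac{(k_{\mu}^{2}+R_{\max}^{2})d\log NT}{c_{\mu}}\big)$ and $\frac{4B_{2}}{c_{\mu}}=O\big(\frac{(k_{\mu}^{2}+R_{\max}^{2})d\log NT}{c_{\mu}^{2}}\big)$, the claimed order.

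It remains to check the other terms of $\beta_{t,i}$ are no larger. By estimate (i), the first term $\frac{8R_{\max}^{2}}{c_{\mu}^{2}}\log\big(\tfrac{1}{\delta}\sqrt{\det(\cdots)}\big)=O\big(\frac{R_{\max}^{2}d\log NT}{c_{\mu}^{2}}\big)$. The cross term $\frac{4R_{\max}}{c_{\mu}}\sqrt{2\log(\cdots)}\big(\|\theta_{t_{\text{last}}}\|+\|\theta_{\star}\|+\sqrt{B_{1}}\big)$ has $\sqrt{\log(\cdots)}=O(\sqrt{d\log NT})$ and a bounded second factor $O(S+\sqrt{B_{1}})=O(1)$, so by AM-GM it is $O\big(\frac{R_{\max}^{2}d\log NT}{c_{\mu}^{2}}+1\big)$ and thus absorbed. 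For the last logarithmic term, the argument $\tfrac{N}{\delta}\sqrt{4+\tfrac{8}{c_{\mu}}B_{2}+\tfrac{16R_{\max}^{4}}{c_{\mu}^{4}\delta^{2}}}$ is polynomial in $N$, $T$, $1/\delta$ and the problem constants, so its logarithm is $O(\log NT)$, contributing $O\big(\frac{R_{\max}^{2}\log NT}{c_{\mu}^{2}}\big)$; and $B_{1}+1=O(1)$. Summing the dominant $\frac{4B_{2}}{c_{\mu}}$ with these lower-order contributions yields the corollary.

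The main obstacle I anticipate is the $B_{2}$ bound, and within it the coupling between the offline sub-optimality $\epsilon_{t_{\text{last}}}$ and the $Nt_{\text{last}}$ prefactor: a naive choice of $\epsilon_{t_{\text{last}}}$ would make the bracketed term grow with $t$ and destroy the target regret rate, so the verification $Nt_{\text{last}}\sqrt{\epsilon_{t_{\text{last}}}}=O(1)$ under $\epsilon_{t_{\text{last}}}=\frac{1}{N^{2}t_{\text{last}}^{2}}$ is the conceptual heart of the argument. The remaining subtlety is purely bookkeeping: confirming via AM-GM and the log-determinant lemma that no cross term or logarithmic factor exceeds $\frac{d\log NT}{c_{\mu}^{2}}[k_{\mu}^{2}+R_{\max}^{2}]$, for which one treats $\lambda$, $S$ and $\delta$ as constants and repeatedly uses $c_{\mu}\le k_{\mu}$ and $d\ge 1$.
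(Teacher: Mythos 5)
Your proposal is correct and follows essentially the same route as the paper's proof: substitute $\epsilon_{t_{\text{last}}}=\frac{1}{N^{2}t_{\text{last}}^{2}}$ to get $B_{1}=O(1)$ and collapse the $Nt_{\text{last}}\sqrt{\epsilon_{t_{\text{last}}}}$ factor to $1$, bound the squared prediction error $\bigl[\mu(\bx_{s,i}^{\top}\theta_{s-1,i})-y_{s,i}\bigr]^{2}=O(k_{\mu}^{2}+R_{\max}^{2})$ via Lipschitzness and bounded noise, apply the elliptical potential lemma to get $B_{2}=O\bigl(\frac{d\log NT}{c_{\mu}}[k_{\mu}^{2}+R_{\max}^{2}]\bigr)$, and verify the remaining terms of $\beta_{t,i}$ are of the same or lower order. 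Your treatment of the cross term via AM-GM and of the final logarithmic term is actually more explicit than the paper's, but the underlying argument is identical.
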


Then using a similar argument as the proof for Theorem 4 of \citep{wang2019distributed}, we obtain the following upper bounds on $R_{T}$ and $C_{T}$ for \algone{} (proof in Appendix \ref{sec:prove_regret_comm}).
\begin{theorem}[Regret and Communication Cost Upper Bound of \algone{}] \label{thm:regret_comm_upper_bound}
Under Assumption \ref{assump:1}, \ref{assump:2}, and by setting $\epsilon_{t} = \frac{1}{N^{2}t^{2}},\forall t$ and $D=\frac{T}{Nd \log(NT)}$, the cumulative regret $R_{T}$ has upper bound 
\begin{align*}
    R_{T}=O\left(\frac{k_{\mu}(k_{\mu}+R_{\max})}{c_{\mu}}d\sqrt{NT}\log(NT/\delta)\right),
\end{align*}
with probability at least $1-2\delta$. The corresponding communication cost $C_{T}$\footnote{This is measured by the \emph{total number of times} data is transferred. Some works \cite{wang2019distributed} measure $C_{T}$ by the \emph{total number of scalars} transferred, in which case, we have $C_{T}=O\Big(d^{3} N^{1.5} \log(NT) + d^{2} N^{2} T^{0.5} \log^{2}(NT)  \Big)$.} has upper bound 
\begin{align*}
    C_{T} =O\Big(d N^{2}\sqrt{T}\log^{2}(NT)\Big).
\end{align*}
\end{theorem}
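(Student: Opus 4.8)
The plan is to bound $R_T$ and $C_T$ separately, in both cases reducing the argument to the gap between each client's lagging local covariance and the imaginary centralized covariance induced by event-triggered communication, following the template of Theorem 4 of \citep{wang2019distributed}, but accounting for the fact that each global update is now an iterative (AGD) procedure whose per-update cost itself grows with $t$.

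\textbf{Instantaneous regret.} First I would convert the confidence ellipsoid of Lemma \ref{lem:confidence_ellipsoid_fedglb} into a per-step regret bound. On the $1-2\delta$ event of that lemma, $\lVert\hat\theta_{t-1,i}-\theta_\star\rVert_{A_{t-1,i}}\le\alpha_{t-1,i}$, so the standard optimism argument applied to the UCB rule Eq.\eqref{eq:UCB} gives $\bx_{t,i}^{*\top}\theta_\star-\bx_{t,i}^{\top}\theta_\star\le 2\alpha_{t-1,i}\lVert\bx_{t,i}\rVert_{A_{t-1,i}^{-1}}$, where $\bx_{t,i}^{*}$ is the optimal arm. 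Since $\mu$ is increasing and $k_\mu$-Lipschitz (Assumption \ref{assump:1}), $r_{t,i}\le k_\mu\,(\bx_{t,i}^{*\top}\theta_\star-\bx_{t,i}^{\top}\theta_\star)\le 2k_\mu\alpha_{t-1,i}\lVert\bx_{t,i}\rVert_{A_{t-1,i}^{-1}}$. Corollary \ref{corollary:order_ellipsoid} then lets me replace $\alpha_{t-1,i}$ by the uniform order $\alpha=O\!\big(\tfrac{k_\mu+R_{\max}}{c_\mu}\sqrt{d\log(NT/\delta)}\big)$.

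\textbf{Summing across the local/centralized gap.} The remaining task is to bound $\sum_{t,i}\lVert\bx_{t,i}\rVert_{A_{t-1,i}^{-1}}$, where client $i$ uses $A_{t-1,i}=A_{t_{\text{last}}}+\Delta A_{t-1,i}$ rather than the centralized $A_{t-1}=A_{t_{\text{last}}}+\sum_{j}\Delta A_{t-1,j}\succeq A_{t-1,i}$. I would partition $[T]$ into epochs delimited by global updates and, as in \citep{wang2019distributed}, call an epoch \emph{good} if the centralized log-determinant grows by at most $1$ over it. In good epochs the determinant-ratio inequality $\lVert\bx\rVert_{A_{t-1,i}^{-1}}^{2}\le\frac{\det A_{t-1}}{\det A_{t-1,i}}\lVert\bx\rVert_{A_{t-1}^{-1}}^{2}\le e\,\lVert\bx\rVert_{A_{t-1}^{-1}}^{2}$ passes to the centralized matrix at constant $\sqrt e$ cost, after which the elliptical-potential lemma and Cauchy-Schwarz give $\sum\lVert\bx_{t,i}\rVert_{A_{t-1}^{-1}}=O(\sqrt{NT\,d\log(NT)})$; the bad epochs, of which there are at most $O(d\log NT)$ by the total log-determinant budget, are controlled through the event-trigger Eq.\eqref{eq:event_trigger}, which caps each client's within-epoch local potential at $\sqrt{2D}$. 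Setting $D=\frac{T}{Nd\log(NT)}$ balances the two contributions and yields $R_T=O\!\big(\tfrac{k_\mu(k_\mu+R_{\max})}{c_\mu}d\sqrt{NT}\log(NT/\delta)\big)$.

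\textbf{Communication cost and the main obstacle.} For $C_T$ I would first count the number $P$ of global updates by the same good/bad split: bad epochs number at most $O(d\log NT)$, while a good epoch triggers Eq.\eqref{eq:event_trigger} only once its length exceeds $D$ (its centralized, hence local, log-determinant ratio being $\le 1$), so there are at most $T/D$ of them, giving $P=O(Nd\log(NT))$. Unlike the linear case, each global update is not $O(N)$ but runs AGD for $J_{t}$ rounds, each aggregating $N$ client gradients; substituting $\epsilon_t=\frac{1}{N^{2}t^{2}}$ into Eq.\eqref{eq:AGD_J_upper_bound} gives $J_t=O(\sqrt{Nt}\log(NT))$, and a careful accounting of $\sum_{p}J_{t_p}$ via the epoch-length lower bound then produces $C_T=O(dN^{2}\sqrt T\log^{2}(NT))$. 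The crux, and the genuine departure from the linear DisLinUCB analysis, is the coupling of both trade-offs through the single threshold $D$: enlarging $D$ reduces the number of synchronizations but lengthens the bad epochs and inflates regret, while each synchronization is itself an iterative optimization whose cost $J_t\sim\sqrt{Nt}$ grows with the accumulated data. Making the regret step go through simultaneously relies on Corollary \ref{corollary:order_ellipsoid}, i.e. on $\epsilon_t=1/(N^2t^2)$ being small enough that the offline sub-optimality term $Nt_{\text{last}}\sqrt{\epsilon_{t_{\text{last}}}}$ inside $B_2$ (Lemma \ref{lem:loss_diff}) does not blow up $\alpha_{t,i}$; verifying that this same choice keeps $\sum_p J_{t_p}$ sub-linear is the delicate accounting that closes the proof.
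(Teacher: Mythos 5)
Your regret bound follows the paper's own proof: the same optimism step, the same determinant-ratio pass from the lagging local matrix $A_{t-1,i}$ to the imaginary centralized one, the same good/bad epoch split in which at most $O(d\log(NT))$ bad epochs each contribute $O\bigl(\alpha N\sqrt{d\log(NT)}\sqrt{D}\bigr)$ via the trigger Eq.\eqref{eq:event_trigger}, and the same choice of $D$ to balance the two parts; that half is sound.

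The gap is in the communication count, and it is not cosmetic. Your count of global updates is $P \le T/D + O(d\log(NT))$: good epochs must be longer than $D$ (so at most $T/D$ of them) and bad epochs number at most $O(d\log(NT))$. With $D = \frac{T}{Nd\log(NT)}$ this gives $P = O(Nd\log(NT))$, and since each global update costs $N\cdot J_t$ communications with $J_t = O(\sqrt{NT}\log(NT))$ from Eq.\eqref{eq:AGD_J_upper_bound}, your count yields $C_T = O(dN^{2.5}\sqrt{T}\log^2(NT))$ --- a factor $\sqrt{N}$ above the claim of Theorem \ref{thm:regret_comm_upper_bound}. The ``careful accounting of $\sum_p J_{t_p}$'' you defer to cannot close this: in the worst case every good epoch has length about $D$, so $t_p \approx pD$ and $N\sum_{p\le T/D} J_{t_p} = \Theta\bigl(N^{1.5}T^{3/2}D^{-1}\log(NT)\bigr) = \Theta\bigl(dN^{2.5}\sqrt{T}\log^{2}(NT)\bigr)$ for your $D$; the loss is in the epoch count itself, not in how the $J_{t_p}$'s are summed. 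The paper counts epochs with a different dichotomy: fix a length threshold $\alpha$; epochs of length at least $\alpha$ number at most $T/\alpha$, while an epoch shorter than $\alpha$ must, by the trigger, have (centralized, hence at least the triggering client's local) log-determinant growth exceeding $D/\alpha$, so the total log-determinant budget $R = O(d\log(NT))$ allows at most $R\alpha/D$ of them. Optimizing $\alpha = \sqrt{DT/R}$ gives $B = O(\sqrt{TR/D}) = O(d\sqrt{N}\log(NT))$ epochs --- the geometric rather than arithmetic mean of your two terms $T/D$ and $R$, which is exactly the $\sqrt{N}$ you are missing --- and then $C_T = B \cdot N \cdot O(\sqrt{NT}\log(NT)) = O(dN^2\sqrt{T}\log^2(NT))$. (A dyadic bucketing of your good epochs by their log-determinant growth would repair your argument and recover the same bound.)
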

Theorem \ref{thm:regret_comm_upper_bound} shows that \algone{} recovers the standard $O\big(d \sqrt{NT}\log(NT)\big)$ rate in regret as in the centralized setting, while only incurring a communication cost that is sub-linear in $T$. 
Note that, to obtain $O\big(d \sqrt{NT}\log(NT)\big)$ regret for federated linear bandit,
the DisLinUCB algorithm incurs a communication cost of $O(d N^{1.5} \log(NT))$ \citep{wang2019distributed}, which is smaller than that of \algone{} by a factor of $\sqrt{NT}\log(NT)$. As the frequency of global updates is the same for both algorithms (due to their use of the same event-trigger), this additional communication cost is caused by the iterative optimization procedure for the global update, which is required for GLB model estimation. Moreover, as we mentioned in Section \ref{subsec:algo_description}, there is not much room for improvement here as the use of AGD already matches the lower bound up to a logarithmic factor.

To facilitate the understanding of our algorithm design and investigate the impact of 
different components of \algone{}
on its regret and communication efficiency trade-off, we propose and analyze three variants, which are also of independent interest, and report the results in Table \ref{tb:theoretical_comparison}. Detailed descriptions, as well as proof for these results can be found in Appendix \ref{sec:prove_variants}. 
Note that all three variants perform global update according to a fixed schedule $\cS=\{t_{1}:=\lfloor\frac{T}{B}\rfloor,t_{2}:=2\lfloor \frac{T}{B}\rfloor,\dots,t_{B}:=B\lfloor\frac{T}{B}\rfloor\}$, where $B$ denotes the total number of global updates specified in advance to trade-off between $R_{T}$ and $C_{T}$, and these variants differ in their global and local update strategies.
This comparison demonstrates that our solution is proven to achieve a better regret-communication trade-off against these reasonable alternatives. For example, when using standard federated learning methods (which assume fixed dataset) for streaming data in real-world applications, it is a common practice to set some fixed schedule to periodically retrain the global model to fit the new dataset, and \algone{}$_{1}$ implements such behaviors. The design of \algone{}$_{3}$ is motivated by distributed online convex optimization that also deals with streaming data in a distributed setting.


\begin{table*}[t]
\centering
\caption{Comparison between \algone{} and its variants with different design choices.}
\scriptsize
\begin{tabular}{c c c c c}
\toprule
Global Upd. & Local Upd. & Setting & $R_{T}$ & $C_{T}$ \\
 \hline
AGD & ONS & $D=\frac{T}{ N d \log(NT)}$ & $\frac{k_{\mu}(k_{\mu}+R_{\max})}{c_{\mu}}d\sqrt{NT}\log(NT)$ & $dN^{2}\sqrt{T}\log^{2}(NT)$  \\
AGD & no update & $B=\sqrt{NT}$ & $\frac{k_{\mu}R_{\max}}{c_{\mu}}d\sqrt{NT}\log(NT)$ & $N^{2} T \log(NT)$  \\
AGD & ONS & $B=d^2 N \log(NT)$ & $\frac{k_{\mu}(k_{\mu}+R_{\max})}{c_{\mu}}d\sqrt{NT}\log(NT)\log(T)$ &  $d^{2}N^{2.5} \sqrt{T} \log^{2}(NT)$ \\
ONS & ONS & $B=\sqrt{NT}$ & $\frac{k_{\mu}(k_{\mu}+R_{max})}{c_{\mu}}d (NT)^{3/4}\log(NT)$ & $N^{1.5} \sqrt{T}$ \\
\bottomrule
\end{tabular}
\normalsize
\label{tb:theoretical_comparison}
\end{table*}

\section{Experiments}  \label{sec:exp}
We performed extensive empirical evaluations of \algone{} on both synthetic and real-world datasets, and the results (averaged over 10 runs) are reported in Figure \ref{fig:real_exp_results}.
We included the three variants of \algone{} (listed in Table \ref{tb:theoretical_comparison}), One-UCB-GLM, N-UCB-GLM \citep{li2017provably} and N-ONS-GLM \citep{jun2017scalable} as baselines, where One-UCB-GLM learns a shared bandit model across all clients, and N-UCB-GLM and N-ONS-GLM learn a separated bandit model for each client with no communication. Additional results and discussions about experiments can be found in Appendix \ref{sec:explain_scatter_plot}.

\begin{figure*}[ht]
\centering     
\subfigure[Synthetic]{\label{fig:a}\includegraphics[width=0.48\textwidth]{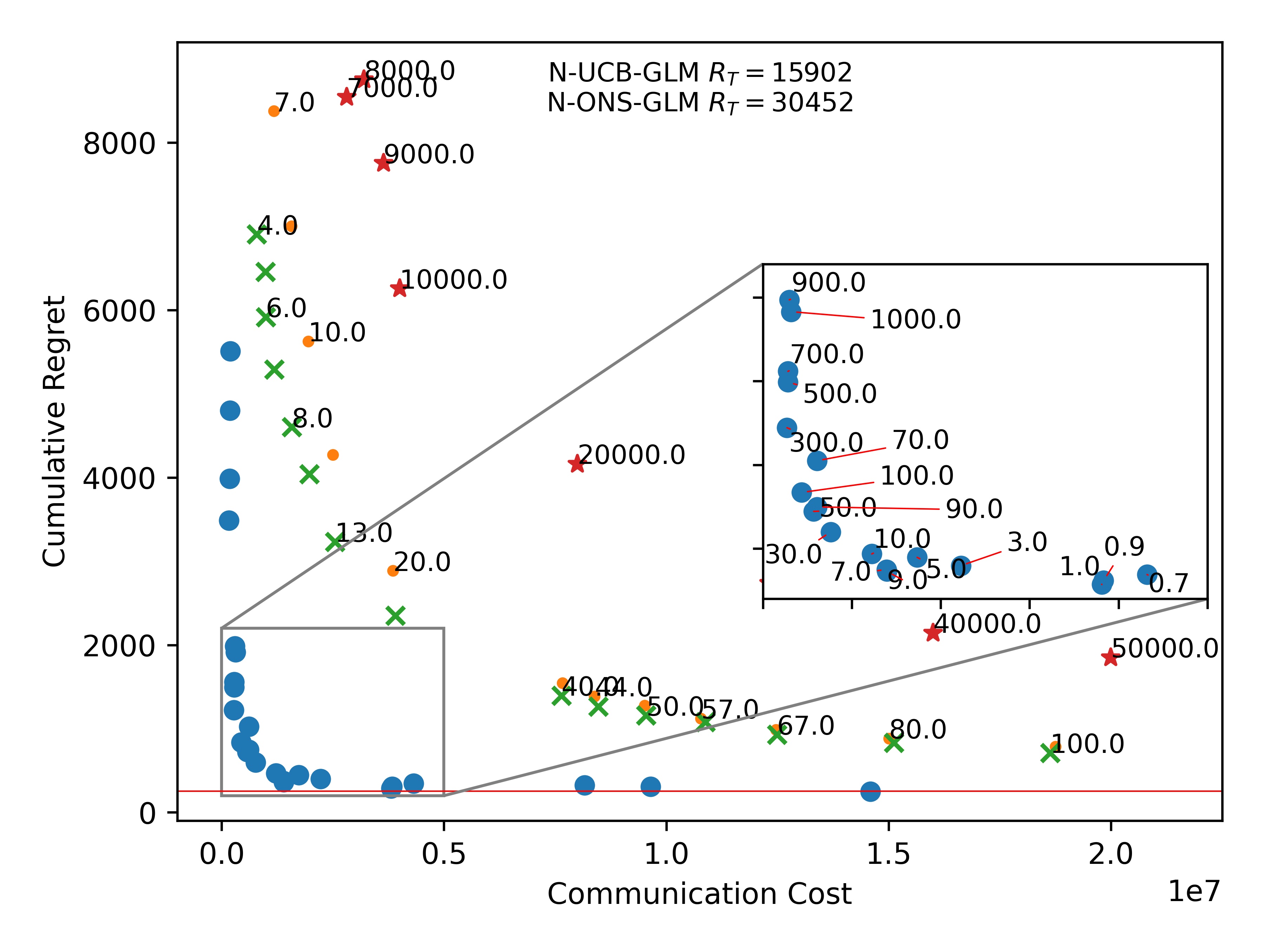}}
\subfigure[CoverType]{\label{fig:b}\includegraphics[width=0.49\textwidth]{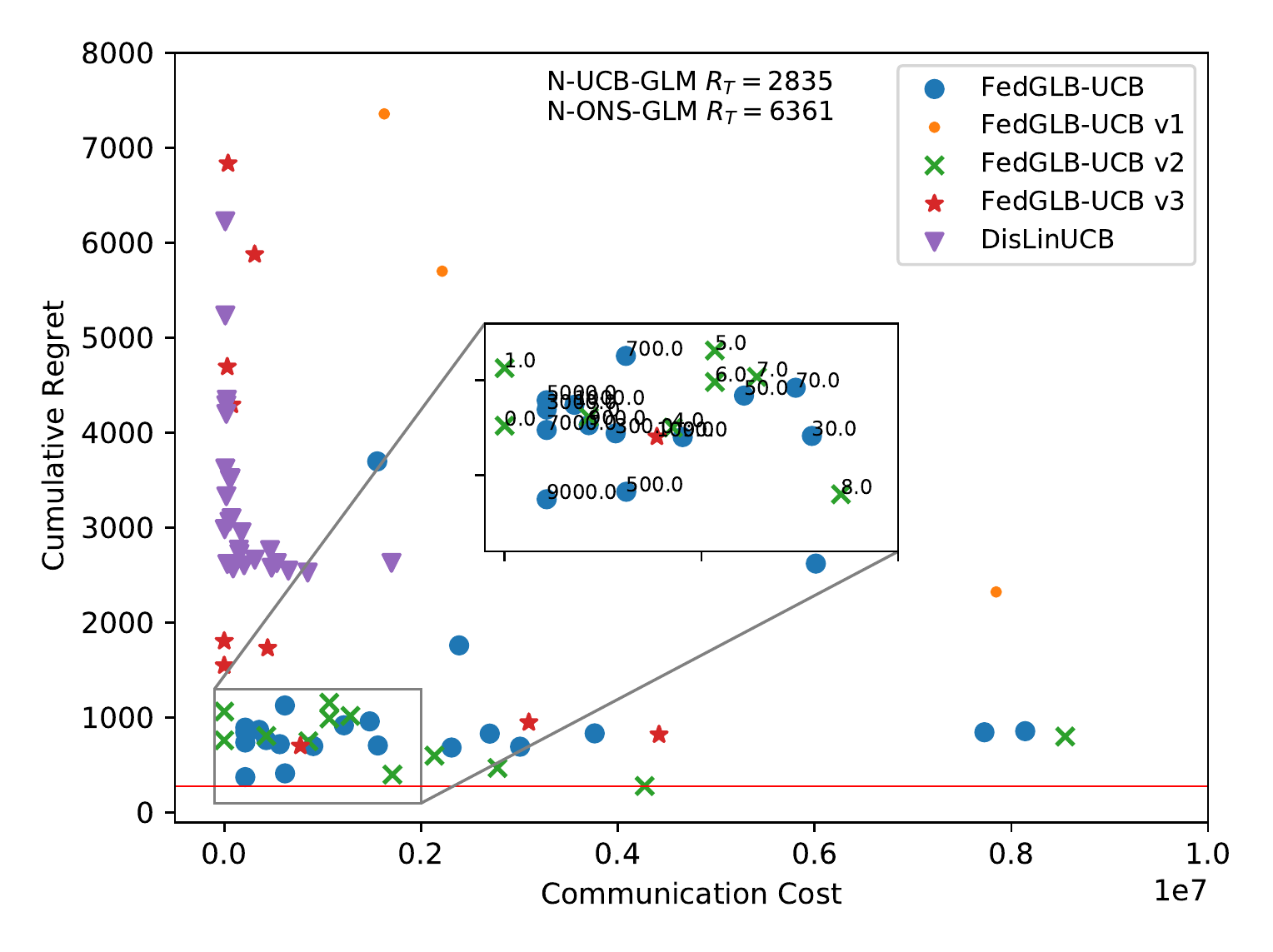}}\vspace{-2mm}
\subfigure[MagicTelescope]{\label{fig:c}\includegraphics[width=0.48\textwidth]{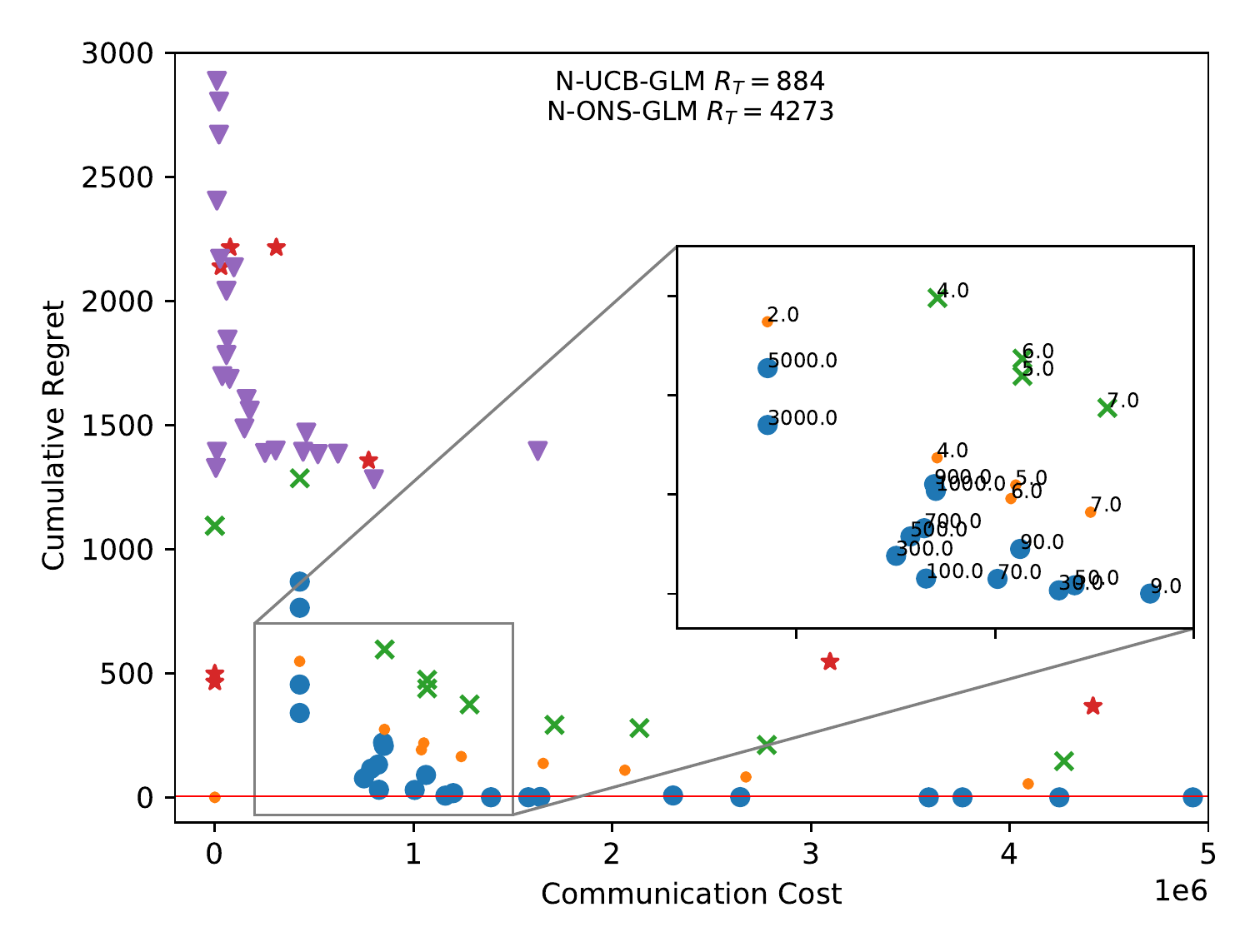}}
\subfigure[Mushroom]{\label{fig:d}\includegraphics[width=0.487\textwidth]{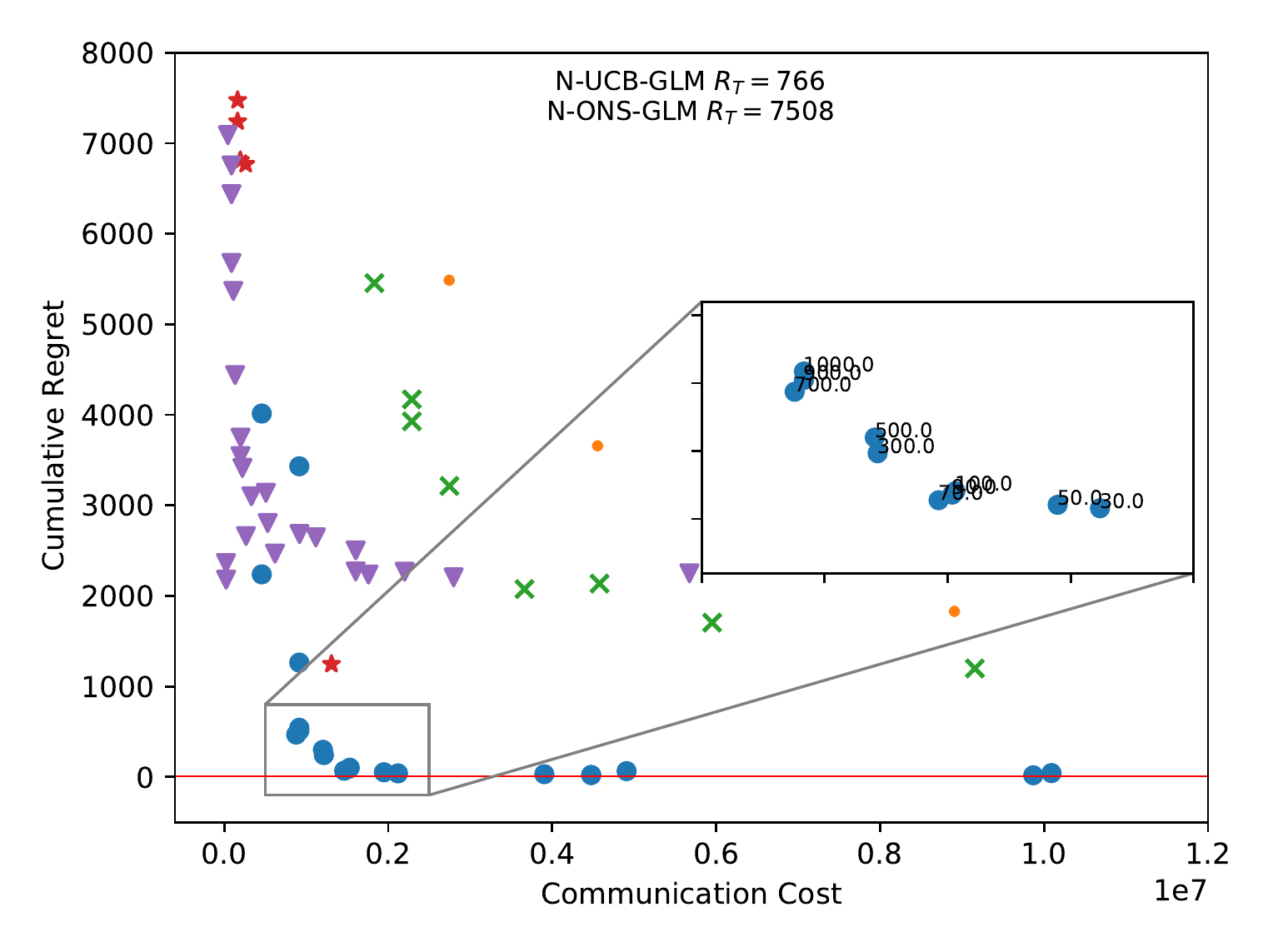}}
\caption{Experiment results on synthetic and real world datasets.}
\label{fig:real_exp_results}
\end{figure*}

\noindent\textbf{$\bullet$ Synthetic Dataset}
We simulated the federated GLB setting defined in Section \ref{subsec:federated_generalized_linear}, with $T=2000, N=200, d=10, S=1$, $\cA_{t}$ ($K=25$) uniformly sampled from a $\ell_2$ unit sphere, and reward $y_{t,,i} \sim \text{Bernoulli}(\mu(\bx_{t,,i}^{\top}\theta_{\star}))$, with $\mu(z)=(1+\exp(-z))^{-1}$. 
To compare the algorithms' $R_{T}$ and $C_{T}$ under different trade-off settings, we run \algone{} with different threshold value $D$ (logarithmically spaced between $10^{-1}$ and $10^{3}$)
and its variants 
with different number of global updates $B$.
Note that each dot in the result figure illustrates the $C_{T}$ (x-axis) and $R_{T}$ (y-axis) that a particular instance of \algone{} or its variants
obtained by time $T$, and the corresponding value for $D$ or $B$ is labeled next to the dot. $R_{T}$ of One-UCB-GLM is illustrated as the red horizontal line, and $R_{T}$ of N-UCB-GLM and N-ONS-GLM are labeled on the top of the figure.
We can observe that for \algone{} and its variants, $R_{T}$ decreases as $C_{T}$ increases, interpolating between the two extreme cases: independently learned bandit models by N-UCB-GLM, N-ONS-GLM; and the jointly learned bandit model by One-UCB-GLM.
\algone{} significantly reduces $C_{T}$, while attaining low $R_{T}$, i.e., its regret is even comparable with One-UCB-GLM that requires at least $C_{T}=N^{2}T$ ($8\times 10^{7}$ in this simulation) for gradient aggregation at each time step.
\noindent\textbf{$\bullet$ Real-world Dataset}
The results above demonstrate the effectiveness of \algone{} when data is generated by a well-specified generalized linear model. To evaluate its performance in a more challenging and practical scenario, we performed experiments using real-world datasets: CoverType, MagicTelescope and Mushroom from the UCI Machine Learning Repository \citep{Dua:2019}. To convert them to contextual bandit problems, we pre-processed these datasets following the steps in prior works \citep{filippi2010parametric}, with $T=2000$ and $N=20$. Moreover, to demonstrate the advantage of GLB over linear model, we included DisLinUCB \citep{wang2019distributed} as an additional baseline. 
Since the parameters being communicated in DisLinUCB and \algone{} are different, to ensure a fair comparison of $C_T$ in this experiment, we measure communication cost (x-axis) by the number of integers or real numbers transferred across the learning system (instead of the frequency of communications). Note that DisLinUCB has no $C_{T} \geq 3\times 10^{6}$in Figure \ref{fig:real_exp_results} because 
its global update is already happening
in every round and cannot be increased further. As mentioned earlier, due to the difference in messages being sent, the communication in DisLinUCB's \textit{per global update} is much smaller than that in FedGLB-UCB.
However, because linear models failed to capture the complicated reward mappings in these three datasets, we can see that DisLinUCB is clearly outperformed by \algone{} and its variants.
This shows that, by offering a larger variety of modeling choices, e.g., linear, Poisson, logistic regression, etc., \algone{} has more potential in dealing with the complicated data in real-world applications.

\section{Conclusion}
In this paper, we take the first step to address the new challenges in communication efficient federated bandit learning beyond linear models, where closed-form solutions do not exist, and propose a solution framework for federated GLB that employs online regression for local update and offline regression for global update. For arm selection, we propose a novel confidence ellipsoid construction based on the sequence of \emph{offline-and-online} model estimations. We rigorously prove that the proposed algorithm attains sub-linear rate for both regret and communication cost,
and also analyze the impact of each component of our algorithm via theoretical comparison with different variants.
In addition, extensive empirical evaluations are performed to validate the effectiveness of our algorithm.

An important further direction of this work is the lower bound analysis for the communication cost, 
analogous to the communication lower bound for standard distributed optimization by Arjevani and Shamir \cite{arjevani2015communication}.
Moreover, in our algorithm, 
clients' locally updated models 
are not utilized for global model update,
so that another interesting direction is to investigate whether using such knowledge, e.g., by model aggregation, can further improve communication efficiency.

\section*{Acknowledgement}
This work is supported by NSF grants IIS-2213700, IIS-2128019 and IIS-1838615.

\bibliography{bibfile}

\clearpage
\appendix

\section{Technical Lemmas} \label{sec:technical_lemmas}
When applying the following self-normalized bound in the analysis of federated bandit algorithm with event-trigger, a subtle difference from the analysis of standard bandit algorithm is that the sequence of data points used to update each client is controlled by the data-dependent event-trigger, e.g. Eq~\eqref{eq:event_trigger}, which introduces dependencies on the future data, and thus breaks the standard argument. 
This problem also exists in prior works of distributed linear bandit, but was not addressed rigorously (see Lemma H.1. of \citep{wang2019distributed}).
Specifically, each client $i$ observes the sequence of data points in a different order, i.e., it first observes each newly collected local data points from the environment, and then observes (in the form of their gradients) the batch of new data points that other clients have collected at the end of the epoch.
Then, if we consider a data point that is contained in the batch of new data collected by other clients, the index of this data point (as observed by client $i$) has dependency all the way to the end of this batch,
i.e., its index is only determined after some client triggers the global update.

Therefore, in order to avoid this dependency on future data points, when constructing the filtration, we should avoid including the $\sigma$-algebra that `cuts a batch in half', but instead only include the $\sigma$-algebra generated by the sequence of data points up to the end of each batch, where we consider each locally observed data point as a batch as well. Denote the sequence of time indices corresponding to these data points as $\{t_{p}\}_{p \in [P]}$ for some $P>0$. Then the constructed filtration $\{\cF_{t_{p}}\}_{p \in [P]}$ is essentially a batched version of the standard $\{\cF_{t}\}_{t=1}^{\infty}$. The self-normalized bound below still holds, i.e., by changing the stopping time construction from $\cup_{t \geq 1} B_{t}(\delta)$ to $\cup_{t \in \{t_{p}\}_{p \in [P]}} B_{t}(\delta)$ in the proof of Theorem 1 in \citep{abbasi2011improved}, where $B_{t}(\delta)$ denotes the bad event that the bound does not hold. Therefore, instead of holding uniformly over all $t$, the self-normalized bound now only holds for all $t \in \{t_{p}\}_{p \in [P]}$, i.e., the sequence of time steps when client $i$ gets updated, which is also what we need.

\begin{lemma}[Vector-valued self-normalized bound (Theorem 1 of \cite{abbasi2011improved})] \label{lem:self_normalized_bound}
Let $\{\cF_{t}\}_{t=1}^{\infty}$ be a filtration. Let $\{\eta_{t}\}_{t=1}^{\infty}$ be a real-valued stochastic process such that $\eta_{t}$ is $\cF_{t+1}$-measurable, and $\eta_{t}$ is conditionally zero mean $R$-sub-Gaussian for some $R \geq 0$.
Let $\{X_{t}\}_{t=1}^{\infty}$ be a $\bR^{d}$-valued stochastic process such that $X_{t}$ is $\cF_{t}$-measurable. Assume that $V$ is a $d \times d$ positive definite matrix. For any $t > 0$, define
\begin{align*}
    V_{t}=V+\sum_{\tau=1}^{t} X_{\tau} X_{\tau}^{\top} \quad \cS_{t}=\sum_{\tau=1}^{t}\eta_{\tau} X_{\tau} 
\end{align*}
Then for any $\delta >0$, with probability at least $1-\delta$,
\begin{align*}
    ||\cS_{t}||_{V_{t}^{-1}} \leq  R \sqrt{2\log{\frac{\det(V_{t})^{1/2}}{\det(V)^{1/2}\delta}}}, \quad \forall t\geq 0
\end{align*}
\end{lemma}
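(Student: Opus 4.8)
The plan is to prove this via the \emph{method of mixtures} (pseudo-maximization), which converts a family of per-direction exponential martingales into a single scalar supermartingale whose tail directly controls the self-normalized quantity uniformly in $t$. First, for a fixed $\lambda \in \bR^{d}$ I would introduce
\[
    M_{t}^{\lambda} = \exp\left(\lambda^{\top} \cS_{t} - \frac{R^{2}}{2}\sum_{\tau=1}^{t} (\lambda^{\top} X_{\tau})^{2}\right).
\]
Since $X_{\tau}$ is $\cF_{\tau}$-measurable while $\eta_{\tau}$ is conditionally zero-mean $R$-sub-Gaussian and $\cF_{\tau+1}$-measurable, the one-step factor satisfies $\bE[\exp(\lambda^{\top}\eta_{\tau}X_{\tau} - \frac{R^{2}}{2}(\lambda^{\top}X_{\tau})^{2}) \mid \cF_{\tau}] \le 1$, so $\{M_{t}^{\lambda}\}$ is a nonnegative supermartingale with $M_{t}^{\lambda}$ being $\cF_{t+1}$-measurable, $M_{0}^{\lambda}=1$, and $\bE[M_{t}^{\lambda}]\le 1$. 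Using $\sum_{\tau}(\lambda^{\top}X_{\tau})^{2} = \lambda^{\top}(V_{t}-V)\lambda$ puts this in the compact form $M_{t}^{\lambda} = \exp(\lambda^{\top}\cS_{t} - \frac{R^{2}}{2}\lambda^{\top}(V_{t}-V)\lambda)$.

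Next I would mix over $\lambda$ against a Gaussian prior. Let $\Lambda \sim \cN(0,(R^{2}V)^{-1})$ be drawn independently of the process, with density $h$, and set $M_{t} = \int_{\bR^{d}} M_{t}^{\lambda}\, h(\lambda)\, d\lambda = \bE_{\Lambda}[M_{t}^{\Lambda}]$. By Tonelli, $M_{t}$ remains a nonnegative supermartingale with $\bE[M_{t}]\le 1$. The purpose of this particular prior is that the prior's quadratic term combines with $-\frac{R^{2}}{2}\lambda^{\top}(V_{t}-V)\lambda$ to leave the exponent $\lambda^{\top}\cS_{t} - \frac{R^{2}}{2}\lambda^{\top}V_{t}\lambda$; completing the square and evaluating the resulting Gaussian integral then yields the closed form
\[
    M_{t} = \left(\frac{\det V}{\det V_{t}}\right)^{1/2}\exp\left(\frac{1}{2R^{2}}\lVert \cS_{t}\rVert_{V_{t}^{-1}}^{2}\right).
\]

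Finally I would apply a maximal inequality for nonnegative supermartingales (Ville's inequality), giving $\Pr[\sup_{t\ge 0} M_{t} \ge 1/\delta] \le \delta\,\bE[M_{0}] \le \delta$. On the complementary event, $M_{t} < 1/\delta$ holds for all $t$ simultaneously, and rearranging this inequality using the closed form above produces exactly $\lVert \cS_{t}\rVert_{V_{t}^{-1}} \le R\sqrt{2\log(\det(V_{t})^{1/2}/(\det(V)^{1/2}\delta))}$ for all $t$. The main obstacle is making the pseudo-maximization rigorous: justifying the Tonelli exchange that defines $M_{t}$, and in particular securing the uniform-in-$t$ claim cleanly, since the mixed supermartingale need not converge. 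I would therefore route the last step through Ville's maximal inequality (or, equivalently, a stopped process together with Fatou's lemma) rather than a union bound over $t$. For the federated-bandit application I would additionally invoke the remark preceding the lemma: the filtration must be taken in the batched form $\{\cF_{t_{p}}\}_{p\in[P]}$ so that each data index is measurable despite the data-dependent event-trigger, which merely restricts the set of times $t$ at which the bound is asserted and leaves the mixture argument otherwise unchanged.
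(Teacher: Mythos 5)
Your proposal is correct and is essentially the same argument as the paper's source for this lemma: the paper cites Theorem 1 of \cite{abbasi2011improved} without reproving it, and that theorem's proof is exactly your method-of-mixtures construction (exponential supermartingale, Gaussian prior $\cN(0,(R^{2}V)^{-1})$, closed-form mixture, then Ville's inequality or equivalently a stopping-time plus Fatou argument for uniformity in $t$). Your closing remark about restricting to the batched filtration $\{\cF_{t_{p}}\}_{p\in[P]}$ also matches the paper's own adaptation of the stopping-time construction to handle the data-dependent event-trigger.
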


\begin{lemma}[Corollary 8 of \citep{abbasi2012online}] \label{lem:uniform_self_normalized}
Under the same assumptions as Lemma \ref{lem:self_normalized_bound}, consider a sequence of real-valued
variables $\{Z_{t}\}_{t=1}^{\infty}$ such that $Z_{t}$ is $\cF_{t}$-measurable. Then for any $\delta > 0$, with probability at least $1-\delta$, 
\begin{align*}
    |\sum_{\tau=1}^{t} \eta_{\tau} Z_{\tau}| \leq R \sqrt{2 (V + \sum_{\tau=1}^{t} Z_{\tau}^{2}) \log\left(\frac{\sqrt{V + \sum_{\tau=1}^{t} Z_{\tau}^{2}}}{\delta \sqrt{V}}\right)}, \forall t \geq 0
\end{align*}
\end{lemma}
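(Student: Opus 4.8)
The plan is to obtain this scalar bound as the one-dimensional specialization of the vector-valued self-normalized inequality in Lemma~\ref{lem:self_normalized_bound}. First I would instantiate that lemma with $d=1$, taking the $\bR^{1}$-valued predictable process to be $X_{\tau}=Z_{\tau}$ and letting the positive definite matrix be the given positive scalar $V$. The hypotheses transfer verbatim, since we are working ``under the same assumptions'': $\eta_{\tau}$ is $\cF_{\tau+1}$-measurable and conditionally zero-mean $R$-sub-Gaussian, while $Z_{\tau}$ is $\cF_{\tau}$-measurable, which are exactly the measurability requirements placed on $\eta_{\tau}$ and $X_{\tau}$ in Lemma~\ref{lem:self_normalized_bound}. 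Its conclusion therefore applies with the same failure probability $\delta$ and the same uniformity over all $t\geq 0$.

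Next I would evaluate the quantities appearing in Lemma~\ref{lem:self_normalized_bound} under this substitution. In the scalar case, $V_{t}=V+\sum_{\tau=1}^{t}Z_{\tau}^{2}$ and $\cS_{t}=\sum_{\tau=1}^{t}\eta_{\tau}Z_{\tau}$ are both real numbers, the weighted norm collapses to $\lVert \cS_{t}\rVert_{V_{t}^{-1}}=|\cS_{t}|/\sqrt{V_{t}}$, and the determinants reduce to the scalar values, so that $\det(V_{t})^{1/2}/\det(V)^{1/2}=\sqrt{V_{t}}/\sqrt{V}$. Substituting these into the conclusion of Lemma~\ref{lem:self_normalized_bound} gives, on the same high-probability event,
\[
\frac{|\cS_{t}|}{\sqrt{V_{t}}} \leq R\sqrt{2\log\frac{\sqrt{V_{t}}}{\delta\sqrt{V}}}, \quad \forall t\geq 0.
\]

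Finally I would multiply both sides by $\sqrt{V_{t}}$ and fold the factor inside the square root via $\sqrt{a}\cdot\sqrt{2b}=\sqrt{2ab}$, yielding $|\cS_{t}|\leq R\sqrt{2\,V_{t}\,\log(\sqrt{V_{t}}/(\delta\sqrt{V}))}$; expanding $V_{t}=V+\sum_{\tau=1}^{t}Z_{\tau}^{2}$ then recovers the claimed inequality. There is essentially no hard step, as the statement is a pure reformulation of the $d=1$ case; the only point worth remarking is that the uniform-in-$t$, probability-$(1-\delta)$ guarantee of the stated bound is inherited directly from the corresponding uniform guarantee of Lemma~\ref{lem:self_normalized_bound}, so no separate union bound over $t$ is needed.
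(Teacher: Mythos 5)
Your proof is correct. The paper does not actually prove this lemma---it imports it verbatim as Corollary 8 of \citep{abbasi2012online}---and your derivation, specializing the vector-valued self-normalized bound of Lemma~\ref{lem:self_normalized_bound} to $d=1$ with $X_{\tau}=Z_{\tau}$ and the positive scalar $V$ playing the role of the positive definite matrix (so that $\lVert \cS_{t}\rVert_{V_{t}^{-1}}=|\cS_{t}|/\sqrt{V_{t}}$ and the determinant ratio collapses to $\sqrt{V_{t}}/\sqrt{V}$), is exactly how that corollary is obtained in the cited reference, including the inherited uniformity over $t$ that makes any extra union bound unnecessary.
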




\begin{lemma} \label{lem:smoothness_constant}
Under Assumption \ref{assump:1}, $F_{i,t}(\theta)$ for $i=1,2,\dots,N$ is smooth with constant $k_{\mu} + \frac{\lambda}{Nt}$
\end{lemma}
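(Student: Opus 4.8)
The plan is to reduce the claim to a uniform bound on the largest eigenvalue of the Hessian $\nabla^2 F_{t,i}(\theta)$, since a twice-differentiable convex function is $L$-smooth precisely when the operator norm of its Hessian is at most $L$ throughout the domain. First I would differentiate the per-sample loss $l(\bx^\top\theta,y) = -y\,\bx^\top\theta + m(\bx^\top\theta)$ twice in $\theta$. The chain rule gives $\nabla_\theta l = (\dot{m}(\bx^\top\theta)-y)\,\bx$ and $\nabla^2_\theta l = \ddot{m}(\bx^\top\theta)\,\bx\bx^\top$; invoking the identities $\dot{m}=\mu$ and $\ddot{m}=\dot{\mu}$ recorded in Section \ref{subsec:federated_generalized_linear}, this is $\dot{\mu}(\bx^\top\theta)\,\bx\bx^\top$. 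Summing over the $t$ local samples and adding the Hessian of the regularization term then yields
\[
\nabla^2 F_{t,i}(\theta) = \frac{1}{t}\sum_{s=1}^{t} \dot{\mu}(\bx_{s,i}^\top\theta)\,\bx_{s,i}\bx_{s,i}^\top + \frac{\lambda}{Nt}\,\bI .
\]

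The second step is to bound this matrix from above in operator norm. Each rank-one summand is positive semidefinite because $\dot{\mu}\geq c_\mu>0$ by Assumption \ref{assump:1}, so the Hessian is positive semidefinite and its operator norm equals its largest eigenvalue. To control that eigenvalue I would use two facts: (i) the $k_\mu$-Lipschitz property of $\mu$ on $[-S,S]$ gives $\dot{\mu}(z)\leq k_\mu$ for every $z\in[-S,S]$; and (ii) the normalization $\lVert\bx_{s,i}\rVert_2\leq 1$ gives $\lVert\bx_{s,i}\bx_{s,i}^\top\rVert = \lVert\bx_{s,i}\rVert_2^2\leq 1$. Before applying (i) I must verify its hypothesis at the relevant argument, namely $\bx_{s,i}^\top\theta\in[-S,S]$; this follows from Cauchy--Schwarz together with $\lVert\bx_{s,i}\rVert_2\leq 1$ and the domain constraint $\theta\in\cB_d(S)$, which forces $\lVert\theta\rVert_2\leq S$. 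Combining (i) and (ii), each summand has operator norm at most $k_\mu$, hence so does their average, and adding the $\frac{\lambda}{Nt}\bI$ term gives $\lVert\nabla^2 F_{t,i}(\theta)\rVert \leq k_\mu + \frac{\lambda}{Nt}$, the claimed smoothness constant.

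The computation itself is routine, so I do not expect a deep obstacle; the only points needing care are the two connective facts. The first is justifying $\dot{\mu}\leq k_\mu$ from ``$\mu$ is $k_\mu$-Lipschitz'': this uses that $\mu$ is differentiable on $(-S,S)$ (guaranteed by Assumption \ref{assump:1}) together with the standard fact that a Lipschitz constant dominates the derivative. The second is confirming the domain restriction $\bx_{s,i}^\top\theta\in[-S,S]$ so that the Lipschitz bound is actually applicable, which is exactly why the analysis restricts attention to the feasible ball $\cB_d(S)$; this same restriction is what makes the companion $\frac{\lambda}{Nt}$-strong-convexity constant well-defined, since there the data term is dropped and only the $\frac{\lambda}{Nt}\bI$ contribution survives as a lower bound on the Hessian.
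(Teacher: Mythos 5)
Your proof is correct, but it takes a genuinely different route from the paper's. You argue at second order: compute $\nabla^2 F_{t,i}(\theta) = \frac{1}{t}\sum_{s=1}^{t}\dot{\mu}(\bx_{s,i}^{\top}\theta)\,\bx_{s,i}\bx_{s,i}^{\top} + \frac{\lambda}{Nt}\bI$, extract $\dot{\mu}\leq k_{\mu}$ from the Lipschitz hypothesis, and bound the largest eigenvalue by $k_{\mu}+\frac{\lambda}{Nt}$. The paper instead argues at first order: it never differentiates $\mu$, but directly bounds $\lVert \nabla F_{i,t}(\theta_{1})-\nabla F_{i,t}(\theta_{2})\rVert$ via the triangle inequality, the bound $\lVert\bx_{s,i}\rVert\leq 1$, the Lipschitz inequality $|\mu(\bx_{s,i}^{\top}\theta_{1})-\mu(\bx_{s,i}^{\top}\theta_{2})|\leq k_{\mu}|\bx_{s,i}^{\top}(\theta_{1}-\theta_{2})|$, and Cauchy--Schwarz. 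The first-order route is slightly more robust: it uses the Lipschitz property of $\mu$ exactly as stated in Assumption \ref{assump:1} and needs no differentiability of $\mu$ at all, so it silently avoids the boundary issue your argument inherits --- Assumption \ref{assump:1} only guarantees $\mu$ is continuously differentiable on the open interval $(-S,S)$, while $\bx_{s,i}^{\top}\theta$ can equal $\pm S$ for $\theta$ on the boundary of $\cB_{d}(S)$, so strictly speaking your Hessian may fail to exist at a few feasible points (a technicality you could patch by a limiting argument, or by noting the assumption's own use of $\inf_{z\in[-S,S]}\dot{\mu}(z)$ tacitly presumes $\dot{\mu}$ extends to the closed interval). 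What your route buys in exchange is a unified picture: the same Hessian display immediately yields positive semidefiniteness, the strong-convexity constant $\frac{\lambda}{Nt}$ (as you note at the end), and the smoothness constant, whereas the paper's argument delivers only the Lipschitz bound on the gradient. One further point in your favor: you correctly take the regularizer's Hessian contribution to be $\frac{\lambda}{Nt}\bI$, which is what the lemma's constant and the paper's own proof require, even though the prose definition of $F_{t,i}$ in Section \ref{subsec:algo_description} writes the regularizer as $\frac{\lambda}{2t}\lVert\theta\rVert_{2}^{2}$ --- an internal inconsistency of the paper, not of your argument.
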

\begin{proof}
By Assumption 1, $\mu(\cdot)$ is Lipschitz continuous with constant $k_{\mu}$, i.e.,
$|\mu(\bx^{\top} \theta_{1}) - \mu(\bx^{\top} \theta_{2})| \leq k_{\mu}|\bx^{\top} (\theta_{1} - \theta_{2})|$. Then we can show that
\begin{align*}
    & ||\nabla F_{i,t}(\theta_{1})-\nabla F_{i,t}(\theta_{2})|| \\
    & = ||\frac{1}{t}\sum_{s=1}^{t}\bx_{s,i}[\mu(\bx_{s,i}^{\top} \theta_{1})-\mu(\bx_{s,i}^{\top} \theta_{2})] + \frac{\lambda}{Nt}(\theta_{1}-\theta_{2})||  \\
    & \leq \frac{1}{t}\sum_{s=1}^{t}||\bx_{s,i}[\mu(\bx_{s,i}^{\top} \theta_{1})-\mu(\bx_{s,i}^{\top} \theta_{2})]|| + \frac{\lambda}{Nt}||\theta_{1}-\theta_{2}||  \\
    & \leq \frac{1}{t}\sum_{s=1}^{t} |\mu(\bx_{s,i}^{\top} \theta_{1})-\mu(\bx_{s,i}^{\top} \theta_{2})|  + \frac{\lambda}{Nt}||\theta_{1}-\theta_{2}||  \\
    & \leq \frac{ k_{\mu}}{t} \sum_{s=1}^{t}|\bx_{s,i}^{\top} (\theta_{1} - \theta_{2})| + \frac{\lambda}{Nt}||\theta_{1}-\theta_{2}||  \leq (k_{\mu} + \frac{\lambda}{Nt}) ||\theta_{1} - \theta_{2}||
\end{align*}
Therefore, $\nabla F_{i,t}(\theta)$ is Lipschitz continuous with constant $k_{\mu} + \frac{\lambda}{Nt}$, and $\nabla F_{t}(\theta)=\frac{1}{N}\sum_{i=1}^{N} \nabla F_{i,t}(\theta)$ is Lipschitz continuous with constant $k_{\mu} + \frac{\lambda}{Nt}$ as well. 
\end{proof}

\begin{lemma}[Matrix Weighted Cauchy-Schwarz] \label{lem:matrix_weighted_cauchy_schwartz}
If $A \in \mathbb{R}^{d \times d}$ is a PSD matrix, then $x^{T}A y \leq \sqrt{x^{T}A x \cdot y^{T}A y}$ holds for any vectors $x,y \in \mathbb{R}^{d}$.
\end{lemma}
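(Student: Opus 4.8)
The plan is to reduce the matrix-weighted bilinear form $x^{T} A y$ to an ordinary Euclidean inner product by factoring $A$ through its symmetric positive semidefinite square root, and then to apply the standard Cauchy-Schwarz inequality. The only property of $A$ that is used is positive semidefiniteness, which is precisely what guarantees the existence of such a square root.

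First I would invoke the spectral theorem: since $A$ is symmetric and PSD, it admits a symmetric PSD square root $A^{1/2}$ with $A = A^{1/2} A^{1/2}$ and $(A^{1/2})^{T} = A^{1/2}$. Setting $u := A^{1/2} x$ and $v := A^{1/2} y$, I would rewrite the three quantities appearing in the statement as $x^{T} A y = u^{T} v$, $x^{T} A x = \lVert u \rVert_{2}^{2}$, and $y^{T} A y = \lVert v \rVert_{2}^{2}$. The claim then follows immediately from the ordinary Cauchy-Schwarz inequality $u^{T} v \leq \lVert u \rVert_{2} \lVert v \rVert_{2}$, since its right-hand side equals $\sqrt{\lVert u \rVert_{2}^{2} \lVert v \rVert_{2}^{2}} = \sqrt{(x^{T} A x)(y^{T} A y)}$.

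As a self-contained alternative that avoids constructing the square root, I would instead use the nonnegativity of the quadratic $\phi(\lambda) := (x - \lambda y)^{T} A (x - \lambda y) \geq 0$, which holds for all $\lambda \in \mathbb{R}$ precisely because $A$ is PSD. Expanding gives $\phi(\lambda) = (y^{T} A y)\lambda^{2} - 2(x^{T} A y)\lambda + x^{T} A x$; when $y^{T} A y > 0$ this is a nonnegative quadratic in $\lambda$, so its discriminant is nonpositive, which yields $(x^{T} A y)^{2} \leq (x^{T} A x)(y^{T} A y)$ and hence the stated bound on the signed quantity $x^{T} A y$. The only point requiring care is the degenerate case $y^{T} A y = 0$: there $A^{1/2} y = 0$, hence $x^{T} A y = 0$ and the inequality holds trivially. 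Since this lemma is a routine technical ingredient, I do not anticipate any substantive obstacle, and the square-root reduction is the cleanest route as it sidesteps the degenerate case entirely.
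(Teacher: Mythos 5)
Your proposal is correct, and your primary route is genuinely different from the paper's. The paper proves this lemma exactly as in your ``self-contained alternative'': it considers the quadratic $(x+ty)^{T}A(x+ty) = x^{T}Ax + 2(x^{T}Ay)t + (y^{T}Ay)t^{2} \geq 0$ and concludes that its discriminant is nonpositive, i.e., $4(x^{T}Ay)^{2} - 4\,x^{T}Ax \cdot y^{T}Ay \leq 0$. Your preferred route instead factors $A = A^{1/2}A^{1/2}$ via the spectral theorem and reduces the claim to the Euclidean Cauchy--Schwarz inequality applied to $u = A^{1/2}x$ and $v = A^{1/2}y$. The trade-off is as you describe: the square-root reduction is cleaner and handles all cases uniformly, at the cost of invoking the spectral theorem, whereas the discriminant argument is elementary but degenerates when $y^{T}Ay = 0$ (the ``quadratic'' becomes linear, so ``nonpositive discriminant'' needs a separate word) --- a corner case you flag explicitly but which the paper's own proof silently glosses over. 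One small inconsistency: in your alternative you resolve the degenerate case by appealing to $A^{1/2}y = 0$, which reintroduces the square root you were trying to avoid; a purely elementary fix is to note that if $y^{T}Ay = 0$ and $x^{T}Ay \neq 0$, then $\phi(\lambda) = -2(x^{T}Ay)\lambda + x^{T}Ax$ is a nonconstant affine function and hence takes negative values, contradicting $\phi \geq 0$.
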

\begin{proof}
Consider a quadratic function $(x+ty)^{T}A(x+ty)=x^{T}A x + 2 (x^{T}A y)t + (y^{T}A y)t^{2}$ for some variable $t \in \bR$, where $x,y \in \bR^{d}$ are arbitrary vectors. Since $A$ is PSD, the value of this quadratic function $(x+ty)^{T}A(x+ty)=x^{T}A x + 2 (x^{T}A y)t + (y^{T}A y)t^{2} \geq 0, \forall t$, which means there can be at most one root. This is equivalent to saying the discriminant of this quadratic function $4(x^{T} A y)^{2}-4 x^{T}A x \cdot y^{T}A y \leq 0$, which finishes the proof.
\end{proof}

\begin{lemma}[Confidence Ellipsoid Centered at Global Model]\label{lem:global_model_confidence_ellipsoid}
Consider time step $t \in [T]$ when a global update happens, such that the distributed optimization over $N$ clients is performed to get the globally updated model $\theta_{t}$. Denote the sub-optimality of the final iteration as $\epsilon_{t}$, such that $F_{t}(\theta_{t})-F_{t}(\hat{\theta}^{\text{MLE}}_{t}) \leq \epsilon_{t}$; then with probability at least $1-\delta$, for all $t \in [T]$,
\begin{equation*}
    ||\theta_{t} - \theta_{\star}||_{A_{t}} \leq \alpha_{t}
\end{equation*}
where $\alpha_{t} = N t \sqrt{\frac{2 k_{\mu} }{\lambda c_{\mu}} + \frac{2}{N t c_{\mu}}} \sqrt{\epsilon_{t}} + \frac{R_{max}}{c_{\mu}} \sqrt{d\log{(1+{N t c_{\mu}}/{(d \lambda)})}+2\log{({1}/{\delta})}} + \sqrt{\frac{\lambda}{c_{\mu}}}S$, and $A_{t}=\frac{\lambda}{c_{\mu}} I + \sum_{i=1}^{N}\sum_{s=1}^{t}\bx_{s,i}\bx_{s,i}^{\top}$.
\end{lemma}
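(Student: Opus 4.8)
The statement's bound $\alpha_{t}$ splits into three additive pieces, so the plan is to bound $\lVert \theta_{t}-\theta_{\star}\rVert_{A_{t}}$ by the triangle inequality in the $A_{t}$-norm,
\[
\lVert \theta_{t}-\theta_{\star}\rVert_{A_{t}} \le \lVert \theta_{t}-\hat{\theta}^{\text{MLE}}_{t}\rVert_{A_{t}} + \lVert \hat{\theta}^{\text{MLE}}_{t}-\theta_{\star}\rVert_{A_{t}},
\]
and to show the first summand reproduces the optimization term $Nt\sqrt{\tfrac{2k_{\mu}}{\lambda c_{\mu}}+\tfrac{2}{Ntc_{\mu}}}\sqrt{\epsilon_{t}}$ while the second produces the remaining two (statistical) terms. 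Here $\hat{\theta}^{\text{MLE}}_{t}$ is the (projected) minimizer of $F_{t}$.

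For the optimization term I would use that $F_{t}$ is $\tfrac{\lambda}{Nt}$-strongly convex in the $\ell_{2}$-norm (the quadratic regularizer, plus $\dot{\mu}\ge 0$ from Assumption \ref{assump:1}). Since $\hat{\theta}^{\text{MLE}}_{t}$ is the minimizer, $\tfrac{\lambda}{2Nt}\lVert \theta_{t}-\hat{\theta}^{\text{MLE}}_{t}\rVert_{2}^{2}\le F_{t}(\theta_{t})-F_{t}(\hat{\theta}^{\text{MLE}}_{t})\le \epsilon_{t}$, so $\lVert \theta_{t}-\hat{\theta}^{\text{MLE}}_{t}\rVert_{2}^{2}\le \tfrac{2Nt}{\lambda}\epsilon_{t}$. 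I then convert to the $A_{t}$-norm via $\lVert v\rVert_{A_{t}}^{2}\le \lambda_{\max}(A_{t})\lVert v\rVert_{2}^{2}$ with $\lambda_{\max}(A_{t})\le \tfrac{\lambda+k_{\mu}Nt}{c_{\mu}}$ (each $\lVert\bx_{s,i}\rVert\le 1$ gives $\lambda_{\max}(\sum\bx\bx^{\top})\le Nt$, and $k_{\mu}\ge c_{\mu}$ absorbs the constant; this is the bound one reads off the $k_{\mu}$-smoothness in Lemma \ref{lem:smoothness_constant}). Multiplying $\tfrac{\lambda+k_{\mu}Nt}{c_{\mu}}\cdot\tfrac{2Nt}{\lambda}$ gives exactly $(Nt)^{2}\big(\tfrac{2k_{\mu}}{\lambda c_{\mu}}+\tfrac{2}{Ntc_{\mu}}\big)$, i.e. the first term.

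For the statistical piece I would define $g_{t}(\theta)=\lambda\theta+\sum_{i,s}\mu(\bx_{s,i}^{\top}\theta)\bx_{s,i}$, so stationarity of the MLE gives $g_{t}(\hat{\theta}^{\text{MLE}}_{t})=\sum_{i,s}y_{s,i}\bx_{s,i}$, and with $y_{s,i}=\mu(\bx_{s,i}^{\top}\theta_{\star})+\eta_{s,i}$ this yields $g_{t}(\hat{\theta}^{\text{MLE}}_{t})-g_{t}(\theta_{\star})=\sum_{i,s}\eta_{s,i}\bx_{s,i}-\lambda\theta_{\star}$. By the mean value theorem $g_{t}(\hat{\theta}^{\text{MLE}}_{t})-g_{t}(\theta_{\star})=G(\hat{\theta}^{\text{MLE}}_{t}-\theta_{\star})$ with $G=\lambda I+\sum_{i,s}\bar{\mu}_{s,i}\bx_{s,i}\bx_{s,i}^{\top}$ and $\bar{\mu}_{s,i}\ge c_{\mu}$ on $[-S,S]$ (Assumption \ref{assump:1}, iterates in $\cB_{d}(S)$), hence $G\succeq c_{\mu}A_{t}$. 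Combining $c_{\mu}\lVert \hat{\theta}^{\text{MLE}}_{t}-\theta_{\star}\rVert_{A_{t}}^{2}\le \langle g_{t}(\hat{\theta}^{\text{MLE}}_{t})-g_{t}(\theta_{\star}),\,\hat{\theta}^{\text{MLE}}_{t}-\theta_{\star}\rangle$ with matrix-weighted Cauchy--Schwarz (Lemma \ref{lem:matrix_weighted_cauchy_schwartz}) and cancelling one factor gives $\lVert \hat{\theta}^{\text{MLE}}_{t}-\theta_{\star}\rVert_{A_{t}}\le \tfrac{1}{c_{\mu}}\lVert \sum_{i,s}\eta_{s,i}\bx_{s,i}-\lambda\theta_{\star}\rVert_{A_{t}^{-1}}$. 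I then split: the regularization part $\tfrac{\lambda}{c_{\mu}}\lVert\theta_{\star}\rVert_{A_{t}^{-1}}\le \sqrt{\tfrac{\lambda}{c_{\mu}}}S$ from $A_{t}\succeq\tfrac{\lambda}{c_{\mu}}I$ and $\lVert\theta_{\star}\rVert\le S$ (third term), and the noise part $\tfrac{1}{c_{\mu}}\lVert \sum_{i,s}\eta_{s,i}\bx_{s,i}\rVert_{A_{t}^{-1}}$ via the self-normalized bound (Lemma \ref{lem:self_normalized_bound}) with $V=\tfrac{\lambda}{c_{\mu}}I$ and $R_{\max}$-sub-Gaussian noise, followed by the determinant--trace inequality $\det(A_{t})/\det(\tfrac{\lambda}{c_{\mu}}I)\le(1+Ntc_{\mu}/(d\lambda))^{d}$, giving the second term.

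The hard part will be invoking the self-normalized bound \emph{uniformly over all candidate global-update times}: the event-trigger in Eq.\eqref{eq:event_trigger} makes the index at which a sample enters a client's stream data-dependent, so the naive filtration peeks into the future. As explained in the preamble to Lemma \ref{lem:self_normalized_bound}, I would use the batched filtration $\{\cF_{t_{p}}\}_{p\in[P]}$ (each batch ending at a communication/observation boundary), which preserves the martingale structure of $\sum_{i,s}\eta_{s,i}\bx_{s,i}$ and makes the bound hold simultaneously at every update time $t$ --- exactly what the ``for all $t\in[T]$'' claim requires. A secondary technical point is ensuring the segment between $\theta_{\star}$ and $\hat{\theta}^{\text{MLE}}_{t}$ stays in $\cB_{d}(S)$ so that $\dot{\mu}\ge c_{\mu}$ applies in the mean-value step, which is why working with the projected minimizer (consistent with the $\cB_{d}(S)$ projection in the algorithm) is needed. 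Collecting the three terms and applying the single $1-\delta$-probability self-normalized event completes the bound.
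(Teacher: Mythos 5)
Your proposal is correct, and its statistical half coincides with the paper's own argument: both define $g_{t}(\theta)=\lambda\theta+\sum_{i,s}\mu(\bx_{s,i}^{\top}\theta)\bx_{s,i}$, use the integral/mean-value Hessian bound $G\succcurlyeq c_{\mu}A_{t}$ to pass from the $A_{t}$-norm to $\frac{1}{c_{\mu}}\lVert g_{t}(\cdot)-g_{t}(\cdot)\rVert_{A_{t}^{-1}}$, invoke MLE stationarity to expose $\sum_{i,s}\eta_{s,i}\bx_{s,i}-\lambda\theta_{\star}$, and finish with Lemma \ref{lem:self_normalized_bound} on the batched filtration plus the determinant--trace inequality; your Cauchy--Schwarz-and-cancel step is an equivalent phrasing of the paper's $G_{t}^{-1}\preccurlyeq\frac{1}{c_{\mu}}A_{t}^{-1}$. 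Where you genuinely diverge is the top-level decomposition and the optimization term. The paper never splits $\theta_{t}-\theta_{\star}$ in $\theta$-space: it first converts the whole quantity to $g$-space and applies the triangle inequality to $\lVert g_{t}(\theta_{t})-g_{t}(\theta_{\star})\rVert_{A_{t}^{-1}}$, then controls $\lVert g_{t}(\theta_{t})-g_{t}(\hat{\theta}_{t}^{\text{MLE}})\rVert_{A_{t}^{-1}}$ via the projection property of the AGD output (projection taken in the $g$-metric, Algorithm \ref{alg:agd_update}), the $(k_{\mu}+\frac{\lambda}{Nt})$-smoothness of $F_{t}$ to turn function sub-optimality into a gradient-norm bound, and $\lambda_{\min}(A_{t})\geq\lambda/c_{\mu}$. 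You instead use the $\frac{\lambda}{Nt}$-strong convexity of $F_{t}$ to bound $\lVert\theta_{t}-\hat{\theta}_{t}^{\text{MLE}}\rVert_{2}$ directly and convert norms via $\lambda_{\max}(A_{t})\leq(\lambda+k_{\mu}Nt)/c_{\mu}$. Notably, your route reproduces the stated constant $Nt\sqrt{\frac{2k_{\mu}}{\lambda c_{\mu}}+\frac{2}{Ntc_{\mu}}}\sqrt{\epsilon_{t}}$ exactly, whereas the paper's own proof ends with $2Nt\sqrt{\cdots}\sqrt{\epsilon_{t}}$ (the factor $2$ coming from the projection triangle inequality; that factor is consistent with $B_{2}$ in Lemma \ref{lem:loss_diff} but absent from the statement as quoted). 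The trade-off is that your argument takes the lemma's premise literally---$\epsilon_{t}$ must bound the sub-optimality of the projected model $\theta_{t}$ itself---while the paper's argument only needs sub-optimality of the pre-projection AGD iterate, which is what the convergence guarantee in Eq.\eqref{eq:AGD_J_upper_bound} actually controls, and pays the factor $2$ to absorb the projection. Both versions share the same standard GLB-literature looseness (unconstrained stationarity of $\hat{\theta}_{t}^{\text{MLE}}$ versus keeping the mean-value segment inside $\cB_{d}(S)$), so that is not a gap relative to the paper.
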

\begin{proof}
Recall that the unique minimizer of Eq.\eqref{eq:objective} is denoted as $\hat{\theta}_{t}^{\text{MLE}}$, so by taking gradient w.r.t. $\theta$ we have, $g_{t}(\hat{\theta}_{t}^{\text{MLE}})=\sum_{i=1}^{N} \sum_{s=1}^{t} \bx_{s,i}y_{s,i}$,
where we define $g_{t}(\theta)=\lambda \theta + \sum_{i=1}^{N}\sum_{s=1}^{t}\mu(\bx_{s,i}^{\top}\theta)\bx_{s,i}$.
First, we start with standard arguments \citep{filippi2010parametric,li2017provably} to show that $\lVert \theta_{t} - \theta_{\star} \rVert_{A_{t}} \leq \frac{1}{c_{\mu}} \lVert g_{t}(\theta_{t})-g_{t}(\theta_{\star}) \rVert_{A_{t}^{-1}}$.
Specifically, by Assumption \ref{assump:1} and the Fundamental Theorem of Calculus, we have
$$g_{t}(\theta_{t})-g_{t}(\theta_{\star})=G_{t}(\theta_{t} - \theta_{\star})$$
where $G_{t}=\int_{0}^{1} \nabla g_{t}(a \theta_{t} + (1-a) \theta_{\star}) da$. Again by Assumption \ref{assump:1}, $\nabla g_{t}(\theta)=\lambda I + \sum_{s=1}^{t} \sum_{i=1}^{N} \bx_{s,i}\bx_{s,i}^{\top}\dot{\mu}(\bx_{s,i}^{\top}\theta)$ is continuous, and $\nabla g_{t}(\theta) \succcurlyeq \lambda I + c_{\mu} \sum_{s=1}^{t} \sum_{i=1}^{N}\bx_{s,i}\bx_{s,i}^{\top} \succ 0$ for $\theta \in \cB_{d}(S)$, so $G_{t} \succ 0$, i.e., $G_{t}$ is invertible. Therefore, we have
\begin{align*}
    \theta_{t} - \theta_{\star} = G_{t}^{-1}[g_{t}(\theta_{t})-g_{t}(\theta_{\star})]
\end{align*}
Note that $G_{t} \succcurlyeq \lambda I + c_{\mu} \sum_{s=1}^{t} \sum_{i = 1}^{N} \bx_{s,i} \bx_{s,i}^{\top}=c_{\mu} A_{t}$, so $G_{t}^{-1} \preccurlyeq \frac{1}{c_{\mu}} A_{t}^{-1}$. Hence,
\begin{align} \label{eq:confidence_ellipsoid_global_triangle}
     \lVert \theta_{t} - \theta_{\star} \rVert_{A_{t}} & = \lVert G_{t}^{-1}[g_{t}(\theta_{t})-g_{t}(\theta_{\star})]  \rVert_{A_{t}}  \leq \lVert \frac{1}{c_{\mu}} A_{t}^{-1}[g_{t}(\theta_{t})-g_{t}(\theta_{\star})]  \rVert_{A_{t}} = \frac{1}{c_{\mu}} \lVert g_{t}(\theta_{t})-g_{t}(\theta_{\star}) \rVert_{A_{t}^{-1}} \nonumber\\
    & \leq \frac{1}{c_{\mu}} \lVert g_{t}(\theta_{t})-g_{t}(\hat{\theta}_{t}^{\text{MLE}}) \rVert_{A_{t}^{-1}} + \frac{1}{c_{\mu}} \lVert g_{t}(\hat{\theta}_{t}^{\text{MLE}}) - g_{t}(\theta_{\star})\rVert_{A_{t}^{-1}}
\end{align}
where the first term depends on the sub-optimality of the offline regression estimator $\theta_{t}$ to the unique minimizer $\hat{\theta}^{(\text{MLE})}_{t}$, and the second term is standard for GLB \citep{li2017provably}.

Recall from Algorithm \ref{alg:agd_update} that $\theta_{t}=\argmin_{\theta \in \cB_{d}(S)} \lVert g_{t}(\tilde{\theta}_{t}) -g_{t}(\theta) \rVert_{A_{t}^{-1}}$, where $\tilde{\theta}_{t}$ denotes the AGD estimator before projection.
Therefore, for the first term, using triangle inequality and the definition of $g_{t}(\cdot)$, we have
\begin{align*}
    & ||g_{t}(\theta_{t})- g_{t}(\hat{\theta}^{(\text{MLE})}_{t})||_{A_{t}^{-1}} \leq ||g_{t}(\theta_{t})- g_{t}(\tilde{\theta}_{t})||_{A_{t}^{-1}} + ||g_{t}(\tilde{\theta}_{t})- g_{t}(\hat{\theta}^{(\text{MLE})}_{t})||_{A_{t}^{-1}} \\
    & \leq 2 ||g_{t}(\tilde{\theta}_{t})- g_{t}(\hat{\theta}^{(\text{MLE})}_{t})||_{A_{t}^{-1}} = 2||\lambda \theta_{t} + \sum_{s=1}^{t}\sum_{i=1}^{N}\bx_{s,i}\mu(\bx_{s,i}^{\top}\theta_{t})-\sum_{s=1}^{t}\sum_{i=1}^{N}\bx_{s,i}y_{s,i}||_{A_{t}^{-1}} \\
    & = 2|| \sum_{s=1}^{t}\sum_{i=1}^{N} \bx_{s,i}[-y_{i,s}+\mu(\bx_{s,i}^{\top}\theta_{t})] + \lambda \theta_{t}||_{A_{t}^{-1}} = 2||N t \nabla F_{t}(\theta_{t})||_{A_{t}^{-1}}
\end{align*}
where the last equality is due to the definition of $F_{t}(\theta)$ in Eq.\eqref{eq:objective}.
We can further bound it using the property of Rayleigh quotient and the fact that $A_{t} \succcurlyeq \frac{\lambda}{c_{\mu}} I$, which gives us
\begin{align*}
    ||g_{t}(\theta_{t})- g_{t}(\hat{\theta}^{(\text{MLE})}_{t})||_{A_{t}^{-1}} & \leq \frac{2 N t ||\nabla F_{t}(\theta_{t})||_{2}}{\sqrt{\lambda_{\text{min}}(A_{t})}} \leq \frac{2 N t||\nabla F_{t}(\theta_{t})||_{2}}{\sqrt{\lambda/c_{\mu}}}
\end{align*}
Based on Lemma \ref{lem:smoothness_constant}, $F_{t}(\theta)$ is $(k_{\mu}+\frac{\lambda}{N t})$-smooth, which means
\begin{align*}
    \frac{1}{2 k_{\mu}+{2\lambda}/(N t)}  \lVert \nabla F_{t}(\theta_{t})\rVert^{2} \leq F_{t}(\theta_{t}) - F_{t}(\hat{\theta}^{\text{MLE}}_{t})  \leq \epsilon_{t}
\end{align*}
where the second inequality is by definition of $\epsilon_{t}$.
Putting everything together, we have the following bound for the first term
\begin{align*}
    \frac{1}{c_{\mu}} \lVert g_{t}(\theta_{t})-g_{t}(\hat{\theta}_{t}^{\text{MLE}}) \rVert_{A_{t}^{-1}} \leq 2 N t \sqrt{\frac{2 k_{\mu} }{\lambda c_{\mu}} + \frac{2}{N t c_{\mu}}} \sqrt{\epsilon_{t}}
\end{align*}
For the second term, similarly, based on the definition of $g_{t}(\cdot)$, we have
\begin{align*}
    & \frac{1}{c_{\mu}}||g_{t}(\hat{\theta}_{t}^{\text{MLE}})-g_{t}(\theta_{\star})||_{A_{t}^{-1}}  \\
    & = \frac{1}{c_{\mu}}||\sum_{s=1}^{t} \sum_{i=1}^{N} \bx_{s,i}y_{s,i} - \sum_{s=1}^{t} \sum_{i=1}^{N}\mu(\bx_{s,i}^{\top}\theta_{\star})\bx_{s,i} -\lambda \theta_{\star}||_{A_{t}^{-1}} \\
    & \leq \frac{1}{c_{\mu}} ||\sum_{s=1}^{t} \sum_{i=1}^{N} \bx_{s,i} \eta_{s,i}||_{A_{t}^{-1}} + \sqrt{\frac{\lambda}{c_{\mu}}}S
\end{align*}
Then based on the self-normalized bound in Lemma \ref{lem:self_normalized_bound} (Theorem 1 of \citep{abbasi2011improved}), we have $||\sum_{s=1}^{t} \sum_{i=1}^{N} \bx_{s,i} \eta_{s,i}||_{A_{t}^{-1}} \leq R_{max} \sqrt{d\log{(1+{N t c_{\mu}}/{d \lambda})}+2\log{({1}/{\delta})}}, \forall t$, with probability at least $1-\delta$.

Substituting the upper bounds for these two terms back into Eq.\eqref{eq:confidence_ellipsoid_global_triangle}, we have, with probability at least $1-\delta$,
\begin{equation*}
\begin{split}
     \lVert \theta_{t} - \theta_{\star} \rVert_{A_{t}} & \leq \frac{1}{c_{\mu}} \lVert g_{t}(\theta_{t})-g_{t}(\hat{\theta}_{t}^{\text{(MLE)}}) \rVert_{A_{t}^{-1}} + \frac{1}{c_{\mu}} \lVert g_{t}(\hat{\theta}_{t}^{\text{(MLE)}}) - g_{t}(\theta_{\star})\rVert_{A_{t}^{-1}} \\
    & \leq 2 N t \sqrt{\frac{2 k_{\mu} }{\lambda c_{\mu}} + \frac{2}{N t c_{\mu}}} \sqrt{\epsilon_{t}} + \frac{R_{max}}{c_{\mu}} \sqrt{d\log{(1+{N t c_{\mu}}/{(d \lambda)})}+2\log{({1}/{\delta})}} + \sqrt{\frac{\lambda}{c_{\mu}}}S
\end{split}
\end{equation*}
which finishes the proof.
\end{proof}

\section{Proof of Lemma \ref{lem:loss_diff}} \label{sec:prove_loss_diff_bound}
\begin{proof}
Denote the two terms for loss difference as $A_{1}=\sum_{s=1}^{t_\text{last}} \sum_{i=1}^{N} \bigl[ l(\bx_{s,i}^{\top} \theta_{t_\text{last}}, y_{s,i}) - l(\bx_{s,i}^{\top} \theta_{\star}, y_{s,i}) \bigr]$, and $A_{2}=\sum_{s=t_\text{last}+1}^{t} \bigl[ l(\bx_{s,i}^{\top} \theta_{s-1,i}, y_{s,i}) - l(\bx_{s,i}^{\top} \theta_{\star}, y_{s,i}) \bigr]$.
We can upper bound the term $A_{1}$ by
\begin{align*}
    A_{1} & = \sum_{s=1}^{t_\text{last}} \sum_{i=1}^{N} \bigl[ l(\bx_{s,i}^{\top} \theta_{t_\text{last}}, y_{s,i}) - l(\bx_{s,i}^{\top} \theta_{\star}, y_{s,i}) \bigr]  - \frac{\lambda}{2} \lVert \theta_{\star} \rVert_{2}^{2} + \frac{\lambda}{2} \lVert \theta_{\star} \rVert_{2}^{2} \\
    &  \leq \sum_{s=1}^{t_\text{last}} \sum_{i=1}^{N} l(\bx_{s,i}^{\top} \theta_{t_\text{last}}, y_{s,i}) - \sum_{s=1}^{t_\text{last}} \sum_{i=1}^{N} l(\bx_{s,i}^{\top} \hat{\theta}^{\text{MLE}}_{t_\text{last}}, y_{s,i})  - \frac{\lambda}{2} \lVert \hat{\theta}^{\text{MLE}}_{t_\text{last}} \rVert_{2}^{2} + \frac{\lambda}{2} \lVert \theta_{\star} \rVert_{2}^{2}  \\
    & \leq \sum_{s=1}^{t_\text{last}} \sum_{i=1}^{N} l(\bx_{s,i}^{\top} \theta_{t_\text{last}}, y_{s,i}) + \frac{\lambda}{2} \lVert \theta_{t_\text{last}} \rVert_{2}^{2} - \sum_{s=1}^{t_\text{last}} \sum_{i=1}^{N} l(\bx_{s,i}^{\top} \hat{\theta}^{\text{MLE}}_{t_\text{last}}, y_{s,i}) - \frac{\lambda}{2} \lVert \hat{\theta}^{\text{MLE}}_{t_\text{last}} \rVert_{2}^{2} + \frac{\lambda}{2} S^{2}\\
    & \leq N t_\text{last} \epsilon_{t_\text{last}} + \frac{\lambda}{2} S^{2}  := B_{1}
\end{align*}
where the first inequality is because $\hat{\theta}^{\text{MLE}}_{t_\text{last}}$ minimizes Eq.\eqref{eq:objective}, such that $\sum_{s=1}^{t_\text{last}}l(\bx_{s,i}^{\top} \hat{\theta}^{\text{MLE}}_{t_\text{last}}, y_{s,i}) + \frac{\lambda}{2}\lVert \hat{\theta}^{\text{MLE}}_{t_\text{last}} \rVert_{2}^{2} \leq \sum_{s=1}^{t_\text{last}}l(\bx_{s,i}^{\top} \theta, y_{s,i}) + \frac{\lambda}{2}\lVert \theta \rVert_{2}^{2}$ for any $\theta \in \cB_{d}(S)$, and the last inequality is because $F_{t_\text{last}}(\theta_{t_\text{last}})-  F_{t_\text{last}}(\hat{\theta}^{\text{MLE}}_{t_\text{last}}) \leq \epsilon_{t_\text{last}}$ by definition.


Now we start with standard arguments \citep{jun2017scalable,zhang2016online} in order to bound the term $A_{2}$, which is essentially the online regret of ONS, except that its initial model is the globally updated model $\theta_{t_\text{last}}$.
First, since $l(z,y)$ is $c_{\mu}$-strongly-convex w.r.t. $z$, we have
\begin{equation} \label{eq:strong_convex_ineq}
    l(\bx_{s,i}^{\top}\theta_{s-1,i},y_{s,i}) - l(\bx_{s,i}^{\top}\theta_{\star},y_{s,i}) \leq [\mu(\bx_{s,i}^{\top}\theta_{s-1,i})-y_{s,i}] \bx_{s,i}^{\top} (\theta_{s-1,i}-\theta_{\star}) -\frac{c_{\mu}}{2} ||\theta_{s-1,i} - \theta_{\star}||^{2}_{\bx_{s,i}\bx_{s,i}^{\top}}
\end{equation}
To further bound the RHS of Eq.\eqref{eq:strong_convex_ineq}, recall from the ONS local update rule in Algorithm \ref{alg:ons_update} that, for each client $i\in[N]$ at the end of each time step $s \in [t_\text{last}+1,t]$,
\begin{align*}
    & \theta_{s,i}^{\prime} = \theta_{s-1,i} - \frac{1}{c_{\mu}} A_{s,i}^{-1} \nabla l(\bx_{s,i}^{\top} \theta_{s-1,i}, y_{s,i})  \\
    & \theta_{s,i} = \argmin_{\theta \in \cB_{d}(S)} ||\theta_{s,i}^{\prime} - \theta||^{2}_{A_{s,i}}
\end{align*}
Then due to the property of generalized projection (Lemma 8 of \citep{hazan2007logarithmic}), we have
\begin{align*}
    & \lVert \theta_{s,i} - \theta_{\star} \rVert^{2}_{A_{s,i}} \\
    & \leq \lVert \theta_{s-1,i} - \theta_{\star} - \frac{1}{c_{\mu}} A_{s,i}^{-1} \nabla l(\bx_{s,i}^{\top} \theta_{s-1,i}, y_{s,i}) \rVert^{2}_{A_{s,i}} \\
    & \leq \lVert \theta_{s-1,i} - \theta_{\star} \rVert^{2}_{A_{s,i}} - \frac{2}{c_{\mu}} \nabla l(\bx_{s,i}^{\top} \theta_{s-1,i}, y_{s,i})^{\top} (\theta_{s-1,i}-\theta_{\star}) + \frac{1}{c_{\mu}^{2}} \lVert \nabla l(\bx_{s,i}^{\top} \theta_{s-1,i}, y_{s,i}) \rVert^{2}_{A_{s,i}^{-1}}
\end{align*}
By rearranging terms, we have
\small
\begin{align*}
    & \nabla l(\bx_{s,i}^{\top} \theta_{s-1,i}, y_{s,i})^{\top} (\theta_{s-1,i}-\theta_{\star}) \\
    & \leq \frac{1}{2c_{\mu}} \lVert \nabla l(\bx_{s,i}^{\top} \theta_{s-1,i}, y_{s,i}) \rVert^{2}_{A_{s,j}^{-1}} + \frac{c_{\mu}}{2} \bigl( \lVert \theta_{s-1,i} - \theta_{\star} \rVert^{2}_{A_{s,i}} -  \lVert \theta_{s,i} - \theta_{\star} \rVert^{2}_{A_{s,i}} \bigr)  \\
    & = \frac{1}{2c_{\mu}} \lVert \nabla l(\bx_{s,i}^{\top} \theta_{s-1,i}, y_{s,i}) \rVert^{2}_{A_{s,i}^{-1}} + \frac{c_{\mu}}{2} \lVert \theta_{s-1,i} - \theta_{\star} \rVert^{2}_{A_{s-1,i}} \\
    & \quad + \frac{c_{\mu}}{2} \bigl( \lVert \theta_{s-1,i} - \theta_{\star} \rVert^{2}_{A_{s,i}} - \lVert \theta_{s-1,i} - \theta_{\star} \rVert^{2}_{A_{s-1,i}}  \bigr)  - \frac{c_{\mu}}{2} \lVert \theta_{s,i}- \theta_{\star} \rVert^{2}_{A_{s,i}}  \\
    & = \frac{1}{2c_{\mu}} \lVert \nabla l(\bx_{s,i}^{\top} \theta_{s-1,i}, y_{s,i}) \rVert^{2}_{A_{s,i}^{-1}} + \frac{c_{\mu}}{2} \lVert \theta_{s-1,i} - \theta_{\star} \rVert^{2}_{A_{s-1,i}} + \frac{c_{\mu}}{2} \lVert \theta_{s-1,i} - \theta_{\star} \rVert^{2}_{\bx_{s,i} \bx_{s,i}^{\top}} - \frac{c_{\mu}}{2} \lVert \theta_{s,i} - \theta_{\star} \rVert^{2}_{A_{s,i}}
\end{align*}
\normalsize
Note that $\nabla l(\bx_{s,i}^{\top} \theta_{s-1,i}, y_{s,i})=\bx_{s,i}[\mu(\bx_{s,i}^{\top}\theta_{s-1,i})-y_{s,i}]$, so with the inequality above, we can further bound the RHS of Eq.\eqref{eq:strong_convex_ineq}:
\begin{align*}
  & l(\bx_{s,i}^{\top}\theta_{s-1,i},y_{s,i}) - l(\bx_{s,i}^{\top}\theta_{\star},y_{s,i}) \leq [\mu(\bx_{s,i}^{\top}\theta_{s-1,i})-y_{s,i}] \bx_{s,i}^{\top} (\theta_{s-1,i}-\theta_{\star}) -\frac{c_{\mu}}{2} ||\theta_{s-1,i} - \theta_{\star}||^{2}_{\bx_{s,i}\bx_{s,i}^{\top}} \\
    & \leq \frac{1}{2c_{\mu}} \lVert \nabla l(\bx_{s,i}^{\top} \theta_{s-1,i}, y_{s,i}) \rVert^{2}_{A_{s,i}^{-1}} + \frac{c_{\mu}}{2} \lVert \theta_{s-1,i} - \theta_{\star} \rVert^{2}_{A_{s-1,i}} - \frac{c_{\mu}}{2} \lVert \theta_{s,i} - \theta_{\star} \rVert^{2}_{A_{s,i}}
\end{align*}
Then summing over $s \in [t_\text{last}+1,t]$, we have
\begin{align*}
    A_{2} \leq \frac{1}{2 c_{\mu}} \sum_{s=t_\text{last}+1}^{t} \lVert \nabla l(\bx_{s,i}^{\top} \theta_{s-1,i}, y_{s,i}) \rVert^{2}_{A_{s,i}^{-1}} + \frac{c_{\mu}}{2} \lVert \theta_{t_\text{last},i} - \theta_{\star} \rVert^{2}_{A_{t_\text{last},i}} - \frac{c_{\mu}}{2} \lVert \theta_{t,i} - \theta_{\star} \rVert^{2}_{A_{t,i}} 
\end{align*}
where $A_{t_\text{last},i}=A_{t_\text{last}}, \theta_{t_\text{last},i}=\theta_{t_\text{last}}, \forall i \in [N]$ due to the global update (line 15 in Algorithm \ref{algo:3}).

We should note that the second term above itself essentially corresponds to a confidence ellipsoid centered at the globally updated model $\theta_{t_\text{last}}$, and its appearance in the upper bound for the loss difference (online regret) of local updates is because the local update is initialized by $\theta_{t_\text{last}}$. And based on Lemma \ref{lem:global_model_confidence_ellipsoid}, with probability at least $1-\delta$,
\begin{align*}
  \lVert \theta_{t_\text{last},i} - \theta_{\star} \rVert_{A_{t_\text{last},i}}  & \leq  2 N t_\text{last} \sqrt{\frac{2 k_{\mu} }{\lambda c_{\mu}} + \frac{2}{N t_\text{last} c_{\mu}}} \sqrt{\epsilon_{t_\text{last}}} \\
  & \quad + \frac{1}{c_{\mu}} R_{\max} \sqrt{d\log{(1+{N t_\text{last} c_{\mu}}/{d \lambda})}+2\log{({1}/{\delta})}} + \sqrt{\frac{\lambda}{c_{\mu}}}S
\end{align*}
Therefore, with probability at least $1-\delta$,
\begin{align*}
    A_{2} & \leq \frac{1}{2 c_{\mu}} \sum_{s=t_\text{last}+1}^{t} \lVert \nabla l(\bx_{s,i}^{\top} \theta_{s-1,i}, y_{s,i}) \rVert^{2}_{A_{s,i}^{-1}}  + \frac{c_{\mu}}{2} \bigl[ 2 N t_\text{last} \sqrt{\frac{2 k_{\mu} }{\lambda c_{\mu}} + \frac{2}{N t_\text{last} c_{\mu}}} \sqrt{\epsilon_{t_\text{last}}} \\
    & \quad + \frac{1}{c_{\mu}} R_{\max} \sqrt{d\log{(1+{N t_\text{last} c_{\mu}}/{d \lambda})}+2\log{({1}/{\delta})}} + \sqrt{\frac{\lambda}{c_{\mu}}}S  \bigr]^{2} :=B_{2} 
\end{align*}
which finishes the proof for Lemma \ref{lem:loss_diff}.

\end{proof}

\section{Proof of Lemma \ref{lem:confidence_ellipsoid_fedglb} and Corollary \ref{corollary:order_ellipsoid}} \label{sec:prove_confidence_ellipsoid}
\begin{proof}[Proof of Lemma \ref{lem:confidence_ellipsoid_fedglb}]
Due to $c_{\mu}$-strongly convexity of $l(z,y)$ w.r.t. $z$, we have $l(\bx_{s,i}^{\top}\theta,y_{s,i}) - l(\bx_{s,i}^{\top}\theta_{\star},y_{s,i}) \geq \bigl[ \mu(\bx_{s,i}^{\top}\theta_{\star})-y_{s,i} \bigr]\bx_{s,i}^{\top}(\theta-\theta_{\star}) + \frac{c_{\mu}}{2}
\bigl[\bx_{s,i}^{\top}(\theta-\theta_{\star})\bigr]^{2}$. Substituting this to the LHS of Eq.\eqref{eq:loss_diff_1} and Eq.\eqref{eq:loss_diff_2}, we have
\begin{align*}
 B_{1} & \geq \sum_{s=1}^{t_\text{last}} \sum_{i=1}^{N} \bigl[ l(\bx_{s,i}^{\top} \theta_{t_\text{last}}, y_{s,i}) - l(\bx_{s,i}^{\top} \theta_{\star}, y_{s,i}) \bigr] \\
    & \geq \sum_{s=1}^{t_\text{last}} \sum_{i=1}^{N} \bigl[ \mu(\bx_{s,i}^{\top}\theta_{\star})-y_{s} \bigr]\bx_{s,i}^{\top}(\theta_{t_\text{last}}-\theta_{\star}) + \frac{c_{\mu}}{2} \sum_{s=1}^{t_\text{last}} \sum_{i=1}^{N} \bigl[\bx_{s,i}^{\top}(\theta_{t_\text{last}}-\theta_{\star})\bigr]^{2} \\
 B_{2}  & \geq  \sum_{s=t_\text{last}+1}^{t} \bigl[ l(\bx_{s,i}^{\top} \theta_{s-1,i}, y_{s,i}) - l(\bx_{s,i}^{\top} \theta_{\star}, y_{s,i}) \bigr] \\
    & \geq \sum_{s=t_\text{last}+1}^{t} \bigl[ \mu(\bx_{s,i}^{\top}\theta_{\star})-y_{s} \bigr]\bx_{s,i}^{\top}(\theta_{s-1,i}-\theta_{\star}) + \frac{c_{\mu}}{2} \sum_{s=t_\text{last}+1}^{t} \bigl[\bx_{s,i}^{\top}(\theta_{s-1,i}-\theta_{\star})\bigr]^{2}
\end{align*}
By rearranging the terms, we have
\begin{align*}
     \sum_{s=1}^{t_\text{last}} \sum_{i=1}^{N} \bigl[\bx_{s,i}^{\top}(\theta_{t_\text{last}}-\theta_{\star})\bigr]^{2} &\leq  \frac{2}{c_{\mu}} B_{1} + \frac{2}{c_{\mu}}\sum_{s=1}^{t_\text{last}} \sum_{i=1}^{N} \eta_{s,i} \bx_{s,i}^{\top}(\theta_{t_\text{last}}-\theta_{\star})  \\
     \sum_{s=t_\text{last}+1}^{t} \bigl[\bx_{s,i}^{\top}(\theta_{s-1,i}-\theta_{\star})\bigr]^{2} & \leq  \frac{2}{c_{\mu}} B_{2} + \frac{2}{c_{\mu}}\sum_{s=t_\text{last}+1}^{t} \eta_{s,i} \bx_{s,i}^{\top}(\theta_{s-1,i}-\theta_{\star})
\end{align*}
where the LHS is quadratic in $\theta_{\star}$. For the RHS, we will further upper bound the second term as shown below. 

\noindent\textbf{$\bullet$ Upper Bound for $\sum_{s=t_\text{last}+1}^{t} \bigl[\bx_{s,i}^{\top}(\theta_{s-1,i}-\theta_{\star})\bigr]^{2}$}
Note that $\bx_{s,i}^{\top}(\theta_{s-1,i}-\theta_{\star})$ is $\cF_{s,i}$-measurable, and $\eta_{s,i}$ is $\cF_{s+1,i}$-measurable and conditionally $R_{max}$-sub-Gaussian. By applying Lemma \ref{lem:uniform_self_normalized} (Corollary 8 of \citep{abbasi2012online}) w.r.t. client $i$'s filtration $\{\cF_{s,i}\}_{s=t_\text{last}+1}^{\infty}$, where $\cF_{s,i}=\sigma\bigl( [\bx_{k,j}, \eta_{k,j}]_{k,j: k\leq t_\text{last} \cap j \leq N}, [\bx_{k,j}, \eta_{k,j}]_{k,j: t_\text{last}+1 \leq k \leq s-1 \cap j=i}, \bx_{s,i} \bigr)$, and taking union bound over all $i \in [N]$, with probability at least $1-\delta$, for all $t \in [T], i\in [N]$,
\small
\begin{align*}
     \sum_{s=t_\text{last}+1}^{t} \eta_{s,i} \bx_{s,i}^{\top}&(\theta_{s-1,i}-\theta_{\star})  \leq \\
    & \quad R_{\max} \sqrt{2\bigl(1+ \sum_{s=t_\text{last}+1}^{t} \bigl[\bx_{s,i}^{\top}(\theta_{s-1,i}-\theta_{\star})\bigr]^{2}  \bigr) \cdot \log\bigl( \frac{N}{\delta} \sqrt{1+ \sum_{s=t_\text{last}+1}^{t} \bigl[\bx_{s,i}^{\top}(\theta_{s-1,i}-\theta_{\star})\bigr]^{2}} \bigr)}
\end{align*}
\normalsize
Therefore, 
\small
\begin{equation}
\begin{split}
    & 1+\sum_{s=t_\text{last}+1}^{t} \bigl[\bx_{s,i}^{\top}(\theta_{s-1,i}-\theta_{\star})\bigr]^{2} \leq  1+\frac{2}{c_{\mu}} B_{2} \\
    & + \frac{2R_{\max}}{c_{\mu}} \sqrt{2\bigl(1+ \sum_{s=t_\text{last}+1}^{t} \bigl[\bx_{s,i}^{\top}(\theta_{s-1,i}-\theta_{\star})\bigr]^{2}  \bigr) \cdot \log\bigl( \frac{N}{\delta} \sqrt{1+ \sum_{s=t_\text{last}+1}^{t} \bigl[\bx_{s,i}^{\top}(\theta_{s-1,i}-\theta_{\star})\bigr]^{2}} \bigr)}
\end{split}
\end{equation}
\normalsize
Then by applying Lemma 2 of \citep{jun2017scalable}, i.e., if $q^{2} \leq a + fq \sqrt{\log(\frac{q}{\delta/N})}$ then $q^{2} \leq 2a + f^{2} \log(\frac{\sqrt{4a+f^{4}/(4\delta^{2})}}{\delta/N})$ (for $a,f \geq 0, q\geq 1$). And by setting $q=\sqrt{1+\sum_{s=t_\text{last}+1}^{t} \bigl[\bx_{s,i}^{\top}(\theta_{s-1,i}-\theta_{\star})\bigr]^{2}}$, $a=1+\frac{2}{c_{\mu}}B_{2},f=\frac{2\sqrt{2}R_{\max}}{c_{\mu}}$, we have
\begin{equation}
    \sum_{s=t_\text{last}+1}^{t} \bigl[\bx_{s,i}^{\top}(\theta_{s-1,i}-\theta_{\star})\bigr]^{2} \leq 1+ \frac{4 B_{2}}{c_{\mu}} + \frac{8 R_{\max}^{2}}{c_{\mu}^{2}} \log\Bigg( \frac{N}{\delta} \sqrt{4+\frac{8}{c_{\mu}}B_{2} + \frac{64 R_{\max}^{4}}{c_{\mu}^{4}\cdot 4 \delta^{2}}}  \Bigg), \forall t,i
\end{equation}
with probability at least $1-\delta$.

\noindent\textbf{$\bullet$ Upper Bound for $\sum_{s=1}^{t_\text{last}} \sum_{i=1}^{N} \bigl[\bx_{s,i}^{\top}(\theta_{t_\text{last}}-\theta_{\star})\bigr]^{2}$}
Note that $\theta_{t_\text{last}}$ depends on all data samples in $\{(\bx_{s,i}, y_{s,i})\}_{s \in [t_\text{last}]}$ as a result of the offline regression method, and therefore $\bx_{s,i}^{\top}(\theta_{t_\text{last}}-\theta_{\star})$ is no longer $\cF_{s,i}$-measurable for $s \in [1,t_\text{last})$. Hence, we cannot use Lemma \ref{lem:uniform_self_normalized} as before. Instead, we have
\small
\begin{align*}
    & \sum_{s=1}^{t_\text{last}} \sum_{i=1}^{N} \eta_{s,i} \bx_{s,i}^{\top}(\theta_{t_\text{last}}-\theta_{\star}) = \bigl( \sum_{s=1}^{t_\text{last}} \sum_{i=1}^{N} \eta_{s,i} \bx_{s,i} \bigr)^{\top} (\theta_{t_\text{last}}-\theta_{\star}) \\
    & = \bigl( \sum_{s=1}^{t_\text{last}} \sum_{i=1}^{N} \eta_{s,i} \bx_{s,i} \bigr)^{\top} (I + \sum_{s=1}^{t_\text{last}} \sum_{i=1}^{N} \bx_{s,i}\bx_{s,i}^{\top})^{-1}(I + \sum_{s=1}^{t_\text{last}} \sum_{i=1}^{N} \bx_{s,i}\bx_{s,i}^{\top}) (\theta_{t_\text{last}}-\theta_{\star})  \\
    & \leq \sqrt{\bigl( \sum_{s=1}^{t_\text{last}} \sum_{i=1}^{N} \eta_{s,i} \bx_{s,i} \bigr)^{\top} (I + \sum_{s=1}^{t_\text{last}} \sum_{i=1}^{N} \bx_{s,i}\bx_{s,i}^{\top})^{-1} \bigl( \sum_{s=1}^{t_\text{last}} \sum_{i=1}^{N} \eta_{s,i} \bx_{s,i} \bigr) \cdot (\theta_{t_\text{last}}-\theta_{\star})^{\top} (I + \sum_{s=1}^{t_\text{last}} \sum_{i=1}^{N} \bx_{s,i}\bx_{s,i}^{\top}) (\theta_{t_\text{last}}-\theta_{\star})}  \\
    & = \sqrt{\lVert \sum_{s=1}^{t_\text{last}} \sum_{i=1}^{N} \eta_{s,i} \bx_{s,i} \rVert_{(I + \sum_{s=1}^{t_\text{last}} \sum_{i=1}^{N} \bx_{s,i}\bx_{s,i}^{\top})^{-1}}^{2} \cdot \lVert \theta_{t_\text{last}}-\theta_{\star} \rVert_{(I + \sum_{s=1}^{t_\text{last}} \sum_{i=1}^{N} \bx_{s,i}\bx_{s,i}^{\top})}^{2} }  \\
    & \leq R_{\max} \sqrt{ 2 \log\bigl(\frac{1}{\delta} \sqrt{\det(I + \sum_{s=1}^{t_\text{last}} \sum_{i=1}^{N} \bx_{s,i}\bx_{s,i}^{\top})} \bigr) \cdot \lVert \theta_{t_\text{last}}-\theta_{\star} \rVert_{(I + \sum_{s=1}^{t_\text{last}} \sum_{i=1}^{N} \bx_{s,i}\bx_{s,i}^{\top})}^{2} },
\end{align*}
\normalsize
with probability at least $1-\delta$, where the first inequality is due to the matrix-weighted Cauchy-Schwarz inequality in Lemma \ref{lem:matrix_weighted_cauchy_schwartz}, such that $x^{\top} A^{-1} A y \leq \sqrt{x^{\top}A^{-1}x \cdot y^{\top} A^{\top} A^{-1} A y} = \sqrt{x^{\top}A^{-1}x \cdot y^{\top} A y}$ for symmetric PD matrix $A$, and the second inequality is obtained by applying the self-normalized bound in Lemma \ref{lem:self_normalized_bound} w.r.t. the filtration $\{\cF_{s}\}_{s \in \{t_{p}\}_{p=1}^{B}}$, where $\cF_{s}=\sigma\bigl( [\bx_{k,j}, \eta_{k,j}]_{k,j:k \leq s-1 \cap j \leq N}, [\bx_{k,j}, \eta_{k,j}]_{k,j:k = s \cap j \leq N-1}, \bx_{s,N} \bigr)$ and $\{t_{p}\}_{p=1}^{B}$ denotes the sequence of time steps when global update happens, and $B$ denotes the total number of global updates.

By substituting it back, we have
\small
\begin{equation}
\begin{split}
    & \sum_{s=1}^{t_\text{last}} \sum_{i=1}^{N} \bigl[\bx_{s,i}^{\top}(\theta_{t_\text{last}}-\theta_{\star})\bigr]^{2}  \\
    & \leq  \frac{2}{c_{\mu}} B_{1} + \frac{2R_{\max}}{c_{\mu}} \sqrt{ 2 \log\bigl(\frac{1}{\delta} \sqrt{\det(I + \sum_{s=1}^{t_\text{last}} \sum_{i=1}^{N} \bx_{s,i}\bx_{s,i}^{\top})} \bigr) \cdot \lVert \theta_{t_\text{last}}-\theta_{\star} \rVert_{I + \sum_{s=1}^{t_\text{last}} \sum_{i=1}^{N} \bx_{s,i}\bx_{s,i}^{\top}}^{2} }  \\
    & \leq \frac{2}{c_{\mu}} B_{1} + \frac{2R_{\max}}{c_{\mu}} \sqrt{ 2 \log\bigl(\frac{1}{\delta} \sqrt{\det(I + \sum_{s=1}^{t_\text{last}} \sum_{i=1}^{N} \bx_{s,i}\bx_{s,i}^{\top})} \bigr) \cdot \bigl( \sum_{s=1}^{t_\text{last}} \sum_{i=1}^{N} \bigl[\bx_{s,i}^{\top}(\theta_{t_\text{last}}-\theta_{\star})\bigr]^{2} + \lVert \theta_{t_\text{last}}-\theta_{\star} \rVert_{2}^{2}  \bigr) }
\end{split}
\end{equation}
\normalsize
Then by applying the Proposition 9 of \citep{abbasi2012online}, i.e. if $z^{2} \leq a + bz$ then $z \leq b + \sqrt{a}$ (for $a,b \geq 0$), and setting $z=\sqrt{\sum_{s=1}^{t_\text{last}} \sum_{i=1}^{N} \bigl[\bx_{s,i}^{\top}(\theta_{t_\text{last}}-\theta_{\star})\bigr]^{2} + \lVert \theta_{t_\text{last}}-\theta_{\star} \rVert_{2}^{2} }, a=\lVert \theta_{t_\text{last}}-\theta_{\star} \rVert_{2}^{2}+\frac{2}{c_{\mu}}B_{1},b=\frac{2R_{\max}}{c_{\mu}} \sqrt{ 2 \log\bigl(\frac{1}{\delta} \sqrt{\det(I + \sum_{s=1}^{t_\text{last}} \sum_{i=1}^{N} \bx_{s,i}\bx_{s,i}^{\top})} \bigr)}$,we have
\begin{equation}
\begin{split}
    &\sqrt{\sum_{s=1}^{t_\text{last}} \sum_{i=1}^{N} \bigl[\bx_{s,i}^{\top}(\theta_{t_\text{last}}-\theta_{\star})\bigr]^{2} + \lVert \theta_{t_\text{last}}-\theta_{\star} \rVert_{2}^{2} }  \\
    \leq &\frac{2R_{\max}}{c_{\mu}} \sqrt{2 \log\bigl(\frac{1}{\delta} \sqrt{\det(I + \sum_{s=1}^{t_\text{last}} \sum_{i=1}^{N} \bx_{s,i}\bx_{s,i}^{\top})} \bigr)} 
    + \sqrt{\lVert \theta_{t_\text{last}}-\theta_{\star} \rVert_{2}^{2} + B_{1}}
\end{split}
\end{equation}
Taking square on both sides, and rearranging terms, we have
\small
\begin{equation}
\begin{split}
     &\sum_{s=1}^{t_\text{last}} \sum_{i=1}^{N} \bigl[\bx_{s,i}^{\top}(\theta_{t_\text{last}}-\theta_{\star})\bigr]^{2} \\
      \leq&  \frac{8R_{\max}^{2}}{c_{\mu}^{2}} \log\bigl(\frac{1}{\delta} \sqrt{\det(I + \sum_{s=1}^{t_\text{last}} \sum_{i=1}^{N} \bx_{s,i}\bx_{s,i}^{\top})} \bigr) + B_{1} \\
    & + \frac{4R_{\max}}{c_{\mu}} \sqrt{2 \log\bigl(\frac{1}{\delta} \sqrt{\det(I + \sum_{s=1}^{t_\text{last}} \sum_{i=1}^{N} \bx_{s,i}\bx_{s,i}^{\top})} \bigr)}\sqrt{\lVert \theta_{t_\text{last}}-\theta_{\star} \rVert_{2}^{2} + B_{1}} \\
     \leq& \frac{8R_{\max}^{2}}{c_{\mu}^{2}} \log\bigl(\frac{1}{\delta} \sqrt{\det(I + \sum_{s=1}^{t_\text{last}} \sum_{i=1}^{N} \bx_{s,i}\bx_{s,i}^{\top})} \bigr) + B_{1} \\
    & + \frac{4R_{\max}}{c_{\mu}} \sqrt{2 \log\bigl(\frac{1}{\delta} \sqrt{\det(I + \sum_{s=1}^{t_\text{last}} \sum_{i=1}^{N} \bx_{s,i}\bx_{s,i}^{\top})} \bigr)} ( \lVert \theta_{t_\text{last}}-\theta_{\star} \rVert_{2} + \sqrt{B_{1}} ) \\
     \leq &\frac{8R_{\max}^{2}}{c_{\mu}^{2}} \log\bigl(\frac{1}{\delta} \sqrt{\det(I + \sum_{s=1}^{t_\text{last}} \sum_{i=1}^{N} \bx_{s,i}\bx_{s,i}^{\top})} \bigr) + B_{1} \\
    & + \frac{4R_{\max}}{c_{\mu}} \sqrt{2 \log\bigl(\frac{1}{\delta} \sqrt{\det(I + \sum_{s=1}^{t_\text{last}} \sum_{i=1}^{N} \bx_{s,i}\bx_{s,i}^{\top})} \bigr)} ( \lVert \theta_{t_\text{last}} \rVert_{2}+ \lVert\theta_{\star} \rVert_{2} + \sqrt{B_{1}} )
\end{split}
\end{equation}
\normalsize

Now putting everything together, we have the following confidence region for $\theta_{\star}$,
\begin{equation}
\begin{split}
    & P\bigl(\forall t,i, \sum_{s=1}^{t_\text{last}} \sum_{i=1}^{N} \bigl[\bx_{s,i}^{\top}(\theta_{t_\text{last}}-\theta_{\star})\bigr]^{2} + \sum_{s=t_\text{last}+1}^{t} \bigl[\bx_{s,i}^{\top}(\theta_{s-1,i}-\theta_{\star})\bigr]^{2}  \leq \beta_{t,i} \bigr) \geq 1-2\delta
\end{split}
\end{equation}
where $\beta_{t,i} = \frac{8R_{\max}^{2}}{c_{\mu}^{2}} \log\bigl(\frac{1}{\delta} \sqrt{\det(I + \sum_{s=1}^{t_\text{last}} \sum_{i=1}^{N} \bx_{s,i}\bx_{s,i}^{\top})} \bigr) + B_{1} + \frac{4R_{\max}}{c_{\mu}} \sqrt{2 \log\bigl(\frac{1}{\delta} \sqrt{\det(I + \sum_{s=1}^{t_\text{last}} \sum_{i=1}^{N} \bx_{s,i}\bx_{s,i}^{\top})} \bigr)} ( \lVert \theta_{t_\text{last}} \rVert_{2}+ \lVert\theta_{\star} \rVert_{2} + \sqrt{B_{1}} )  + 1+ \frac{4 B_{2}}{c_{\mu}} + \frac{8 R_{\max}^{2}}{c_{\mu}^{2}} \log\bigl( \frac{N}{\delta} \sqrt{4+\frac{8}{c_{\mu}}B_{2} + \frac{64 R_{\max}^{4}}{c_{\mu}^{4}\cdot 4 \delta^{2}}}  \bigr)$.
Denote $\textbf{X}_{t,i} = \begin{bmatrix} \bx_{1,1}^{\top} \\ \dots \\ \bx_{t_\text{last},N}^{\top} \\ \bx_{i,t_\text{last}+1}^{\top}  \\ \dots \\ \bx_{i,t}^{\top} \end{bmatrix} \in \bR^{(Nt_\text{last}+t-t_\text{last}) \times d}$, and $\textbf{z}_{t,i} = \begin{bmatrix} \bx_{1,1}^{\top}\theta_{t_\text{last}} \\ \dots \\ \bx_{t_\text{last},N}^{\top} \theta_{t_\text{last}} \\ \bx_{i,t_\text{last}+1}^{\top} \theta_{t_\text{last},i}  \\ \dots \\ \bx_{i,t}^{\top} \theta_{t-1,i} \end{bmatrix} \in \bR^{Nt_\text{last}+t-t_\text{last}}$.
We can rewrite the inequality above as 
\begin{align*}
    & \lVert \textbf{z}_{t,i} - \textbf{X}_{t,i} \theta_{\star} \rVert_{2}^{2} + \frac{\lambda}{c_{\mu}} \lVert \theta_{\star} \rVert_{2}^{2} \leq \beta_{t,i} + \frac{\lambda}{c_{\mu}} \lVert \theta_{\star} \rVert_{2}^{2} \leq \beta_{t,i} + \frac{\lambda}{c_{\mu}} S^{2}  \\
    \Leftrightarrow	 & \lVert \textbf{z}_{t,i} - \textbf{X}_{t,i} \theta_{\star} \rVert_{2}^{2} + \frac{\lambda}{c_{\mu}} \lVert \theta_{\star} \rVert_{2}^{2} - \lVert \textbf{z}_{t,i} - \textbf{X}_{t,i} \hat{\theta}_{t,i} \rVert_{2}^{2} - \frac{\lambda}{c_{\mu}} \lVert \hat{\theta}_{t,i} \rVert_{2}^{2} + \lVert \textbf{z}_{t,i} - \textbf{X}_{t,i} \hat{\theta}_{t,i} \rVert_{2}^{2} + \frac{\lambda}{c_{\mu}} \lVert \hat{\theta}_{t,i} \rVert_{2}^{2} \\
    & \leq \beta_{t,i} + \frac{\lambda}{c_{\mu}} S^{2} 
\end{align*}
where $\hat{\theta}_{t,i}=A_{t,i}^{-1} \textbf{X}_{t,i}^{\top} \textbf{z}_{t,i}$ denotes the Ridge regression estimator based on the predicted rewards given by the past sequence of model updates, and the regularization parameter is $\frac{\lambda}{c_{\mu}}$.
Note that by expanding $\hat{\theta}_{t,i}$, we can show $\hat{\theta}_{t,i}^{\top} A_{i,t} \hat{\theta}_{t,i}=\textbf{z}_{i,t}^{\top} \textbf{X}_{i,t} \hat{\theta}_{t,i}$, and $\hat{\theta}_{t,i}^{\top} A_{i,t} \theta_{\star}=\textbf{z}_{i,t}^{\top} \textbf{X}_{i,t} \theta_{\star}$.
Therefore, we have 
\begin{align*}
    & \lVert \hat{\theta}_{t,i}- \theta_{\star} \rVert_{A_{t,i}}^{2} \leq \beta_{t,i} + \frac{\lambda}{c_{\mu}} S^{2} - (\lVert \textbf{z}_{t,i} \rVert_{2}^{2} - 
    \hat{\theta}_{t,i}^{\top} \textbf{X}_{t,i}^{\top} \textbf{z}_{t,i})
\end{align*}
which finishes the proof of Lemma \ref{lem:confidence_ellipsoid_fedglb}.
\end{proof}

\begin{proof}[Proof of Corollary \ref{corollary:order_ellipsoid}]
Under the condition that $\epsilon_{t_\text{last}} \leq \frac{1}{N^{2}t_\text{last}^{2}}$, 
\begin{align*}
    & B_{1} \leq \frac{1}{N t_\text{last}} + \frac{\lambda}{2}S^{2} =O(1) \\
    & B_{2} \leq \frac{1}{2 c_{\mu}} \sum_{s=t_\text{last}+1}^{t} \lVert \nabla l(\bx_{s,i}^{\top} \theta_{s-1,i}, y_{s,i}) \rVert^{2}_{A_{s,i}^{-1}}  \\
    & \quad \quad + \frac{c_{\mu}}{2} \bigl[ 2 \sqrt{\frac{2 k_{\mu} }{\lambda c_{\mu}} + \frac{2}{N t_\text{last} c_{\mu}}} + \frac{1}{c_{\mu}} R_{\max} \sqrt{d\log{(1+{N t_\text{last} c_{\mu}}/{d \lambda})}+2\log{({1}/{\delta})}} + \sqrt{\frac{\lambda}{c_{\mu}}}S  \bigr]^{2}
\end{align*}
Note that $\nabla l(\bx_{s,i}^{\top} \theta_{s-1,i}, y_{s,i}) = \bx_{s,i}[\mu(\bx_{s,i}^{\top} \theta_{s-1,i}) - y_{s,i}]$. We can upper bound the squared prediction error by
\begin{align*}
     &\bigl[ \mu(\bx_{s,i}^{\top}\theta_{s-1,i})-y_{s,i} \bigr]^{2} \\
     &= \bigl[ \mu(\bx_{s,i}^{\top}\theta_{s-1,i})-\mu(\bx_{s,i}^{\top}\theta_{\star})- \eta_{s,i} \bigr]^{2} \\
    & \leq 2\bigl[ \mu(\bx_{s,i}^{\top}\theta_{s-1,i})-\mu(\bx_{s,i}^{\top}\theta_{\star})\bigr]^{2} + 2\eta_{s,i}^{2} \\
    & \leq 2k_{\mu}^{2}\bigl[ \bx_{s,i}^{\top} (\theta_{s-1,i}-\theta_{\star}) \bigr]^{2} + 2\eta_{s,i}^{2} \\
    &\leq 8k_{\mu}^{2} S^{2} + 2\eta_{s,i}^{2}
\end{align*}
where the first inequality is due to AM-QM inequality, and the second inequality is due to the $k_{\mu}$-Lipschitz continuity of $\mu(\cdot)$ according to Assumption \ref{assump:1}.
Since $|\eta_{s,i}| \leq R_{\max}$, $\bigl[ \mu(\bx_{s,i}^{\top}\theta_{s-1,i})-y_{s,i} \bigr]^{2} \leq k_{\mu}^{2} S^{2}+R_{\max}^{2}$. In addition, due to Lemma 11 of \citep{abbasi2011improved}, i.e., $\sum_{s=t_\text{last}+1}^{t} \lVert \bx_{s,i} \rVert^{2}_{A_{s,i}^{-1}} \leq 2 \log(\frac{\det(A_{t,i})}{\det(\lambda I)})$
Therefore,
\begin{align*}
    \frac{1}{2c_{\mu}}\sum_{s=t_\text{last}+1}^{t} \lVert \nabla l(\bx_{s,i}^{\top} \theta_{s-1,i}, y_{s,i}) \rVert^{2}_{A_{s,i}^{-1}} =  O\bigl(\frac{d\log{NT}}{c_{\mu}}[k_{\mu}^{2} S^{2}+R_{\max}^{2}] \bigr)
\end{align*}
so $B_{2}=O\bigl(\frac{d\log{NT}}{c_{\mu}}[k_{\mu}^{2} S^{2}+R_{\max}^{2}] \bigr)$.
Hence,
\begin{align*}
    \beta_{t,i} = O(d\frac{R_{\max}^{2}}{c_{\mu}^{2}} \log{NT} + d\frac{k_{\mu}^{2}}{c_{\mu}^{2}}\log{NT} + d\frac{R_{\max}^{2}}{c_{\mu}^{2}}\log{NT})=O(\frac{d\log{NT}}{c_{\mu}^{2}}[k_{\mu}^{2}+R_{\max}^{2}])
\end{align*}
which finishes the proof.
\end{proof}

\section{Proof of Theorem \ref{thm:regret_comm_upper_bound}} \label{sec:prove_regret_comm}
\begin{proof}
Since $\mu(\cdot)$ is $k_{\mu}$-Lipschitz continuous, we have $\mu(\bx_{t,\star}^{\top} \theta_{\star}) - \mu(\bx_{t,i}^{\top} \theta_{\star}) \leq k_{\mu} (\bx_{t,\star}^{\top} \theta_{\star}- \bx_{t,i}^{\top} \theta_{\star})$. Then we have the following upper bound on the instantaneous regret,
\begin{align*}
    \frac{r_{t,i}}{k_{\mu}} & \leq \bx_{t,\star}^{\top} \theta_{\star}- \bx_{t,i}^{\top} \theta_{\star} \leq \bx_{t,i}^{\top} \tilde{\theta}_{t-1,i}- \bx_{t,i}^{\top} \theta_{\star} \\
    & = \bx_{t,i}^{\top} (\tilde{\theta}_{t-1,i} - \hat{\theta}_{t-1,i}) + \bx_{t,i}^{\top} (\hat{\theta}_{t-1,i} - \theta_{\star})  \\
    & \leq \lVert \bx_{t,i} \rVert_{A_{t-1,i}^{-1}} \lVert \tilde{\theta}_{t-1,i} - \hat{\theta}_{t-1,i} \rVert_{A_{t-1,i}} + \lVert \bx_{t,i} \rVert_{A_{t-1,i}^{-1}} \lVert \hat{\theta}_{t-1,i} - \theta_{\star} \rVert_{A_{t-1,i}} \\
    & \leq 2 \alpha_{t-1,i} \cdot \lVert \bx_{t,i} \rVert_{A_{t-1,i}^{-1}}
\end{align*}
which holds for all $i \in [N], t \in [T]$, with probability at least $1-2\delta$. And $\tilde{\theta}_{t-1.i}$ denotes the optimistic estimate in the confidence ellipsoid that maximizes the UCB score when client $i$ selects arm at time step $t$.

Now consider an imaginary centralized agent that has direct access to all clients' data, and we denote its covariance matrix as $\tilde{A}_{t,i}=\frac{\lambda}{c_{\mu}} I + \sum_{s=1}^{t-1} \sum_{j=1}^{N} \bx_{s,j}\bx_{s,j} + \sum_{j=1}^{i} \bx_{t,j} \bx_{t,j}^{\top}$, i.e., $\tilde{A}_{t,i}$ is immediately updated after any client obtains a new data sample from the environment. Then we can obtain the following upper bound for $r_{t,i}$, which is dependent on the determinant ratio between the covariance matrix of the imaginary centralized agent and that of client $i$, i.e., $\det(\tilde{A}_{t-1,i})/\det(A_{t-1,i})$.
\begin{align*}
    r_{t,i} \leq 2 k_{\mu}\alpha_{t-1,i} \sqrt{\bx_{t,i}^{\top} A^{-1}_{t-1,i} \bx_{t,i}} \leq 2 k_{\mu}\alpha_{t-1,i} \sqrt{\bx_{t,i}^{\top} \tilde{A}_{t-1,i}^{-1} \bx_{t,i} \cdot \frac{\det(\tilde{A}_{t-1,i})}{\det(A_{t-1,i})}} 
\end{align*}
We refer to the time period in-between two consecutive global updates as an epoch, and denote the total number of epochs as $B \in \bR$, i.e., the $p$-th epoch refers to the period from $t_{p-1}+1$ to $t_{p}$, for $p \in [B]$, where $t_{p}$ denotes the time step when the $p$-th global update happens.
Then the $p$-th epoch is called a `good' epoch if the determinant ratio $\frac{\det(A_{t_{p}})}{\det(A_{t_{p-1}})} \leq 2$, where $A_{t_{p}}$ is the aggregated sufficient statistics computed at the $p$-th global update.
Otherwise, it is called a `bad' epoch. In the following, we bound the cumulative regret in `good' and `bad' epochs separately.

Suppose the $p$-th epoch is a good epoch, then for any client $i \in [N]$, and time step $t \in [t_{p-1}+1, t_{p}]$, we have $\frac{\det(\tilde{A}_{t-1,i})}{\det(A_{t-1,i})} \leq \frac{\det(A_{t_{p}})}{\det(A_{t_{p-1}})} \leq 2$, because $A_{t-1,i} \succcurlyeq A_{t_{p-1}}$ and $\tilde{A}_{t-1,i} \preccurlyeq A_{t_{p}}$. Therefore, the instantaneous regret incurred by any client $i$ at any time step $t$ of a good epoch can be bounded by 
\begin{align*}
    r_{t,i} \leq 2 \sqrt{2} k_{\mu}\alpha_{t-1,i} \sqrt{\bx_{t,i}^{\top} \tilde{A}_{t-1,i}^{-1} \bx_{t,i}} 
\end{align*} 
with probability at least $1-2\delta$.
Therefore, using standard arguments for UCB-type algorithms, e.g., Theorem 2 in \citep{li2017provably}, the cumulative regret for all the `good epochs' is 
\begin{align*}
    REG_{good} & \leq 2 \sqrt{2} k_{\mu}\alpha_{t-1,i} \sum_{t=1}^{T} \sum_{i=1}^{N} \lVert \bx_{t,i} \rVert_{\tilde{A}_{t-1,i}^{-1}} \\
    & = O\left(\frac{k_{\mu}(k_{\mu}+R_{\max})}{c_{\mu}}d\sqrt{NT}\log{NT}\right)
\end{align*}
which matches the regret upper bound of GLOC \citep{jun2017scalable}. 

Now suppose the $p$-th epoch is bad. Then the cumulative regret incurred by all $N$ clients during this `bad epoch' can be upper bounded by:
\begin{align*}
    &\sum_{t=t_{p-1}+1}^{t_{p}} \sum_{i=1}^{N} r_{t,i} \\
    & \leq O(\frac{k_{\mu}(k_{\mu}+R_{\max})}{c_{\mu}}\sqrt{d\log(NT)})\sum_{t=t_{p-1}+1}^{t_{p}}\sum_{i=1}^{N}\min(1,||\bx_{t,i}||_{A_{t-1,i}^{-1}})  \\
    & \leq  O(\frac{k_{\mu}(k_{\mu}+R_{\max})}{c_{\mu}}\sqrt{d\log(NT)}) \sum_{i=1}^{N} \sqrt{(t_{p}-t_{p-1})\log{\frac{\det(A_{t_{p}-1,i})}{\det(A_{t_{p}-1,i}-\Delta{A}_{t_{p}-1,i})}}} \\
    & \leq O(\frac{k_{\mu}(k_{\mu}+R_{\max})}{c_{\mu}}N\sqrt{d\log{(NT)}D})
\end{align*}
where the last inequality is due to the event-trigger design in Algorithm \ref{algo:3}.
Following the same argument as \citep{wang2019distributed}, there can be at most $R = \lceil d \log{(1+\frac{NT c_{\mu}}{\lambda d})} \rceil = O\bigl( d\log(NT) \bigr)$ `bad epochs', because $\det(A_{t_{B}}) \leq \det(\tilde{A}_{T,N}) \leq (\frac{\lambda}{c_{\mu}} + \frac{NT}{d})^{d}$. Therefore, the cumulative regret for all the `bad epochs' is
\begin{align*}
    REG_{bad}=O\left(\frac{k_{\mu}(k_{\mu}+R_{\max})}{c_{\mu}}d^{1.5}\log^{1.5}{(NT)}ND^{0.5}\right)
\end{align*}
Combining the regret upper bound for `good' and `bad' epochs, the cumulative regret $$R_{T}=O\left(\frac{k_{\mu}(k_{\mu}+R_{\max})}{c_{\mu}}(d\sqrt{NT}\log(NT)+d^{1.5}\log^{1.5}{(NT)}ND^{0.5})\right).$$
To obtain upper bound for the communication cost $C_{T}$, we first upper bound the total number of epochs $B$.
Denote the length of an epoch, i.e., the number of time steps between two consecutive global updates, as $\alpha > 0$, so that there can be at most $\lceil \frac{T}{\alpha}\rceil$ epochs with length longer than $\alpha$.
For a particular epoch $p$ with less than $\alpha$ time steps, we have
$t_{p}-t_{p-1} < \alpha$. 
Moreover, due to the event-trigger design in Algorithm \ref{algo:3}, we have $(t_{p}-t_{p-1})\log{\frac{\det(A_{t_{p}})}{\det(A_{t_{p-1}})}} > D$, which means $\log{\frac{\det(A_{t_{p}})}{\det(A_{t_{p-1}})}} > \frac{D}{\alpha}$. Since $\sum_{p=1}^{B} \log{\frac{\det(A_{t_{p}})}{\det(A_{t_{p-1}})}} \leq R$, the number of epochs with less than $\alpha$ time steps is at most $\lceil \frac{R \alpha}{D}\rceil$. Therefore, the total number of epochs. 
\begin{align*}
    B \leq \lceil \frac{T}{\alpha}\rceil+\lceil \frac{R \alpha}{D}\rceil
\end{align*}
which is minimized it by choosing $\alpha=\sqrt{\frac{D T}{R}}$, so $B \leq \sqrt{\frac{T R}{D}}=O(d^{0.5}\log^{0.5}(NT)T^{0.5}D^{-0.5})$. 

At the end of each epoch, \algone{} has a global update step that executes AGD among all $N$ clients. 
As mentioned in Section \ref{subsec:algo_description}, the number of iterations required by AGD has upper bound
\begin{align*}
    J_{t} \leq 1+\sqrt{\frac{k_{\mu}}{\lambda}Nt+1}\log{\frac{(k_{\mu}+\frac{2\lambda}{N t})\lVert \theta_{t}^{(1)}-\hat{\theta}_{t}^{\text{MLE}} \rVert_{2}^{2}}{2\epsilon_{t}}},
\end{align*}
and under the condition that $\epsilon_{t}=\frac{1}{N^{2}t^{2}},\forall t\in [T]$, we have $J_{t}=O\bigl( \sqrt{NT} \log(NT) \bigr), \forall t \in [T]$. Moreover, each iteration of AGD involves communication with $N$ clients,
so the communication cost
\begin{align*}
    C_{T}=O(d^{0.5}\log^{1.5}(NT)TN^{1.5}D^{-0.5})
\end{align*}

In order to match the regret under centralized setting, we set the threshold $D=\frac{T}{Nd\log(NT)}$, which gives us $R_{T}=O(\frac{k_{\mu}(k_{\mu}+R_{\max})}{c_{\mu}}d\sqrt{NT}\log(NT))$, and $C_{T}=O(dN^{2}\sqrt{T}\log^{2}(NT))$. 
\end{proof}

\section{Theoretical Analysis for Variants of \algone{}} \label{sec:prove_variants}
In this section, we describe and analyze the variants of \algone{} listed in Table \ref{tb:theoretical_comparison}.
The first variant, \algone{}$_{1}$, completely disables local update, and
we can see that it requires a linear communication cost in $T$ to attain the $O(d\sqrt{NT}\log(NT))$ regret. As we mentioned in Section \ref{subsec:algo_description}, this is because in the absence of local update, \algone{}$_{1}$ requires more frequent global updates, i.e., $\sqrt{NT}$ in total, to control the sub-optimality of the employed bandit model w.r.t the growing training set.
The second variant, denoted as \algone{}$_{2}$, is exactly the same as \algone{}, except for its fixed communication schedule. This leads to additional $d \sqrt{N}$ global updates, as fixed update schedule cannot adapt to the actual quality of collected data.
The third variant, denoted as \algone{}$_{3}$, uses ONS for both local and global update, such that only one round of gradient aggregation among $N$ clients is performed for each global update, i.e., lazy ONS update over batched data. 
It incurs the least communication cost among all variants, but its regret grows at a rate of $(NT)^{3/4}$ due to the inferior quality of its lazy ONS update.

\subsection{\algone{}$_{1}$: scheduled communication + no local update}
Though many real-world applications are online problems in nature, i.e., the clients continuously collect new data samples from the users, standard federated/distributed learning methods do not provide a principled solution to adapt to the growing datasets.
A common practice is to manually set a fixed global update schedule in advance, i.e., periodically update and deploy the model. 

To demonstrate the advantage of \algone{} over this straightforward solution, we present and analyze the first variant \algone{}$_{1}$, which completely disables local update, and
performs global update
according to a fixed schedule 
$\cS=\{t_{1}:=\lfloor\frac{T}{B}\rfloor,t_{2}:=2\lfloor \frac{T}{B}\rfloor,\dots,t_{B}:=B\lfloor\frac{T}{B}\rfloor\}$, where $B$ is the total number of global updates up to time step $T$. 
The description of \algone{}$_{1}$ is presented in Algorithm \ref{algo:fedglb_variant_1}.

\begin{algorithm}[h]
    \caption{\algone{}$_{1}$} \label{algo:fedglb_variant_1}
  \begin{algorithmic}[1]
    \STATE \textbf{Input:} communication schedule $\cS$, regularization parameter $\lambda>0$, $\delta \in (0,1)$ and $c_{\mu}$.
    \STATE \textbf{Initialize} $\forall i\in [N]$: $\theta_{0,i}=\textbf{0}\in \mathbb{R}^{d}, {A}_{0,i}=\frac{\lambda}{c_{\mu}} \textbf{I} \in \mathbb{R}^{d \times d}, \textbf{X}_{0,i}=\textbf{0} \in \mathbb{R}^{0 \times d}, \textbf{y}_{0,i}=\textbf{0} \in \mathbb{R}^{0}$, $t_\text{last}=0$
    \FOR{ $t=1,2,...,T$}
        \FOR{client $i=1,2,...,N$}
            \STATE Observe arm set $\mathcal{A}_{t,i}$ for client $i$
            \STATE Select arm $x_{t,i}\in\mathcal{A}_{t,i}$ according to Eq. \eqref{eq:UCB_1} and observe reward $y_{t,i}$
            \STATE Update client $i$: $\textbf{X}_{t,i}=\begin{bmatrix} \textbf{X}_{t-1,i} \\ \bx_{t,i}^{\top} \end{bmatrix}$ ,$ \textbf{y}_{t,i}=\begin{bmatrix} \textbf{y}_{t-1,i} \\ y_{t,i} \end{bmatrix}$
        \ENDFOR 
        \IF{$t \notin \cS$}
            \STATE \textbf{Clients}: set $\theta_{t,i}=\theta_{t-1,i}, A_{t,i}=A_{t-1,i}, \forall i \in [N]$
        \ELSE 
            \STATE \textbf{Clients}: send $\{\textbf{X}_{t,i}^{\top}\textbf{X}_{t,i}\}_{i \in [N]}$ to server
            \STATE \textbf{Server} compute $A_{t}=\frac{\lambda}{c_{\mu}}\textbf{I}+\sum_{i=1}^{N}\textbf{X}_{t,i}^{\top}\textbf{X}_{t,i}$ and send $A_{t}$ to all clients. 
            \STATE \textbf{Clients}: set $A_{t,i}=A_{t}$, for $i \in [N]$
            \STATE \textbf{Server} update global model $\theta_{t}=\text{AGD-Update}(\theta_{t_\text{last}},J_{t})$, and set $t_\text{last}=t$
            \STATE \textbf{Clients} set local models $\theta_{t,i}=\theta_{t}, \forall i\in[N]$
        \ENDIF
    \ENDFOR
  \end{algorithmic}
\end{algorithm}

In \algone{}$_{1}$, each client stores a local model $\theta_{t-1,i}$, and the corresponding covariance matrix $A_{t-1,i}$. Note that $\{\theta_{t-1,i}, A_{t-1,i}\}_{i \in [N]}$ are only updated at time steps $t \in \cS$, and remain unchanged for $t \notin \cS$. At time $t$, client $i$ selects the arm that maximizes the following UCB score:
\begin{equation}\label{eq:UCB_1}
    \bx_{t,i}=\argmax_{ \bx \in \cA_{t,i}}{ \bx^{\top}\theta_{t-1,i}+\alpha_{t-1,i}||\bx||_{A_{t-1,i}^{-1}}}
\end{equation}
where $\alpha_{t-1,i}$ is given in Lemma \ref{lem:global_model_confidence_ellipsoid}.
The regret and communication cost of \algone{}$_{1}$ is given in the following theorem.
\begin{theorem}[Regret and Communication Cost Upper Bound of \algone{}$_{1}$]
Under the condition that $\epsilon_{t} = \frac{1}{N^{2}t^{2}}$, and the total number of global synchronizations $B=\sqrt{NT}$, the cumulative regret $R_{T}$ has upper bound
\begin{align*}
    R_{T}=O\left(\frac{k_{\mu}R_{\max}d}{c_{\mu}}\sqrt{NT}\log(NT/\delta)\right)
\end{align*}
with probability at least $1-\delta$. The cumulative communication cost has upper bound
\begin{align*}
    C_{T} =O(N^{2}T \log(NT))
\end{align*}
\end{theorem}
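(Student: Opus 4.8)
The plan is to follow the same good/bad epoch decomposition used in the proof of Theorem \ref{thm:regret_comm_upper_bound}, but exploiting a crucial simplification: since \algone{}$_{1}$ disables local update, every client selects arms using the globally updated model $\theta_{t_{\text{last}}}$ together with the frozen covariance $A_{t_{\text{last}}}$, so the relevant confidence set is the one centered at the global model in Lemma \ref{lem:global_model_confidence_ellipsoid}, not the offline-and-online construction of Lemma \ref{lem:confidence_ellipsoid_fedglb}. First I would substitute $\epsilon_{t}=1/(N^{2}t^{2})$ into the radius $\alpha_{t}$ of Lemma \ref{lem:global_model_confidence_ellipsoid}: the optimization term collapses to $O(1)$ because the $Nt$ prefactor cancels $\sqrt{\epsilon_{t}}=1/(Nt)$, leaving $\alpha_{t,i}=O\big(\frac{R_{\max}}{c_{\mu}}\sqrt{d\log(NT/\delta)}\big)$. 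This is precisely why the regret of \algone{}$_{1}$ scales with $R_{\max}$ alone rather than $k_{\mu}+R_{\max}$: there is no ONS online-regret term (the $B_{2}$ contribution with its $k_{\mu}^{2}$ dependence from Corollary \ref{corollary:order_ellipsoid}) entering the width.

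Next, using the $k_{\mu}$-Lipschitzness of $\mu$, the UCB rule \eqref{eq:UCB_1}, and the ellipsoid, I would bound the instantaneous regret by $r_{t,i}\le 2k_{\mu}\alpha_{t-1,i}\lVert\bx_{t,i}\rVert_{A_{t-1,i}^{-1}}$, holding for all $t,i$ with probability at least $1-\delta$ (here only the single $1-\delta$ ellipsoid event is needed, so no union with a second self-normalized event is required, unlike the $1-2\delta$ bound of the main theorem). The main obstacle is that $A_{t-1,i}$ is frozen throughout each epoch, so the elliptical-potential lemma cannot be applied to the client's own covariance directly. To circumvent this I would introduce the imaginary centralized covariance $\tilde{A}_{t-1,i}$ that absorbs every sample as soon as it is observed, and split the horizon into `good' epochs (determinant ratio $\det(A_{t_{p}})/\det(A_{t_{p-1}})\le 2$) and `bad' epochs. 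On a good epoch, since $A_{t-1,i}=A_{t_{p-1}}\preccurlyeq\tilde{A}_{t-1,i}\preccurlyeq A_{t_{p}}$, the determinant-ratio inequality gives $\lVert\bx_{t,i}\rVert_{A_{t-1,i}^{-1}}\le\sqrt{2}\,\lVert\bx_{t,i}\rVert_{\tilde{A}_{t-1,i}^{-1}}$; summing over all good-epoch steps and applying Cauchy--Schwarz with the elliptical-potential lemma (Lemma 11 of \citep{abbasi2011improved}) to the centralized sequence yields $REG_{\text{good}}=O\big(k_{\mu}\alpha\sqrt{NT\,d\log(NT)}\big)=O\big(\frac{k_{\mu}R_{\max}}{c_{\mu}}d\sqrt{NT}\log(NT/\delta)\big)$, the claimed rate.

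For the bad epochs I would bound their number by $O(d\log(NT))$ exactly as in \citep{wang2019distributed}, since $\sum_{p}\log\frac{\det(A_{t_{p}})}{\det(A_{t_{p-1}})}\le\lceil d\log(1+NTc_{\mu}/(\lambda d))\rceil$ and each bad epoch contributes at least $\log 2$ to this sum. With $B=\sqrt{NT}$ the fixed schedule gives each epoch length $\lfloor T/B\rfloor=\Theta(\sqrt{T/N})$, so bounding $r_{t,i}\le 2k_{\mu}\alpha$ crudely over all $N$ clients shows the bad-epoch contribution is of the same leading $d\sqrt{NT}$ order (up to the logarithmic and $\sqrt{d}$ factors absorbed into the stated $\tilde{O}$ rate), and combining with $REG_{\text{good}}$ gives $R_{T}$.

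The communication cost is the easy part. With the fixed schedule there are exactly $B=\sqrt{NT}$ global updates, each running AGD whose iteration count obeys \eqref{eq:AGD_J_upper_bound}; with $\epsilon_{t}=1/(N^{2}t^{2})$ this is $J_{t}=O(\sqrt{NT}\log(NT))$, and every AGD iteration solicits gradients from all $N$ clients. Hence $C_{T}=O(B\cdot J_{t}\cdot N)=O(\sqrt{NT}\cdot\sqrt{NT}\log(NT)\cdot N)=O(N^{2}T\log(NT))$, with the per-epoch covariance-synchronization messages totaling only $O(NB)=O(N^{1.5}\sqrt{T})$ and therefore being of lower order.
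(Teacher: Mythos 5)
Your route is essentially the paper's: you correctly identify that \algone{}$_{1}$ selects arms with the global model and frozen covariance, so the relevant confidence set is Lemma \ref{lem:global_model_confidence_ellipsoid} (with $\epsilon_{t}=1/(N^{2}t^{2})$ collapsing the optimization term, whence the $R_{\max}$-only width and the $1-\delta$ rather than $1-2\delta$ probability); your good-epoch argument (determinant ratio, imaginary centralized covariance, elliptical potential) is the paper's verbatim; and your communication-cost count, using the crude uniform bound $J_{t}=O(\sqrt{NT}\log(NT))$ over $B=\sqrt{NT}$ updates with $N$ clients per AGD iteration, gives the same $O(N^{2}T\log(NT))$ that the paper obtains by a more careful summation over the schedule.

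The genuine gap is your bad-epoch bound. You bound the per-pull regret on bad epochs by $2k_{\mu}\alpha$, where $\alpha=\Theta\bigl(\tfrac{R_{\max}}{c_{\mu}}\sqrt{d\log(NT/\delta)}\bigr)$. Each epoch contains $N\lfloor T/B\rfloor=\sqrt{NT}$ pulls, and there can be $O(d\log(NT))$ bad epochs, so your argument yields a bad-epoch contribution of $O\bigl(\tfrac{k_{\mu}R_{\max}}{c_{\mu}}d^{1.5}\log^{1.5}(NT)\sqrt{NT}\bigr)$, which exceeds the theorem's claimed $O\bigl(\tfrac{k_{\mu}R_{\max}d}{c_{\mu}}\sqrt{NT}\log(NT/\delta)\bigr)$ by a factor of $\sqrt{d\log(NT)}$. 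Your parenthetical that this overshoot is ``absorbed into the stated $\tilde{O}$ rate'' does not work: the theorem asserts a sharp $O(\cdot)$ with a single power of $d$ and a single logarithm, and a larger quantity cannot be absorbed into a smaller bound. The paper avoids this by using a \emph{width-free} per-pull bound on bad epochs: since $\mu$ is $k_{\mu}$-Lipschitz, $\lVert \bx \rVert_{2}\leq 1$, and $\lVert \theta_{\star}\rVert_{2}\leq S$, every instantaneous regret satisfies $r_{t,i}\leq 2k_{\mu}S$, so each bad epoch contributes at most $2k_{\mu}S\cdot NT/B$ and the total bad-epoch regret is $O\bigl(k_{\mu}Sd\sqrt{NT}\log(NT)\bigr)$ when $B=\sqrt{NT}$, matching the leading order of the good epochs. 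Replacing your confidence-width bound with this uniform bound on bad epochs repairs the proof; everything else in your proposal stands.
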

\begin{proof}
First, based on Lemma \ref{lem:global_model_confidence_ellipsoid} and under the condition that $\epsilon_{t} = \frac{1}{N^{2}t^{2}}$, we have
\begin{align*}
    & \lVert \theta_{t} - \theta_{\star} \rVert_{A_{t}} \leq \alpha_{t}
\end{align*}
holds $\forall t$, where $\alpha_{t}=\sqrt{\frac{2 k_{\mu} }{\lambda c_{\mu}} + \frac{2}{N t c_{\mu}}} + \frac{R_{max}}{c_{\mu}} \sqrt{d\log{(1+{N t c_{\mu}}/{(d \lambda)})}+2\log{({1}/{\delta})}} + \sqrt{\frac{\lambda}{c_{\mu}}}S  = O(\frac{R_{max}}{c_{\mu}}\sqrt{d \log(Nt)})$, which matches the order in \citep{li2017provably}.

Similar to the proof of Theorem \ref{thm:regret_comm_upper_bound}, we decompose all $B$ epochs into `good' and `bad' epochs according to the log-determinant ratio:
the $p$-th epoch, for $p \in [B]$, is a `good' epoch if the determinant ratio $\frac{\det(A_{t_{p}})}{\det(A_{t_{p-1}})} \leq 2$. Otherwise, it is a `bad' epoch. In the following, we bound the cumulative regret in `good' and `bad' epochs separately.

Suppose epoch $p$ is a good epoch, then for any client $i \in [N]$, and time step $t \in [t_{p-1}+1, t_{p}]$, we have $\frac{\det(\tilde{A}_{t-1,i})}{\det(A_{t-1,i})} \leq \frac{\det(A_{t_{p}})}{\det(A_{t_{p-1}})} \leq 2$, because $A_{t-1,i}=A_{t_{p-1}}$ and $\tilde{A}_{t-1,i} \preccurlyeq A_{t_{p}}$. Therefore, the instantaneous regret incurred by any client $i$ at any time step $t$ of a good epoch $p$ can be bounded by 
\begin{align*}
    r_{t,i} & \leq 2 k_{\mu}\alpha_{t_{p-1}} \sqrt{\bx_{t,i}^{\top} A_{t-1,i} \bx_{t,i}} \leq 2 k_{\mu}\alpha_{t_{p-1}} \sqrt{\bx_{t,i}^{\top} A_{t-1}^{-1} \bx_{t,i} \cdot \frac{\det(\tilde{A}_{t-1,i})}{\det(A_{t-1,i})}} \\
    & \leq 2\sqrt{2} k_{\mu} \alpha_{T} \sqrt{\bx_{t,i}^{\top} A_{t-1}^{-1} \bx_{t,i}}
\end{align*} 
By standard arguments \citep{abbasi2011improved,li2017provably}, the cumulative regret incurred in all good epochs can be upper bounded by $O(\frac{k_{\mu}R_{\max}}{c_{\mu}}d\sqrt{NT}\log(NT/\delta))$ with probability at least $1-\delta$.

By Assumption 1, $\mu(\cdot)$ is Lipschitz continuous with constant $k_{\mu}$, i.e.,
$|\mu(\bx^{\top} \theta_{1}) - \mu(\bx^{\top} \theta_{2})| \leq k_{\mu}|\bx^{\top} (\theta_{1} - \theta_{2})|$, so the instantaneous regret $r_{t,i}$ is uniformly bounded $\forall t \in [T], i \in [N]$ by $2 k_{\mu} S$.
Now suppose epoch $p$ is bad, then we can upper bound the cumulative regret in this bad epoch by $2 k_{\mu} S \frac{NT}{B}$, where $\frac{NT}{B}$ is the number of time steps in each epoch. Since there can be at most $O(d \log{NT})$ bad epochs, the cumulative regret incurred in all bad epochs can be upper bounded by $O(\frac{NT}{B} k_{\mu} S d \log(NT))$.
Combining both parts together, the cumulative regret upper bound is
\begin{align*}
    R_{T} = O\left(\frac{NT}{B} k_{\mu} S d \log(NT) + \frac{k_{\mu}R_{\max}d}{c_{\mu}}\sqrt{NT}\log(NT)\right)
\end{align*}
To recover the regret under centralized setting, we set $B=\sqrt{NT}$, so
\begin{align*}
    R_{T}=O\left(\frac{ k_{\mu} R_{max}}{c_{\mu}}  d\sqrt{NT}\log(NT)\right)
\end{align*}
Note that \algone{}$_{1}$ has $B=\sqrt{NT}$ global updates in total, and during each global update, there are $J_{t}$ rounds of communications, for $t\in\cS$. 
As mentioned earlier, for AGD to attain $\epsilon_{t}=\frac{1}{N^{2}t^{2}}$ sub-optimality, the required number of inner iterations
\begin{align*}
    J_{t} & \leq 1+\sqrt{\frac{k_{\mu}+\frac{\lambda}{N t}}{\frac{\lambda}{N t}}}\log{\frac{(k_{\mu}+\frac{\lambda}{N t}+\frac{\lambda}{Nt})\lVert \theta_{t}^{(0)}-\hat{\theta}_{t}^{\text{MLE}} \rVert_{2}^{2}}{2\epsilon_{t}}}  = O\left(\sqrt{Nt}\log(Nt)\right)
\end{align*}
Therefore, the communication cost over time horizon $T$ is
\small
\begin{align*}
    C_{T} & = N \cdot \sum_{t \in \cS} J_{t}  \\
    & = N\cdot \bigl[\sqrt{\sqrt{NT}} \log(\sqrt{NT}) + \sqrt{2\sqrt{NT}} \log(2\sqrt{NT}) + \dots + \sqrt{\sqrt{NT}\cdot \sqrt{NT}} \log(\sqrt{NT}\cdot \sqrt{NT})\bigr]  \\
    & \leq N^{5/4}T^{1/4} \log(NT) \bigl[ \sqrt{1} + \sqrt{2} + \dots + \sqrt{\sqrt{NT}} \bigr]  \\
    & \leq N^{5/4}T^{1/4} \log(NT) \cdot\frac{3}{2}(\sqrt{NT} + \frac{1}{2})^{3/2} \\
    & = O(N^{2}T \log(NT))
\end{align*}
\normalsize
which finishes the proof.
\end{proof}

\subsection{\algone{}$_{2}$: scheduled communication}
For the second variant \algone{}$_{2}$, we enabled local update on top of \algone{}$_{1}$. Therefore, compared with the original algorithm \algone{}, the only difference is that \algone{}$_{2}$ uses scheduled communication instead of event-triggered communication. Its description is given in Algorithm \ref{algo:fedglb_variant_2}.

\begin{algorithm}[h]
    \caption{\algone{}$_{2}$} \label{algo:fedglb_variant_2}
  \begin{algorithmic}[1]
    \STATE \textbf{Input:} communication schedule $\cS$, regularization parameter $\lambda>0$, $\delta \in (0,1)$ and $c_{\mu}$.
    \STATE \textbf{Initialize} $\forall i\in[N]$: ${A}_{0,i}=\frac{\lambda}{c_{\mu}} \textbf{I} \in \bR^{d \times d}, b_{0,i}=\textbf{0} \in \bR^{d} ,\theta_{0,i}=\textbf{0} \in \bR^{d}, \Delta {A}_{0,i}= \textbf{0} \in \bR^{d \times d}$;
    ${A}_{0}=\frac{\lambda}{c_{\mu}} \textbf{I} \in \bR^{d \times d}, b_{0}=\textbf{0} \in \bR^{d}, \theta_{0}= \textbf{0} \in \bR^{d}$, $t_{\text{last}}=0$
    \FOR{ $t=1,2,...,T$}
        \FOR{client $i=1,2,...,N$}
            \STATE Observe arm set $\mathcal{A}_{t,i}$ for client $i$
            \STATE Select arm $\bx_{t,i}\in\mathcal{A}_{t,i}$ by Eq.\eqref{eq:UCB}, and observe reward $y_{t,i}$
            \STATE Update client $i$: ${A}_{t,i} = {A}_{t-1,i} + \bx_{t,i} \bx_{t,i}^{\top}$, $\Delta {A}_{t,i} = \Delta {A}_{t-1,i} + \bx_{t,i} \bx_{t,i}^{\top}$
        \ENDFOR
        \IF{$t \notin \cS$}
            \STATE \textbf{Clients} $\forall i\in [N]$: $\theta_{t,i}=\text{ONS-Update}(\theta_{t-1,i},A_{t,i},\nabla l(\bx_{t,i}^{\top} \theta_{t-1,i},y_{t,i}))$, $b_{t,i}=b_{t-1,i}+ \bx_{t,i}\bx_{t,i}^{\top}\theta_{t-1,i}$
        \ELSE
            \STATE \textbf{Clients} $\forall i \in [N]$: send $\Delta A_{t,i}$ to server, and reset $\Delta A_{t,i}=\textbf{0}$
            \STATE \textbf{Server} compute $A_{t}=A_{t_\text{last}}+\sum_{i=1}^{N}\Delta A_{t,i}$
            \STATE \textbf{Server} perform global model update $\theta_{t}=\text{AGD-Update}(\theta_{t_\text{last}},J_{t})$ (see Eq.\eqref{eq:AGD_J_upper_bound} for choice of $J_{t}$), $b_{t}=b_{t_\text{last}}+\sum_{i=1}^{N}\Delta A_{t,i}\theta_{t}$, and set $t_\text{last}=t$
            \STATE \textbf{Clients} $\forall i \in [N]$: set $\theta_{t,i}=\theta_{t}, A_{t,i}=A_{t}, b_{t,i}=b_{t}$
        \ENDIF
    \ENDFOR
  \end{algorithmic}
\end{algorithm}

The regret and communication cost of \algone{}$_{2}$ is given in the following theorem.
\begin{theorem}[Regret and Communication Cost Upper Bound of \algone{}$_{2}$]
Under the condition that $\epsilon_{t} = \frac{1}{N^{2}t^{2}}$, and the total number of global synchronizations $B=d^{2}N\log(NT)$, the cumulative regret $R_{T}$ has upper bound
\begin{align*}
    R_{T} = O\left(\frac{k_{\mu}(k_{\mu}+R_{\max})}{c_{\mu}}d\sqrt{NT}\log(NT/\delta) \sqrt{\log\frac{T}{d^{2}N \log{NT}}}\right)
\end{align*}
with probability at least $1-\delta$. The cumulative communication cost has upper bound
\begin{align*}
    C_{T} = O(d^{2}N^{2.5} \sqrt{T} \log^{2}(NT))
\end{align*}
\end{theorem}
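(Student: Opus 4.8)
The plan is to run the good-epoch/bad-epoch decomposition of the proof of Theorem \ref{thm:regret_comm_upper_bound}, exploiting that \algone{}$_{2}$ differs from \algone{} only in replacing the event-trigger by the deterministic schedule $\cS$ of $B$ equally spaced updates (epoch length $\lfloor T/B\rfloor$). First I would observe that the confidence-ellipsoid machinery transfers verbatim: \algone{}$_{2}$ builds the same \emph{offline-and-online} estimator $\hat{\theta}_{t,i}$, so Lemma \ref{lem:confidence_ellipsoid_fedglb} and Corollary \ref{corollary:order_ellipsoid} give $\alpha_{t,i}=O\!\big(\frac{k_{\mu}+R_{\max}}{c_{\mu}}\sqrt{d\log(NT/\delta)}\big)$ under $\epsilon_{t}=\frac{1}{N^{2}t^{2}}$. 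In fact the self-normalized step is \emph{cleaner} than in Theorem \ref{thm:regret_comm_upper_bound}: with a deterministic schedule the epoch boundaries do not depend on the data, so the ``batched filtration'' subtlety of Appendix \ref{sec:technical_lemmas} does not arise. The instantaneous bound $r_{t,i}\le 2k_{\mu}\alpha_{t-1,i}\lVert \bx_{t,i}\rVert_{A_{t-1,i}^{-1}}$, the imaginary centralized covariance $\tilde{A}_{t-1,i}$, and the determinant-ratio comparison $\det(\tilde{A}_{t-1,i})/\det(A_{t-1,i})$ are all identical to Theorem \ref{thm:regret_comm_upper_bound}.

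For the good epochs (those with $\det(A_{t_{p}})/\det(A_{t_{p-1}})\le 2$) the ratio is at most $2$ at every in-epoch step, since $A_{t-1,i}\succcurlyeq A_{t_{p-1}}$ and $\tilde{A}_{t-1,i}\preccurlyeq A_{t_{p}}$ (the latter still holding with local ONS updates enabled). Hence $r_{t,i}\le 2\sqrt{2}\,k_{\mu}\alpha_{t-1,i}\lVert \bx_{t,i}\rVert_{\tilde{A}_{t-1,i}^{-1}}$, and summing over all $NT$ steps with the elliptical-potential lemma applied to $\tilde{A}$ gives the centralized rate $REG_{good}=O\!\big(\frac{k_{\mu}(k_{\mu}+R_{\max})}{c_{\mu}}d\sqrt{NT}\log NT\big)$, exactly as for \algone{}.

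The bad epochs produce the extra $\sqrt{\log(T/B)}$ factor, and this is the heart of the argument. As in Theorem \ref{thm:regret_comm_upper_bound} there are at most $R=O(d\log NT)$ of them, since $\sum_{p}\log\frac{\det A_{t_{p}}}{\det A_{t_{p-1}}}\le R$ and each bad epoch contributes more than $\log 2$. Within one fixed-length epoch of $\lfloor T/B\rfloor$ steps client $i$ performs only local ONS updates, so its covariance gains at most $T/B$ rank-one terms; the elliptical-potential lemma then yields $\sum_{t\in\text{epoch}}\min(1,\lVert \bx_{t,i}\rVert_{A_{t-1,i}^{-1}}^{2})\le 2\log\frac{\det A_{t_{p}-1,i}}{\det A_{t_{p-1}}}=O(d\log(T/B))$, using $A_{t_{p-1}}\succcurlyeq \frac{\lambda}{c_{\mu}}I$. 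A Cauchy--Schwarz step over the epoch bounds one client's in-epoch regret by $2k_{\mu}\alpha\sqrt{(T/B)\,O(d\log(T/B))}$; summing over the $N$ clients and the $O(d\log NT)$ bad epochs and inserting $\alpha=O\!\big(\frac{k_{\mu}+R_{\max}}{c_{\mu}}\sqrt{d\log NT}\big)$ gives $REG_{bad}=O\!\big(\frac{k_{\mu}(k_{\mu}+R_{\max})}{c_{\mu}}\,d^{2}N(\log NT)^{3/2}\sqrt{T/B}\,\sqrt{\log(T/B)}\big)$. Choosing $B=d^{2}N\log NT$ collapses $d^{2}N(\log NT)^{3/2}\sqrt{T/B}$ to $d\sqrt{NT}\log NT$, so $REG_{bad}=O\!\big(\frac{k_{\mu}(k_{\mu}+R_{\max})}{c_{\mu}}d\sqrt{NT}\log NT\,\sqrt{\log\frac{T}{d^{2}N\log NT}}\big)$, which dominates $REG_{good}$ and yields the claimed $R_{T}$ after a union bound and adjustment of the $\delta$-constants.

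For the communication cost I would use $C_{T}=N\sum_{t\in\cS}J_{t}$: each of the $B$ scheduled updates runs AGD, and by Eq.\eqref{eq:AGD_J_upper_bound} with $\epsilon_{t}=\frac{1}{N^{2}t^{2}}$ one has $J_{t}\le J_{T}=O(\sqrt{NT}\log NT)$, while each AGD iteration communicates with all $N$ clients, so $C_{T}=B\cdot N\cdot O(\sqrt{NT}\log NT)=O(d^{2}N^{2.5}\sqrt{T}\log^{2}NT)$ once $B=d^{2}N\log NT$ is substituted. The main obstacle is precisely the bad-epoch bound: unlike \algone{}, there is no event-trigger certifying that a long epoch carries little regret, so each client's in-epoch log-determinant growth must be controlled purely through the fixed length $T/B$; obtaining the mild $\sqrt{\log(T/B)}$ loss (rather than a polynomial-in-epoch-length loss) hinges on pairing the per-epoch Cauchy--Schwarz with the elliptical-potential estimate instead of the crude uniform regret bound used for \algone{}$_{1}$.
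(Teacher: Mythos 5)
Your proposal is correct and follows essentially the same route as the paper's own proof: the same good/bad epoch decomposition with the confidence ellipsoid of Lemma \ref{lem:confidence_ellipsoid_fedglb} carried over, the same per-epoch Cauchy--Schwarz plus elliptical-potential bound yielding the per-client factor $\sqrt{(T/B)\,d\log(T/B)}$ in bad epochs, the same count of $O(d\log NT)$ bad epochs, the same choice $B=d^{2}N\log(NT)$, and the same $C_{T}=N\sum_{t\in\cS}J_{t}$ accounting with $J_{t}=O(\sqrt{NT}\log NT)$. Your added remark that the deterministic schedule sidesteps the batched-filtration subtlety of Appendix \ref{sec:technical_lemmas} is a correct observation beyond what the paper states, but it does not change the argument.
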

\begin{proof}
Compared with the analysis for \algone{}, the main difference in the analysis for \algone{}$_{2}$ is how we bound the regret incurred in `bad epochs'.
Using the same argument, the cumulative regret for the `good epochs' is $REG_{good}=O(\frac{k_{\mu}(k_{\mu}+R_{\max})}{c_{\mu}}d\sqrt{NT}\log{NT/\delta})$. 

Now consider a particular bad epoch $p \in [B]$. Then the cumulative regret incurred by all $N$ clients during this `bad epoch' can be upper bounded by:
\begin{align*}
    & \sum_{t=t_{p-1}+1}^{t_{p}} \sum_{i=1}^{N} r_{t,i} \\
    & \leq O(\frac{k_{\mu}(k_{\mu}+R_{\max})}{c_{\mu}}\sqrt{d\log(NT/\delta)})\sum_{t=t_{p-1}+1}^{t_{p}}\sum_{i=1}^{N}\min(1,||\bx_{t,i}||_{A_{t-1,i}^{-1}})  \\
    & \leq  O(\frac{k_{\mu}(k_{\mu}+R_{\max})}{c_{\mu}}\sqrt{d\log(NT/\delta)}) \sum_{i=1}^{N} \sqrt{(t_{p}-t_{p-1})\log{\frac{\det(A_{t_{p}-1,i})}{\det(A_{t_{p}-1,i}-\Delta{A}_{t_{p}-1,i})}}} \\
    & \leq O(\frac{k_{\mu}(k_{\mu}+R_{\max})}{c_{\mu}}d N\sqrt{\log{(NT/\delta)}} \sqrt{\frac{T}{B}\log(\frac{T}{B})})
\end{align*}
where the last inequality is because all epochs has length $\frac{T}{B}$ as defined by $\cS$.
Again, since there can be at most $O(d \log{NT})$ `bad epochs', the cumulative regret for the `bad epochs' is upper bounded by $$REG_{bad}=O(\frac{k_{\mu}(k_{\mu}+R_{\max})}{c_{\mu}}d^{2}\log^{1.5}{(NT/\delta)}N\sqrt{\frac{T}{B}\log(\frac{T}{B})}).$$

Combining the cumulative regret for both `good' and `bad' epochs, and setting $B=d^{2}N\log(NT)$, we have
\begin{align*}
    R_{T} = O\left(\frac{k_{\mu}(k_{\mu}+R_{\max})}{c_{\mu}}d\sqrt{NT}\log(NT/\delta) \sqrt{\log(\frac{T}{d^{2}N \log{NT}})}\right)
\end{align*}


Now that \algone{}$_{2}$ has $B=d^{2}N\log(NT)$ global updates in total, and during each global update, there are $J_{t}=O(\sqrt{NT}\log(NT))$ rounds of communications, for $t\in\cS$. Therefore, the communication cost over time horizon $T$ is
\begin{align*}
    C_{T} & = N \cdot \sum_{t \in \cS} J_{t}  = O( N \cdot d^{2}N\log(NT) \cdot \sqrt{NT}\log(NT))  \\
    & = O(d^{2}N^{2.5} \sqrt{T} \log^{2}(NT))
\end{align*}
which finishes the proof.
\end{proof}

\subsection{\algone{}$_{3}$: scheduled communication + ONS for global update}
The previous two variants both adopt iterative optimization method, i.e., AGD, for the global update, which introduces a $\sqrt{NT}\log(NT)$ factor in the communication cost. In this section, we try to avoid this by studying the third variant \algone{}$_{3}$ that adopts ONS for both local and global update, such that only one step of ONS is performed (based on all new data samples $N$ clients collected in this epoch). It can be viewed as the ONS-GLM algorithm \citep{jun2017scalable} with lazy batch update.
\begin{algorithm}[h]
    \caption{\algone{}$_{3}$}
  \begin{algorithmic}[1]
    \STATE \textbf{Input:} communication schedule $\cS$, regularization parameter $\lambda>0$, $\delta \in (0,1)$ and $c_{\mu}$
    \STATE \textbf{Initialize} $\forall i\in[N]$: $\theta_{0,i}=\textbf{0} \in \bR^{d}, {A}_{0,i}=\lambda \textbf{I} \in \bR^{d \times d}, V_{0,i}= \lambda \textbf{I} \in \bR^{d \times d}, b_{0,i}=\textbf{0} \in \bR^{d}$;
    $\theta_{0}= \textbf{0} \in \bR^{d}, A_{0}=\lambda \textbf{I} \in \bR^{d \times d}, V_{0}=\lambda \textbf{I} \in \bR^{d \times d}, b_{0}=\textbf{0} \in \bR^{d}$, $t_{\text{last}}=0$
    \FOR{ $t=1,2,...,T$}
        \FOR{client $i=1,2,...,N$}
            \STATE Observe arm set $\mathcal{A}_{t,i}$ for client $i \in [N]$
            \STATE Select arm $\bx_{t,i} = \argmax_{\bx \in \mathcal{A}_{t,i}} \bx^{\top} \hat{\theta}_{t-1,i} + \alpha_{t-1,i} \lVert \bx \rVert_{V_{t-1,i}^{-1}}$, where $\hat{\theta}_{t-1,i}=V_{t-1,i}^{-1}b_{t-1,i}$ and $\alpha_{t-1,i}$ is given in Lemma \ref{lem:confidence_ellipsoid_variant_3}; and then observe reward $y_{t,i}$
            \STATE Compute loss $l(z_{t,i},y_{t,i})$, where $z_{t,i}=\bx_{t,i}^{\top}\theta_{t-1,i}$
            \STATE Update client $i$: $A_{t,i} = {A}_{t-1,i} + \nabla l(z_{t,i},y_{t,i}) \nabla l(z_{t,i},y_{t,i})^{\top}$, $V_{t,i} = V_{t-1,i} + \bx_{t,i} \bx_{t,i}^{\top}$
        \ENDFOR
        \IF{$t \notin \cS$}
            \STATE \textbf{Clients} $\forall i \in [N]$: $\theta_{t,i}=\text{ONS-Update}(\theta_{t-1,i},A_{t,i},\nabla l(z_{t,i},y_{t,i}))$, $b_{t,i}=b_{t-1,i}+ \bx_{t,i} z_{t,i}$
        \ELSE
            \STATE \textbf{Clients} $\forall i \in [N]$: send gradient $\nabla F_{t,i}(\theta_{t_\text{last}})=\sum_{s=t_\text{last}+1}^{t} \nabla l(\bx_{s,i}^{\top} \theta_{t_\text{last}},y_{s,i})$ and $\Delta V_{t,i}=V_{t,i}-V_{t_\text{last},i}$ to server
            \STATE \textbf{Server} $A_{t}=A_{t_\text{last}} + (\sum_{i=1}^{N}\nabla F_{t,i}(\theta_{t_\text{last}})) (\sum_{i=1}^{N}\nabla F_{t,i}(\theta_{t_\text{last}}))^{\top}$, $V_{t}=V_{t_\text{last}}+\sum_{i=1}^{N}\Delta V_{t,i}$, $b_{t}=b_{t_\text{last}}+\sum_{i=1}^{N}\Delta V_{t,i} \theta_{t_\text{last}}$, $\theta_{t}=\text{ONS-Update}(\theta_{t_\text{last}},A_{t},\sum_{i=1}^{N}\nabla F_{t,i}(\theta_{t_\text{last}}) )$
            \STATE \textbf{Clients} $\forall i \in [N]$: $\theta_{t,i}=\theta_{t}, A_{t,i}=A_{t}, V_{t}=V_{t}, b_{t,i}=b_{t}$
        \STATE Set $t_{\text{last}}=t$
        \ENDIF
    \ENDFOR
  \end{algorithmic}
\end{algorithm}

Recall that the update schedule is denoted as $\cS=\{t_{1}:=\lfloor\frac{T}{B}\rfloor,t_{2}:=2\lfloor \frac{T}{B}\rfloor,\dots, t_{q}:=q\lfloor \frac{T}{B}\rfloor,\dots,t_{B}:=B\lfloor\frac{T}{B}\rfloor\}$,  where $B$ denotes the total number of global updates up to $T$.
Compared with \citep{jun2017scalable}, the main difference in our construction is that the loss function in the online regression problem may contain multiple data samples, i.e., for global update, or one single data sample, i.e., for local update.
Then for a client $i\in[N]$ at time step $t$ (suppose $t$ is in the $(q+1)$-th epoch, so $t \in [t_{q}+1, t_{q+1}]$), the sequence of loss functions observed by the online regression estimator till time $t$ is:
\scriptsize
\begin{align*}
    \underbrace{\sum_{s=1}^{t_{1}} \sum_{i=1}^{N} l(\bx_{s,i}^{\top}\theta_{0},y_{s,i}) , \sum_{s=t_{1}+1}^{t_{2}}\sum_{i=1}^{N} l(\bx_{s,i}^{\top}\theta_{t_{1}},y_{s,i}), \dots, \sum_{s=t_{q-1}+1}^{t_{q}}\sum_{i=1}^{N} l(\bx_{s,i}^{\top}\theta_{t_{q-1}},y_{s,i})}_{\text{global updates at $t_{1},t_{2},\dots,t_{q}$}}, \underbrace{l(\bx_{t_{q}+1,i}^{\top}\theta_{t_{q}},y_{t_{q}+1,i}), \dots, l(\bx_{t,i}^{\top} \theta_{t-1,i},y_{t,i})}_{\text{local updates at $t_{q}+1,\dots,t$}}
\end{align*}
\normalsize
We can see that the first $q$ terms correspond to the global ONS updates that are computed using the whole batch of data collected by $N$ clients in each epoch, and the remaining $t-t_{q}$ terms are local ONS updates that are computed using each new data sample collected by client $i$ in the $(q+1)$-th epoch.

To facilitate further analysis, we introduce a new set of indices for the data samples, so that we can unify the notations for the loss functions above. Imagine all the arm pulls are performed by an imaginary centralized agent, such that, in each time step $t\in[T]$, it pulls an arm for clients $1,2,\dots,N$ one by one. 
Therefore, the sequence of data sample obtained by this imaginary agent can be denoted as $(\bx_{1},y_{1}),(\bx_{2},y_{2}),\dots,(\bx_{s},y_{s}),\dots,(\bx_{NT},y_{NT})$.
Moreover, we denote $n_{p}$ as the total number of data samples collected by all $N$ clients till the $p$-th ONS update (including both global and local ONS update), and denote the updated model as $\theta_{p}$, for $p \in [P]$. Note that $P$ denotes the total number of updates up to time $t$ (total number of terms in the sequence above), such that $P=q+t-t_{q}$.
Then this sequence of loss functions can be rewritten as:
\begin{align*}
    \underbrace{F_{1}(\theta_{0}),F_{2}(\theta_{1}),\dots, F_{q}(\theta_{q-1})}_{\text{global updates}}, \underbrace{F_{q+1}(\theta_{q}),\dots, F_{P}(\theta_{P-1})}_{\text{local updates}}
\end{align*}
where $F_{p}(\theta_{p-1})=\sum_{s=n_{p-1}+1}^{n_{p}}l(\bx_{s}^{\top}\theta_{p-1},y_{s})$, for $p \in [P]$.

\noindent\textbf{$\bullet$ Online regret upper bound for lazily-updated ONS}
To construct the confidence ellipsoid based on this sequence of global and local ONS updates, we first need to upper bound the online regret that ONS incurs on this sequence of loss functions, which is given in Lemma \ref{lem:third_variant_online_regret}.
\begin{lemma}[Online regret upper bound] \label{lem:third_variant_online_regret}
Under the condition that the learning rate of ONS is set to $\gamma=\frac{1}{2} \min(\frac{1}{4S\sqrt{k_{\mu}^{2} S^{2}+R_{\max}^{2}}}, \frac{c_{\mu}}{(k_{\mu}^{2} S^{2}+R_{\max}^{2})\max_{p\in[P]}(n_{p}-n_{p-1})})$, then the cumulative online regret over $P$ steps
\begin{align*}
    & \sum_{p=1}^{P} F_{p}(\theta_{p-1}) - F_{p}(\theta_{\star}) \leq B_{P}
\end{align*}
where $B_{P}=\frac{1}{2 \gamma} \sum_{p=1}^{P}||\nabla F_{p}(\theta_{p-1})||_{A_{p}^{-1}}^{2} + 2 \gamma \lambda S^{2}$.

\end{lemma}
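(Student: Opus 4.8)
The plan is to follow the classical online-Newton-step regret analysis of \cite{hazan2007logarithmic} (as adapted to GLB in \cite{jun2017scalable}), but generalized to a sequence of \emph{batched} losses $F_{p}$, each of which is itself a sum of $n_{p}-n_{p-1}$ single-sample GLM losses. Concretely, I would first establish a surrogate inequality
\begin{equation*}
F_{p}(\theta_{p-1}) - F_{p}(\theta_{\star}) \leq \nabla F_{p}(\theta_{p-1})^{\top}(\theta_{p-1}-\theta_{\star}) - \frac{\gamma}{2}\bigl[\nabla F_{p}(\theta_{p-1})^{\top}(\theta_{p-1}-\theta_{\star})\bigr]^{2},
\end{equation*}
then feed the linear term into the standard ONS potential argument and telescope. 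Writing $\nabla_{p}:=\nabla F_{p}(\theta_{p-1})$ and recalling from the algorithm that $A_{p}=A_{p-1}+\nabla_{p}\nabla_{p}^{\top}$ with $A_{0}=\lambda I$, the generalized-projection property (Lemma 8 of \cite{hazan2007logarithmic}) gives $\lVert \theta_{p}-\theta_{\star}\rVert_{A_{p}}^{2}\leq \lVert \theta_{p-1}-\theta_{\star}-\tfrac{1}{\gamma}A_{p}^{-1}\nabla_{p}\rVert_{A_{p}}^{2}$; expanding and using $\lVert \theta_{p-1}-\theta_{\star}\rVert_{A_{p}}^{2}=\lVert \theta_{p-1}-\theta_{\star}\rVert_{A_{p-1}}^{2}+(\nabla_{p}^{\top}(\theta_{p-1}-\theta_{\star}))^{2}$ produces a per-step bound on $\nabla_{p}^{\top}(\theta_{p-1}-\theta_{\star})$ whose quadratic term cancels exactly the one in the surrogate inequality (this is where $2\gamma\le\alpha$ enters). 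Summing over $p$ telescopes $\tfrac{\gamma}{2}\bigl(\lVert\theta_{p-1}-\theta_{\star}\rVert_{A_{p-1}}^{2}-\lVert\theta_{p}-\theta_{\star}\rVert_{A_{p}}^{2}\bigr)$, and dropping the nonpositive final term and bounding the initial term by $\tfrac{\gamma}{2}\lVert\theta_{0}-\theta_{\star}\rVert_{A_{0}}^{2}=\tfrac{\gamma\lambda}{2}\lVert\theta_{\star}\rVert_{2}^{2}\le 2\gamma\lambda S^{2}$ (using the diameter $2S$ of $\cB_{d}(S)$) yields exactly $B_{P}$.

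Establishing the surrogate inequality for the \emph{batched} loss is the main obstacle. Each single-sample loss $l(\bx_{s}^{\top}\theta,y_{s})=-y_{s}\bx_{s}^{\top}\theta+m(\bx_{s}^{\top}\theta)$ has Hessian $\dot\mu(\bx_{s}^{\top}\theta)\bx_{s}\bx_{s}^{\top}\succeq c_{\mu}\bx_{s}\bx_{s}^{\top}$ by Assumption~\ref{assump:1}, so $\nabla^{2}F_{p}(\theta)\succeq c_{\mu}\sum_{s=n_{p-1}+1}^{n_{p}}\bx_{s}\bx_{s}^{\top}$. To get an exp-concavity bound I would relate $\nabla_{p}\nabla_{p}^{\top}$ to $\nabla^{2}F_{p}$: since $\nabla_{p}=\sum_{s}[\mu(\bx_{s}^{\top}\theta_{p-1})-y_{s}]\bx_{s}$, Cauchy--Schwarz gives for any $v$ that $(v^{\top}\nabla_{p})^{2}\le\bigl(\sum_{s}[\mu(\bx_{s}^{\top}\theta_{p-1})-y_{s}]^{2}\bigr)\bigl(\sum_{s}(v^{\top}\bx_{s})^{2}\bigr)$, and each squared residual is at most $k_{\mu}^{2}S^{2}+R_{\max}^{2}$ (by $k_{\mu}$-Lipschitzness of $\mu$, $\lVert\bx_{s}\rVert\le1$, $\lVert\theta\rVert\le S$, and $|\eta_{s}|\le R_{\max}$). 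Hence $\nabla_{p}\nabla_{p}^{\top}\preceq (n_{p}-n_{p-1})(k_{\mu}^{2}S^{2}+R_{\max}^{2})\sum_{s}\bx_{s}\bx_{s}^{\top}\preceq \tfrac{(n_{p}-n_{p-1})(k_{\mu}^{2}S^{2}+R_{\max}^{2})}{c_{\mu}}\nabla^{2}F_{p}$, so $F_{p}$ is $\alpha_{p}$-exp-concave with $\alpha_{p}=c_{\mu}/[(k_{\mu}^{2}S^{2}+R_{\max}^{2})(n_{p}-n_{p-1})]$. The crucial qualitative point is that \emph{the exp-concavity parameter degrades linearly with the batch size}; taking a uniform $\alpha=\min_{p}\alpha_{p}$ forces the factor $\max_{p}(n_{p}-n_{p-1})$ into the denominator of the second term of the stated learning rate, and it is precisely this degradation that will later explain the inferior $(NT)^{3/4}$ regret of \algone{}$_{3}$.

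With exp-concavity in hand, the surrogate inequality follows by the standard convexity of $\exp(-2\gamma F_{p})$: when $2\gamma\le\alpha$ one has $\exp(-2\gamma F_{p}(\theta_{\star}))\le \exp(-2\gamma F_{p}(\theta_{p-1}))\bigl(1-2\gamma\,\nabla_{p}^{\top}(\theta_{\star}-\theta_{p-1})\bigr)$, and taking $-\tfrac{1}{2\gamma}\log$ together with $\log(1+z)\le z-\tfrac{z^{2}}{4}$ for $|z|\le1$ produces the quadratic term $-\tfrac{\gamma}{2}(\nabla_{p}^{\top}(\theta_{p-1}-\theta_{\star}))^{2}$. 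Validity of the logarithmic step requires $|z|=2\gamma|\nabla_{p}^{\top}(\theta_{p-1}-\theta_{\star})|\le1$, which is what the first term $\tfrac{1}{4S\sqrt{k_{\mu}^{2}S^{2}+R_{\max}^{2}}}$ of the learning rate enforces via sample-wise bounds on the residuals and predictions. Assembling the surrogate inequality with the telescoping potential then gives $\sum_{p=1}^{P}[F_{p}(\theta_{p-1})-F_{p}(\theta_{\star})]\le \tfrac{1}{2\gamma}\sum_{p=1}^{P}\lVert\nabla_{p}\rVert_{A_{p}^{-1}}^{2}+2\gamma\lambda S^{2}=B_{P}$, completing the proof. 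I expect the batched exp-concavity bound and the careful tracking of the batch-size factors through the two regimes of the learning-rate condition to be the only genuinely delicate parts; the remaining projection-and-telescoping manipulations are routine ONS algebra.
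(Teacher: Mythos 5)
Your proposal follows essentially the same route as the paper's proof: the batched exp-concavity bound $\nabla F_{p}(\theta_{p-1})\nabla F_{p}(\theta_{p-1})^{\top} \preccurlyeq \frac{(n_{p}-n_{p-1})(k_{\mu}^{2}S^{2}+R_{\max}^{2})}{c_{\mu}}\nabla^{2}F_{p}(\theta_{p-1})$ via Cauchy--Schwarz on the per-sample residuals, the resulting surrogate quadratic inequality (which the paper invokes as Lemma 4.3 of Hazan rather than re-deriving through the $\log(1+z)$ trick as you do), and the generalized-projection-plus-telescoping ONS potential argument with the initial term bounded by $2\gamma\lambda S^{2}$. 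The only difference is organizational---you cancel the quadratic term per-step while the paper sums first and then rearranges---so the two proofs coincide in substance, including the key observation that the exp-concavity parameter degrades linearly in the batch size, which forces $\max_{p\in[P]}(n_{p}-n_{p-1})$ into the learning rate.
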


\begin{proof}[Proof of Lemma \ref{lem:third_variant_online_regret}]
Recall from the proof of Corollary \ref{corollary:order_ellipsoid} that $|\mu(\bx_{s}^{\top}\theta_{p-1})-y_{s}| \leq \sqrt{k_{\mu}^{2} S^{2}+R_{\max}^{2}}:=G,\forall s$.
First, we need to show that $F_{p}(\theta_{p-1})=\sum_{s=n_{p-1}+1}^{n_{p}}l(\bx_{s}^{\top}\theta_{p-1},y_{s})$ is $\frac{c_{\mu}}{(n_{p}-n_{p-1})G^{2}}$-exp-concave, or equivalently, $\nabla^{2} F_{p}(\theta_{p-1}) \succcurlyeq \frac{c_{\mu}}{(n_{p}-n_{p-1})G^{2}} \nabla F_{p}(\theta_{p-1}) \nabla F_{p}(\theta_{p-1})^{\top}$ (Lemma 4.2 in \citep{hazan2019introduction}). Taking first and second order derivative of $F_{p}(\theta_{p-1})$ w.r.t. $\theta_{p-1}$, we have
\begin{align*}
    & \nabla F_{p}(\theta_{p-1}) = \sum_{s=n_{p-1}+1}^{n_{p}} \bx_{s}[-y_{s}+\mu(\bx_{s}^{\top}\theta_{p-1})]=\bX_{p}^{\top}[\mu(\bX_{p}\theta_{p-1})-\by_{p}],  \\
    & \nabla^{2} F_{p}(\theta_{p-1}) = \sum_{s=n_{p-1}+1}^{n_{p}} \bx_{s} \bx_{s}^{\top} \dot{\mu}(\bx_{s}^{\top}\theta_{p-1}) 
\end{align*}
where $\bX_{p}=[\bx_{n_{p-1}+1}, \bx_{n_{p-1}+2},\dots,\bx_{n_{p}}]^{\top} \in \bR^{(n_{p}-n_{p-1})\times d}$, and $\by_{p}=[y_{n_{p-1}+1}, y_{n_{p-1}+2}, \dots, y_{n_{p}}]^{\top} \in \bR^{n_{p}-n_{p-1}}$.
Then due to Assumption 1, we have $\nabla^{2} F_{p}(\theta_{p-1}) \succcurlyeq c_{\mu} \sum_{s=n_{p-1}+1}^{n_{p}} \bx_{s} \bx_{s}^{\top}=c_{\mu} \bX_{p}^{\top}\bX_{p}$. 
For any vector $u\in \bR^{d}$, we can show that,
\begin{align*}
    & u^{\top}\nabla F_{p}(\theta_{p-1}) \nabla F_{p}(\theta_{p-1})^{\top} u \\
    &= u^{\top} \bX_{p}^{\top}[\mu(\bX_{p}\theta_{p-1})-\by_{p}] [\mu(\bX_{p}\theta_{p-1})-\by_{p}]^{\top}\bX_{p} u  \\
    & = \bigl[(\bX_{p}u)^{\top} [\mu(\bX_{p}\theta_{p-1})-\by_{p}] \bigr]^{2} \\
    &\leq \lVert \bX_{p}u \rVert_{2}^{2} \cdot \lVert \mu(\bX_{p}\theta_{p-1})-\by_{p} \rVert_{2}^{2} \\
    & \leq u^{\top} \bX_{p}^{\top}\bX_{p} u \cdot (n_{p}-n_{p-1})G^{2}
\end{align*}
where the first inequality is due to Cauchy-Schwarz inequality, and the second inequality is because $\lVert \mu(\bX_{p}\theta_{p-1})-\by_{p} \rVert_{2}^{2}=\sum_{s=n_{p-1}+1}^{n_{p}} [-y_{s}+\mu(\bx_{s}^{\top}\theta_{p-1})]^{2} \leq (n_{p}-n_{p-1})G^{2}$.
Therefore, $\bX_{p}^{\top}\bX_{p} \succcurlyeq \frac{1}{(n_{p}-n_{p-1})G^{2}} \nabla F_{p}(\theta_{p-1}) \nabla F_{p}(\theta_{p-1})^{\top}$, which gives us
\begin{align*}
    & \nabla^{2} F_{p}(\theta_{p-1}) \succcurlyeq \frac{c_{\mu}}{(n_{p}-n_{p-1})G^{2}} \nabla F_{p}(\theta_{p-1}) \nabla F_{p}(\theta_{p-1})^{\top} 
\end{align*}

Then due to Lemma 4.3 of \citep{hazan2019introduction}, under the condition that $\gamma_{p} \leq \frac{1}{2} \min(\frac{1}{4GS}, \frac{c_{\mu}}{(n_{p}-n_{p-1})G^{2}})$, we have

\begin{align} \label{eq:OCO_instantaneous_regret}
    F_{p}&(\theta_{p-1}) - F_{p}(\theta_{\star}) \nonumber \\
    & \leq \nabla F_{p}(\theta_{p-1})^{\top} (\theta_{p-1}-\theta_{\star}) - \frac{\gamma_{p}}{2}(\theta_{p-1}-\theta_{\star})^{\top} \nabla F_{p}(\theta_{p-1}) \nabla F_{p}(\theta_{p-1})^{\top} (\theta_{p-1}-\theta_{\star})
\end{align}
Then we start to upper bound the RHS of the inequality above. 
Recall that the ONS update rule is:
\begin{align*}
    & \theta_{p}^{\prime} = \theta_{p-1} - \frac{1}{\gamma} A_{p}^{-1} \nabla F_{p}(\theta_{p-1})  \\
    & \theta_{p} = \argmin_{\theta \in \Theta} ||\theta_{p}^{\prime} - \theta||^{2}_{A_{p}}
\end{align*}
where $A_{p}=\sum_{\rho=1}^{p} \nabla F_{\rho}(\theta_{\rho-1})\nabla F_{\rho}(\theta_{\rho-1})^{\top} $, and $\gamma$ is set to $\min_{p\in[P]}\gamma_{p}= \frac{1}{2} \min(\frac{1}{4GS}, \frac{c_{\mu}}{G^{2}\max_{p\in[P]}(n_{p}-n_{p-1})})$. So we have
\begin{align*}
    \theta_{p}^{\prime} - \theta_{\star} = \theta_{p-1} - \theta_{\star} - \frac{1}{\gamma} A_{p}^{-1} \nabla F_{p}(\theta_{p-1})
\end{align*}
Then due to the property of the generalized projection, and by substituting into the update rule, we have
\small
\begin{align*}
   & ||\theta_{p}-\theta_{\star}||^{2}_{A_{p}} \leq ||\theta_{p}^{\prime}-\theta_{\star}||^{2}_{A_{p}} \leq ||\theta_{p-1}-\theta_{\star}||_{A_{p}}^{2} - \frac{2}{\gamma} (\theta_{p-1}-\theta_{\star})^{\top}\nabla F_{p}(\theta_{p-1})  + \frac{1}{\gamma^{2}} ||\nabla F_{p}(\theta_{p-1})||_{A_{p}^{-1}}^{2}
\end{align*}
\normalsize
By rearranging terms, 
\begin{align*}
    & \nabla F_{p}(\theta_{p-1})^{\top} (\theta_{p-1}-\theta_{\star})  \leq \frac{1}{2 \gamma} ||\nabla F_{p}(\theta_{p-1})||_{A_{p}^{-1}}^{2} + \frac{\gamma}{2} \bigl( ||\theta_{p-1}-\theta_{\star}||_{A_{p}}^{2} -  ||\theta_{p}-\theta_{\star}||^{2}_{A_{p}} \bigr)
\end{align*}
After summing over $P$ steps, we have
\small
\begin{align*}
    & \sum_{p=1}^{P} \nabla F_{p}(\theta_{p-1})^{\top} (\theta_{p-1}-\theta_{\star}) \leq \frac{1}{2 \gamma} \sum_{p=1}^{P}||\nabla F_{p}(\theta_{p-1})||_{A_{p}^{-1}}^{2} + \frac{\gamma}{2}\sum_{p=1}^{P} \bigl( ||\theta_{p-1}-\theta_{\star}||_{A_{p}}^{2} -  ||\theta_{p}-\theta_{\star}||^{2}_{A_{p}} \bigr)
\end{align*}
\normalsize
The second term can be simplified,
\begin{align*}
    & \sum_{p=1}^{P} \bigl( ||\theta_{p-1}-\theta_{\star}||_{A_{p}}^{2} -  ||\theta_{p}-\theta_{\star}||^{2}_{A_{p}} \bigr) \\
    & = ||\theta_{0}-\theta_{\star}||_{A_{1}}^{2} + \sum_{p=2}^{P} \bigl( ||\theta_{p-1}-\theta_{\star}||_{A_{p}}^{2} -  ||\theta_{p-1}-\theta_{\star}||^{2}_{A_{p-1}} \bigr) - ||\theta_{P}-\theta_{\star}||_{A_{P}}^{2}  \\
    & \leq ||\theta_{0}-\theta_{\star}||_{A_{1}}^{2} + \sum_{p=2}^{P} \bigl( ||\theta_{p-1}-\theta_{\star}||_{A_{p}}^{2} -  ||\theta_{p-1}-\theta_{\star}||^{2}_{A_{p-1}} \bigr) \\
    & = ||\theta_{0}-\theta_{\star}||_{A_{1}}^{2} + \sum_{p=2}^{P}||\theta_{p-1}-\theta_{\star}||_{\nabla F_{p}(\theta_{p-1})\nabla F_{p}(\theta_{p-1})^{\top}}^{2} \\
    & = ||\theta_{0}-\theta_{\star}||_{A_{1}}^{2} + \sum_{p=1}^{P}||\theta_{p-1}-\theta_{\star}||_{\nabla F_{p}(\theta_{p-1})\nabla F_{p}(\theta_{p-1})^{\top}}^{2} - ||\theta_{0}-\theta_{\star}||_{\nabla F_{1}(\theta_{0})\nabla F_{1}(\theta_{0})^{\top}}^{2} \\
    & = 4 \lambda S^{2} + \sum_{p=1}^{P}||\theta_{p-1}-\theta_{\star}||_{\nabla F_{p}(\theta_{p-1})\nabla F_{p}(\theta_{p-1})^{\top}}^{2}
\end{align*}
which leads to
\begin{align*}
    \sum_{p=1}^{P} \nabla F_{p}(\theta_{p-1})^{\top} (\theta_{p-1}-\theta_{\star}) & \leq \frac{1}{2 \gamma} \sum_{p=1}^{P}||\nabla F_{p}(\theta_{p-1})||_{A_{p}^{-1}}^{2} + 2 \gamma \lambda S^{2} \\
    & \quad + \frac{\gamma}{2}\sum_{p=1}^{P}||\theta_{p-1}-\theta_{\star}||_{\nabla F_{p}(\theta_{p-1})\nabla F_{p}(\theta_{p-1})^{\top}}^{2}
\end{align*}
By rearranging terms, we have
\begin{align*}
    & \sum_{p=1}^{P} \bigl[ \nabla F_{p}(\theta_{p-1})^{\top} (\theta_{p-1}-\theta_{\star}) -\frac{\gamma}{2}||\theta_{p-1}-\theta_{\star}||_{\nabla F_{p}(\theta_{p-1})\nabla F_{p}(\theta_{p-1})^{\top}}^{2} \bigr] \\
    & \leq \frac{1}{2 \gamma} \sum_{p=1}^{P}||\nabla F_{p}(\theta_{p-1})||_{A_{p}^{-1}}^{2} + 2 \gamma \lambda S^{2}
\end{align*}
Combining with Eq.\eqref{eq:OCO_instantaneous_regret}, we obtain the following upper bound for the $P$-step online regret
\begin{align*}
    & \sum_{p=1}^{P} F_{p}(\theta_{p-1}) - F_{p}(\theta_{\star}) \leq \frac{1}{2 \gamma} \sum_{p=1}^{P}||\nabla F_{p}(\theta_{p-1})||_{A_{p}^{-1}}^{2} + 2 \gamma \lambda S^{2}
\end{align*}
where $A_{p}=\sum_{\rho=1}^{p} \nabla F_{\rho}(\theta_{\rho-1})\nabla F_{\rho}(\theta_{\rho-1})^{\top} $.
\end{proof}

\begin{corollary}[Order of $B_{P}$] \label{coro:BP_order}
Under the condition that $\gamma=\frac{1}{2} \min(\frac{1}{4S\sqrt{k_{\mu}^{2} S^{2}+R_{\max}^{2}}}, \frac{c_{\mu}}{(k_{\mu}^{2} S^{2}+R_{\max}^{2})\max_{p\in[P]}(n_{p}-n_{p-1})})$, the online regret upper bound $B_{P}=O(\frac{k_{\mu}^{2}+R_{max}^{2}}{c_{\mu}} d\log{(n_{P})} \max_{p \in [P]}(n_{p}-n_{p-1}))$.
\end{corollary}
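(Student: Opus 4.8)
The plan is to bound the two summands of $B_P=\frac{1}{2\gamma}\sum_{p=1}^P \|\nabla F_p(\theta_{p-1})\|_{A_p^{-1}}^2 + 2\gamma\lambda S^2$ separately. Recall from the proof of Lemma \ref{lem:third_variant_online_regret} that $G:=\sqrt{k_{\mu}^2 S^2 + R_{\max}^2}$ satisfies $|\mu(\bx_s^\top\theta_{p-1})-y_s|\le G$ for all $s$, and that $\gamma=\frac12\min(\frac{1}{4SG},\frac{c_{\mu}}{G^2\max_{p}(n_p-n_{p-1})})$. Since over a growing horizon the second term in the $\min$ is the binding one, I would first record $\frac{1}{2\gamma}=\max(4SG,\frac{G^2\max_p(n_p-n_{p-1})}{c_{\mu}})=O(\frac{k_{\mu}^2+R_{\max}^2}{c_{\mu}}\max_p(n_p-n_{p-1}))$, treating $S$ as a constant. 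This isolates the $\frac{k_{\mu}^2+R_{\max}^2}{c_{\mu}}\max_p(n_p-n_{p-1})$ factor of the target bound and shows the additive $2\gamma\lambda S^2$ term is $O(1)$, hence negligible.

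Second, I would bound the elliptical potential $\sum_{p=1}^P\|\nabla F_p(\theta_{p-1})\|_{A_p^{-1}}^2$ by a log-determinant, exactly as in the standard ONS analysis. Writing $g_p:=\nabla F_p(\theta_{p-1})$ and $A_p=\lambda I+\sum_{\rho\le p}g_\rho g_\rho^\top$ (the $\lambda I$ coming from initialization), Sherman--Morrison gives $g_p^\top A_p^{-1}g_p=\frac{g_p^\top A_{p-1}^{-1}g_p}{1+g_p^\top A_{p-1}^{-1}g_p}$, and the elementary inequality $\frac{x}{1+x}\le\log(1+x)$ together with $\det A_p=\det A_{p-1}(1+g_p^\top A_{p-1}^{-1}g_p)$ telescopes to $\sum_{p=1}^P\|g_p\|_{A_p^{-1}}^2\le\log\frac{\det A_P}{\det(\lambda I)}$.

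Third, I would control $\log\det A_P$ through the trace. The key input is the batched-gradient norm bound $\|g_p\|_2\le\sum_{s=n_{p-1}+1}^{n_p}\|\bx_s\|_2\,|\mu(\bx_s^\top\theta_{p-1})-y_s|\le (n_p-n_{p-1})G$, so that $\mathrm{tr}(A_P-\lambda I)=\sum_{p=1}^P\|g_p\|_2^2\le G^2\sum_{p=1}^P(n_p-n_{p-1})^2\le G^2\,\max_p(n_p-n_{p-1})\,n_P$, using $\sum_p(n_p-n_{p-1})=n_P$. By AM--GM on the eigenvalues, $\det A_P\le(\mathrm{tr}(A_P)/d)^d$, whence $\log\frac{\det A_P}{\det(\lambda I)}\le d\log\!\big(1+\tfrac{G^2\max_p(n_p-n_{p-1})n_P}{\lambda d}\big)=O(d\log n_P)$, since $\max_p(n_p-n_{p-1})\le n_P$ makes the argument of the log polynomial in $n_P$. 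Multiplying this by the $O(\frac{k_{\mu}^2+R_{\max}^2}{c_{\mu}}\max_p(n_p-n_{p-1}))$ prefactor from the first step yields $B_P=O(\frac{k_{\mu}^2+R_{\max}^2}{c_{\mu}}d\log(n_P)\max_p(n_p-n_{p-1}))$, as claimed.

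The main obstacle I anticipate is keeping the dependence on the batch size $\max_p(n_p-n_{p-1})$ honest: this factor enters twice — once through the learning-rate constraint that forces $\gamma$ to shrink like $1/\max_p(n_p-n_{p-1})$ (because the batched loss $F_p$ is only $\frac{c_{\mu}}{(n_p-n_{p-1})G^2}$-exp-concave), and once through the inflated gradient norms in the trace bound. The delicate point is that the second appearance contributes only a single extra power inside a logarithm, via $\sum_p(n_p-n_{p-1})^2\le\max_p(n_p-n_{p-1})\cdot n_P$ rather than a worst-case $n_P^2$, so that exactly one clean factor of $\max_p(n_p-n_{p-1})$ survives outside the log. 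I would double-check that the $\max$ (rather than an average) is what appears, and that treating $S,\lambda$, and $G/c_{\mu}$ as constants inside the $O(\cdot)$ is consistent with the intended statement.
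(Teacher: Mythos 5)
Your proposal is correct and follows essentially the same route as the paper's proof: bound the elliptical potential $\sum_{p=1}^{P}\lVert \nabla F_{p}(\theta_{p-1})\rVert_{A_{p}^{-1}}^{2}$ by $\log\frac{\det(A_{P})}{\det(\lambda I)}$ (you re-derive this via Sherman--Morrison where the paper cites Lemma 11 of \citep{abbasi2011improved}), control the log-determinant by the determinant--trace inequality using $\lVert \nabla F_{p}(\theta_{p-1})\rVert_{2}\leq (n_{p}-n_{p-1})G$ with $G=\sqrt{k_{\mu}^{2}S^{2}+R_{\max}^{2}}$, and multiply by $\frac{1}{2\gamma}=O\bigl(\frac{G^{2}}{c_{\mu}}\max_{p}(n_{p}-n_{p-1})\bigr)$ while noting the $2\gamma\lambda S^{2}$ term is negligible. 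The only (immaterial) difference is that you bound $\sum_{p}(n_{p}-n_{p-1})^{2}\leq \max_{p}(n_{p}-n_{p-1})\,n_{P}$ inside the logarithm, whereas the paper uses the cruder $\leq n_{P}^{2}$; both yield the same $O(d\log n_{P})$ factor.
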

\begin{proof}[Proof of Corollary \ref{coro:BP_order}]
Recall that $A_{p}=\sum_{\rho=1}^{p} \nabla F_{\rho}(\theta_{\rho-1})\nabla F_{\rho}(\theta_{\rho-1})^{\top} $. Therefore, we have
\begin{align*}
    \sum_{p=1}^{P} ||\nabla F_{p}(\theta_{p-1})||_{A_{p}^{-1}}^{2} &\leq \log\frac{\det(A_{P})}{\det(\lambda I)} = \log\frac{\det(\lambda I + \sum_{p=1}^{P}\nabla F_{p}(\theta_{p-1})\nabla F_{p}(\theta_{p-1})^{\top})}{\det(\lambda I)}  \\
    & \leq d \log{\bigl(1+ \frac{1}{d \lambda}\sum_{p=1}^{P}\lVert \nabla F_{p}(\theta_{p-1}) \rVert_{2}^{2} \bigr)}
\end{align*}
where the first inequality is due to Lemma 11 of \citep{abbasi2011improved}, and the second due to the determinant-trace inequality (Lemma 10 of \citep{abbasi2011improved}), i.e., $\det(\lambda I + \sum_{p=1}^{P}\nabla F_{p}(\theta_{p-1})\nabla F_{p}(\theta_{p-1})^{\top}) \leq \bigl( \frac{tr(\lambda I + \sum_{p=1}^{P}\nabla F_{p}(\theta_{p-1})\nabla F_{p}(\theta_{p-1})^{\top})}{d} \bigr)^{d} =\bigl( \frac{d \lambda + \sum_{p=1}^{P}\lVert \nabla F_{p}(\theta_{p-1}) \rVert_{2}^{2}}{d} \bigr)^{d}$.
Also note that $\nabla F_{p}(\theta_{p-1}) = \sum_{s=n_{p-1}+1}^{n_{p}} \bx_{s} \bigl[ \mu(\bx_{s}^{\top}\theta_{p-1})-y_{s} \bigr]$, so we have
\begin{align*}
     \sum_{p=1}^{P} ||\nabla F_{p}(\theta_{p-1})||_{2}^{2} &=  \sum_{p=1}^{P} ||\sum_{s=n_{p-1}+1}^{n_{p}} \bx_{s} \bigl[ \mu(\bx_{s}^{\top}\theta_{p-1})-y_{s} \bigr]||_{2}^{2} \\
    & \leq G^{2} \sum_{p=1}^{P} ||\sum_{s=n_{p-1}+1}^{n_{p}} \bx_{s} ||_{2}^{2} \leq G^{2} \sum_{p=1}^{P} (n_{p}-n_{p-1})^{2} \leq G^{2} n_{P}^{2}
\end{align*}
where the second inequality is due to Jensen's inequality and the assumption that $\lVert \bx_{s} \rVert \leq 1,\forall s$.
Substituting this back gives us
\begin{align*}
    & \sum_{p=1}^{P} F_{p}(\theta_{p-1}) - F_{p}(\theta_{\star}) \leq \frac{1}{2 \gamma} d \log{\bigl(1+ \frac{1}{d \lambda}G^{2} n_{P}^{2} \bigr)} + 2 \gamma \lambda S^{2}  \\
    & = \frac{(k_{\mu}^{2} S^{2}+R_{\max}^{2})\max_{p\in[P]}(n_{p}-n_{p-1})}{c_{\mu}} d \log{\bigl(1+ \frac{1}{d \lambda} (k_{\mu}^{2} S^{2}+R_{\max}^{2}) n_{P}^{2} \bigr)} \\
    & \quad + \frac{c_{\mu}}{(k_{\mu}^{2} S^{2}+R_{\max}^{2})\max_{p\in[P]}(n_{p}-n_{p-1})} \lambda S^{2}
\end{align*}
where the equality is because $\max_{p\in[P]}(n_{p}-n_{p-1})$ dominates $\gamma=\frac{1}{2} \min(\frac{1}{4GS}, \frac{c_{\mu}}{G^{2}\max_{p\in[P]}(n_{p}-n_{p-1})})$.
\end{proof}

\noindent\textbf{$\bullet$ Construct Confidence Ellipsoid for \algone{}$_{3}$}
With the online regret bound $B_{P}$ in Lemma \ref{lem:third_variant_online_regret}, the steps to construct the confidence ellipsoid largely follows that of Theorem 1 in \citep{jun2017scalable}, with the main difference in our batch update. We include the full proof here for the sake of completeness. 

\begin{lemma}[Confidence Ellipsoid for \algone{}$_{3}$] \label{lem:confidence_ellipsoid_variant_3}
Under the condition that the learning rate of ONS $\gamma=\frac{1}{2} \min(\frac{1}{4S\sqrt{k_{\mu}^{2} S^{2}+R_{\max}^{2}}}, \frac{c_{\mu}}{(k_{\mu}^{2} S^{2}+R_{\max}^{2})\max_{p\in[P]}(n_{p}-n_{p-1})})$, we have $\forall t \in [T], i \in [N]$
\begin{align*}
    \lVert \theta_{\star} - \hat{\theta}_{t,i} \rVert_{V_{t,i}}^{2} & \leq \lambda S^{2} + 1+\frac{4}{c_{\mu}} B_{P} + \frac{8 R_{max}^{2}}{c_{\mu}^{2}} \log{(\frac{N}{\delta}\sqrt{4+\frac{8}{c_{\mu}}B_{P}+\frac{64 R_{max}^{2}}{c_{\mu}^{4} \cdot 4 \delta^{2}}})}\} \\
    & \quad - \hat{\theta}_{t,i}^{\top} b_{t,i} - \sum_{s=1}^{n_{P}} z_{s}^{2}:=\alpha_{t,i}^{2}
\end{align*}
with probability at least $1-\delta$.
\end{lemma}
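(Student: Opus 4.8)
The plan is to follow the online-to-confidence-set conversion of \citep{abbasi2012online,jun2017scalable}, reusing the template already applied for Lemma~\ref{lem:confidence_ellipsoid_fedglb} but exploiting the fact that here \emph{every} model update---local and global---is an online (ONS) step. I would start from the $P$-step online regret bound of Lemma~\ref{lem:third_variant_online_regret}, namely $\sum_{p=1}^{P}[F_p(\theta_{p-1}) - F_p(\theta_\star)] \le B_P$, and lower-bound its left-hand side using the $c_\mu$-strong convexity of $l(z,y)$ in $z$. Writing each batch loss as $F_p(\theta_{p-1}) = \sum_{s=n_{p-1}+1}^{n_p} l(\bx_s^\top\theta_{p-1}, y_s)$, applying the strong-convexity inequality to each summand (with $\mu(\bx_s^\top\theta_\star)-y_s = -\eta_s$), and summing over all $n_P$ data points yields, after rearrangement,
\begin{equation*}
\sum_{s=1}^{n_P} Z_s^2 \;\le\; \frac{2}{c_\mu} B_P + \frac{2}{c_\mu}\sum_{s=1}^{n_P}\eta_s Z_s, \qquad Z_s := \bx_s^\top\bigl(\theta_{p(s)-1}-\theta_\star\bigr),
\end{equation*}
where $p(s)$ denotes the batch containing data point $s$ and $\theta_{p(s)-1}$ is the model used to predict it.

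The crucial---and easiest to get wrong---step is bounding the martingale term $\sum_s \eta_s Z_s$. The key observation I would emphasize is that, unlike the offline estimator $\theta_{t_\text{last}}$ in Lemma~\ref{lem:confidence_ellipsoid_fedglb}, the global update of \algone{}$_3$ is an ONS step whose gradient is evaluated at the \emph{previous} epoch's model $\theta_{p-1}$; hence for every data point $s$ in epoch $p$ (local or global), $\theta_{p(s)-1}$ is a function of data collected strictly before epoch $p$, so $Z_s$ is $\cF_s$-measurable while $\eta_s$ is conditionally $R_{\max}$-sub-Gaussian. This restores exactly the predictable structure that was broken in the main algorithm, so no matrix-weighted Cauchy--Schwarz detour is needed. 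I would then invoke the self-normalized bound for predictable sequences (Lemma~\ref{lem:uniform_self_normalized}, Corollary~8 of \citep{abbasi2012online}) with $V=1$, taking a union bound over the $N$ clients' filtrations to produce the $N/\delta$ factor, and being careful to use the batched filtration of Section~\ref{sec:technical_lemmas} so that each global batch is revealed atomically rather than ``cut in half.'' This gives a self-bounding inequality in $q = \sqrt{1+\sum_s Z_s^2}$ of the form $q^2 \le a + f q\sqrt{\log(qN/\delta)}$ with $a = 1 + \tfrac{2}{c_\mu}B_P$ and $f = \tfrac{2\sqrt2 R_{\max}}{c_\mu}$; solving it with Lemma~2 of \citep{jun2017scalable} yields
\begin{equation*}
\sum_{s=1}^{n_P} Z_s^2 \;\le\; 1 + \frac{4}{c_\mu}B_P + \frac{8R_{\max}^2}{c_\mu^2}\log\!\Bigl(\tfrac{N}{\delta}\sqrt{4+\tfrac{8}{c_\mu}B_P + \tfrac{64R_{\max}^2}{c_\mu^4\cdot 4\delta^2}}\Bigr).
\end{equation*}

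Finally I would convert this prediction-error bound into the stated ellipsoid. Identifying $\sum_s Z_s^2 = \lVert \mathbf{z}_{t,i} - \mathbf{X}_{t,i}\theta_\star\rVert_2^2$, where $\mathbf{z}_{t,i}$ stacks the predicted rewards $z_s$ and $\mathbf{X}_{t,i}$ the corresponding contexts seen by client $i$, I add the regularizer $\lambda\lVert\theta_\star\rVert_2^2 \le \lambda S^2$ and complete the square against the ridge estimator $\hat\theta_{t,i} = V_{t,i}^{-1}b_{t,i}$ with $V_{t,i}=\lambda I + \mathbf{X}_{t,i}^\top\mathbf{X}_{t,i}$ and $b_{t,i}=\mathbf{X}_{t,i}^\top\mathbf{z}_{t,i}$. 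The identity $\lVert \mathbf{z}_{t,i}-\mathbf{X}_{t,i}\theta_\star\rVert_2^2 + \lambda\lVert\theta_\star\rVert_2^2 = \lVert\theta_\star-\hat\theta_{t,i}\rVert_{V_{t,i}}^2 + \lVert\mathbf{z}_{t,i}\rVert_2^2 - \hat\theta_{t,i}^\top b_{t,i}$ then rearranges to a bound of exactly the form $\lVert\theta_\star-\hat\theta_{t,i}\rVert_{V_{t,i}}^2 \le \alpha_{t,i}^2$. The main obstacle throughout is the measurability/filtration bookkeeping for the batched global updates; once that is settled, the quadratic-solving and completing-the-square steps are routine and mirror \citep{jun2017scalable}.
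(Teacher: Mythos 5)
Your proposal is correct and follows essentially the same route as the paper's proof: lower-bounding the $P$-step lazy-ONS regret $B_P$ via $c_\mu$-strong convexity, observing that $\bx_s^\top(\theta_{p(s)-1}-\theta_\star)$ is $\cF_s$-measurable because each batch's model depends only on strictly earlier data (so Lemma~\ref{lem:uniform_self_normalized} applies directly, with no matrix-weighted Cauchy--Schwarz needed), solving the resulting self-bounding inequality with Lemma~2 of \citep{jun2017scalable}, and completing the square against the ridge estimator $\hat\theta_{t,i}=V_{t,i}^{-1}b_{t,i}$ with a union bound over the $N$ clients. Your completed-square identity yields $+\hat\theta_{t,i}^\top b_{t,i} - \lVert\mathbf{z}_{t,i}\rVert_2^2$, which agrees with the paper's own derivation (the sign on $\hat\theta_{t,i}^\top b_{t,i}$ in the lemma statement appears to be a typo).
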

\begin{proof}[Proof of Lemma \ref{lem:confidence_ellipsoid_variant_3}]
Due to $c_{\mu}$-strongly convexity of $l(z,y)$ w.r.t. $z$, we have $l(\bx_{s}^{\top}\theta_{p-1},y_{s}) - l(\bx_{s}^{\top}\theta_{\star},y_{s}) \geq \bigl[ \mu(\bx_{s}^{\top}\theta_{\star})-y_{s} \bigr]\bx_{s}^{\top}(\theta_{p-1}-\theta_{\star}) + \frac{c_{\mu}}{2}
\bigl[\bx_{s}^{\top}(\theta_{p-1}-\theta_{\star})\bigr]^{2}$. Therefore, 
\begin{align*}
     F_{p}(\theta_{p-1}) - F_{p}(\theta_{\star}) &= \sum_{s=n_{p-1}+1}^{n_{p}} l(\bx_{s}^{\top}\theta_{p-1},y_{s}) - l(\bx_{s}^{\top}\theta_{\star},y_{s}) \\ 
    & \geq \sum_{s=n_{p-1}+1}^{n_{p}} \bigl[ \mu(\bx_{s}^{\top}\theta_{\star})-y_{s} \bigr]\bx_{s}^{\top}(\theta_{p-1}-\theta_{\star}) + \frac{c_{\mu}}{2} \sum_{s=n_{p-1}+1}^{n_{p}}\bigl[\bx_{s}^{\top}(\theta_{p-1}-\theta_{\star})\bigr]^{2} \\
    & = -\sum_{s=n_{p-1}+1}^{n_{p}} \eta_{s} \bx_{s}^{\top}(\theta_{p-1}-\theta_{\star}) +\frac{c_{\mu}}{2} \sum_{s=n_{p-1}+1}^{n_{p}}\bigl[\bx_{s}^{\top}(\theta_{p-1}-\theta_{\star})\bigr]^{2} 
\end{align*}
where $\eta_{s}$ is the $R$-sub-Gaussian noise in the reward $y_{s}$.
Summing over $P$ steps we have
\small
\begin{align*}
    B_{P} & \geq  \sum_{p=1}^{P} F_{p}(\theta_{p-1}) - F_{p}(\theta_{\star}) \geq \sum_{p=1}^{P} \sum_{s=n_{p-1}+1}^{n_{p}} \eta_{s} \bx_{s}^{\top}(\theta_{p-1}-\theta_{\star}) + \frac{c_{\mu}}{2} \sum_{p=1}^{P} \sum_{s=n_{p-1}+1}^{n_{p}}\bigl[\bx_{s}^{\top}(\theta_{p-1}-\theta_{\star})\bigr]^{2} 
\end{align*}
\normalsize
By rearranging terms, we have
\begin{align*}
    & \sum_{p=1}^{P} \sum_{s=n_{p-1}+1}^{n_{p}}\bigl[\bx_{s}^{\top}(\theta_{p-1}-\theta_{\star})\bigr]^{2} \leq \frac{2}{c_{\mu}} \sum_{p=1}^{P} \sum_{s=n_{p-1}+1}^{n_{p}} \eta_{s} \bx_{s}^{\top}(\theta_{p-1}-\theta_{\star}) + \frac{2}{c_{\mu}}B_{P}
\end{align*}
Then as $\bx_{s}^{\top}(\theta_{p-1}-\theta_{\star})$ for $s \in [n_{p-1}+1,n_{p}]$ is $\cF_{s}$-measurable for lazily updated online estimator $\theta_{p-1}$, we can use Corollary 8 from \citep{abbasi2012online}, which leads to
\small
\begin{align*}
    & \sum_{p=1}^{P} \sum_{s=n_{p-1}+1}^{n_{p}} \eta_{s} \bx_{s}^{\top}(\theta_{p-1}-\theta_{\star}) \leq \\
    & \quad R_{max}\sqrt{\bigl( 2+2\sum_{p=1}^{P} \sum_{s=n_{p-1}+1}^{n_{p}} (\bx_{s}^{\top}(\theta_{p-1}-\theta_{\star}))^{2} \bigr) \cdot \log\bigl( \frac{1}{\delta} \sqrt{1+\sum_{p=1}^{P} \sum_{s=n_{p-1}+1}^{n_{p}} (\bx_{s}^{\top}(\theta_{p-1}-\theta_{\star}))^{2}} \bigr)}
\end{align*}
\normalsize
Then we have 
\small
\begin{align*}
    &\sum_{p=1}^{P} \sum_{s=n_{p-1}+1}^{n_{p}}\bigl[\bx_{s}^{\top}(\theta_{p-1}-\theta_{\star})\bigr]^{2}  \leq \frac{2}{c_{\mu}}B_{P} \\
    & + \frac{2R_{max}}{c_{\mu}}\sqrt{\bigl( 2+2\sum_{p=1}^{P} \sum_{s=n_{p-1}+1}^{n_{p}} (\bx_{s}^{\top}(\theta_{p-1}-\theta_{\star}))^{2} \bigr) \cdot \log\bigl( \frac{1}{\delta} \sqrt{1+\sum_{p=1}^{P} \sum_{s=n_{p-1}+1}^{n_{p}} (\bx_{s}^{\top}(\theta_{p-1}-\theta_{\star}))^{2}} \bigr)}
\end{align*}
\normalsize
Then by applying Lemma 2 from \citep{jun2017scalable}, we have
\begin{align*}
    \sum_{p=1}^{P} \sum_{s=n_{p-1}+1}^{n_{p}}\bigl[\bx_{s}^{\top}(\theta_{p-1}-\theta_{\star})\bigr]^{2} \leq 1+\frac{4}{c_{\mu}} B_{P} + \frac{8 R_{max}^{2}}{c_{\mu}^{2}} \log{(\frac{1}{\delta}\sqrt{4+\frac{8}{c_{\mu}}B_{P}+\frac{64 R_{max}^{2}}{c_{\mu}^{4} \cdot 4 \delta^{2}}})}
\end{align*}
Therefore, we have the following confidence ellipsoid (regularized with parameter $\lambda$):
\small
\begin{equation*}
    \{\theta : \sum_{p=1}^{P} \sum_{s=n_{p-1}+1}^{n_{p}}\bigl[\bx_{s}^{\top}(\theta_{p-1}-\theta_{\star})\bigr]^{2} +\lambda \lVert \theta \rVert_{2}^{2} \leq \lambda S^{2} + 1+\frac{4}{c_{\mu}} B_{P} + \frac{8 R_{max}^{2}}{c_{\mu}^{2}} \log{(\frac{1}{\delta}\sqrt{4+\frac{8}{c_{\mu}}B_{P}+\frac{64 R_{max}^{2}}{c_{\mu}^{4} \cdot 4 \delta^{2}}})}\}
\end{equation*}
\normalsize
And this can be rewritten as a ellipsoid centered at ridge regression estimator $\hat{\theta}_{t,i}=V_{t,i}^{-1}b_{t,i}$, where $V_{t,i}=\lambda I + \sum_{p=1}^{P} \sum_{s=n_{p-1}+1}^{n_{p}}\bx_{s}\bx_{s}^{\top}$ and $b_{t,i}=\sum_{p=1}^{P} \sum_{s=n_{p-1}+1}^{n_{p}} \bx_{s} z_{s}$ (recall that ONS's prediction at time $s$ is denoted as $z_{s}=\bx_{s}^{\top} \theta_{p-1}$), i.e., $\forall t \in [T]$
\begin{equation*}
    \lVert \theta_{\star} - \hat{\theta}_{t,i} \rVert_{V_{t,i}}^{2} \leq \lambda S^{2} + 1+\frac{4}{c_{\mu}} B_{P} + \frac{8 R_{max}^{2}}{c_{\mu}^{2}} \log{(\frac{1}{\delta}\sqrt{4+\frac{8}{c_{\mu}}B_{P}+\frac{64 R_{max}^{2}}{c_{\mu}^{4} \cdot 4 \delta^{2}}})}\} + \hat{\theta}_{t,i}^{\top} b_{t,i} - \sum_{s=1}^{n_{P}} z_{s}^{2}
\end{equation*}
with probability at least $1-\delta$. Then taking union bound over all $N$ clients, we have, $\forall t \in [T], i \in [N]$
\begin{align*}
    \lVert \theta_{\star} - \hat{\theta}_{t,i} \rVert_{V_{t,i}}^{2} \leq \lambda S^{2} + 1+\frac{4}{c_{\mu}} B_{P} + \frac{8 R_{max}^{2}}{c_{\mu}^{2}} \log{(\frac{N}{\delta}\sqrt{4+\frac{8}{c_{\mu}}B_{P}+\frac{64 R_{max}^{2}}{c_{\mu}^{4} \cdot 4 \delta^{2}}})}\} + \hat{\theta}_{t,i}^{\top} b_{t,i} - \sum_{s=1}^{n_{P}} z_{s}^{2}
\end{align*}
with probability at least $1-\delta$.
\end{proof}

\noindent\textbf{$\bullet$ Regret and Communication Upper Bounds for \algone{}$_{3}$}

The regret and communication cost of \algone{}$_{3}$ is given in the following theorem.
\begin{theorem}[Regret and Communication Cost Upper Bound of \algone{}$_{3}$]
Under the condition that the learning rate of ONS $\gamma=\frac{1}{2} \min(\frac{1}{4S\sqrt{k_{\mu}^{2} S^{2}+R_{\max}^{2}}}, \frac{c_{\mu}}{(k_{\mu}^{2} S^{2}+R_{\max}^{2})\sqrt{NT}})$, and the total number of global synchronizations $B=\sqrt{NT}$, the cumulative regret $R_{T}$ has upper bound
\begin{align*}
    R_{T} = O\left(\frac{k_{\mu}(k_{\mu}+R_{max})}{c_{\mu}}d N^{3/4}T^{3/4}\log(NT/\delta)\right)
\end{align*}
with probability at least $1-\delta$. The cumulative communication cost has upper bound
\begin{align*}
    C_{T} = O(N^{1.5} \sqrt{T})
\end{align*}
\end{theorem}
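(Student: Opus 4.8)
The plan is to reuse the confidence-ellipsoid machinery already built for \algone{}$_{3}$ (Lemma~\ref{lem:third_variant_online_regret}, Corollary~\ref{coro:BP_order}, and Lemma~\ref{lem:confidence_ellipsoid_variant_3}) and then run the standard good/bad-epoch regret decomposition used in the proof of Theorem~\ref{thm:regret_comm_upper_bound}. First I would specialize the fixed schedule: with $B=\sqrt{NT}$ global updates, each epoch spans $T/B=\sqrt{T/N}$ time steps, during which all $N$ clients collect a total of $N\cdot T/B=\sqrt{NT}$ samples. Hence the largest batch fed into a single global ONS step satisfies $\max_{p\in[P]}(n_{p}-n_{p-1})=\sqrt{NT}$, which is exactly the quantity that determines the chosen learning rate $\gamma$ and enters the online-regret bound $B_{P}$.

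Next I would instantiate Corollary~\ref{coro:BP_order} with this maximum batch size and $n_{P}\le NT$ to obtain $B_{P}=O\!\big(\tfrac{k_{\mu}^{2}+R_{\max}^{2}}{c_{\mu}}\,d\,\sqrt{NT}\,\log(NT)\big)$. Plugging this into Lemma~\ref{lem:confidence_ellipsoid_variant_3}, the dominating term $\tfrac{4}{c_{\mu}}B_{P}$ gives the uniform bound $\alpha_{t,i}^{2}=O\!\big(\tfrac{k_{\mu}^{2}+R_{\max}^{2}}{c_{\mu}^{2}}\,d\,\sqrt{NT}\,\log(NT/\delta)\big)$, so that $\alpha_{t,i}\le\alpha_{T}=O\!\big(\tfrac{k_{\mu}+R_{\max}}{c_{\mu}}\sqrt{d\log(NT/\delta)}\,(NT)^{1/4}\big)$ for all $t,i$ with probability at least $1-\delta$. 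The extra $(NT)^{1/4}$ factor, relative to the standard GLB radius, is precisely the price paid for the lazy batched update: the exp-concavity constant of a batched loss degrades linearly in the batch size, forcing the smaller $\gamma$ and inflating $B_{P}$.

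For the regret I would follow the argument of Theorem~\ref{thm:regret_comm_upper_bound}. The UCB construction gives the instantaneous bound $r_{t,i}\le 2k_{\mu}\alpha_{t-1,i}\lVert\bx_{t,i}\rVert_{V_{t-1,i}^{-1}}$, and I would split the $B$ epochs into \emph{good} and \emph{bad} ones by the determinant ratio $\det(V_{t_{p}})/\det(V_{t_{p-1}})\le 2$. In good epochs the local matrix $V_{t-1,i}$ is comparable to the centralized matrix $\tilde V_{t-1,i}$, so the elliptical-potential lemma (Lemma~11 of \citep{abbasi2011improved}) bounds $\sum_{t,i}\lVert\bx_{t,i}\rVert_{\tilde V_{t-1,i}^{-1}}=O(\sqrt{NTd\log(NT)})$; multiplying by $2\sqrt{2}k_{\mu}\alpha_{T}$ yields $REG_{\mathrm{good}}=O\!\big(\tfrac{k_{\mu}(k_{\mu}+R_{\max})}{c_{\mu}}d\,(NT)^{3/4}\log(NT/\delta)\big)$. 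For bad epochs I would use the uniform bound $r_{t,i}\le 2k_{\mu}S$ together with the fact that there are at most $O(d\log(NT))$ of them by determinant growth, each containing $N\cdot T/B=\sqrt{NT}$ samples, so $REG_{\mathrm{bad}}=O(k_{\mu}Sd\sqrt{NT}\log(NT))$, which is of strictly lower order in $T$. Summing the two contributions gives the claimed $R_{T}$.

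The communication cost is then immediate from the design: unlike the AGD-based variants, each global update of \algone{}$_{3}$ is a single ONS step requiring exactly one round of gradient aggregation over the $N$ clients, so $C_{T}=N\cdot B=N\cdot\sqrt{NT}=O(N^{1.5}\sqrt{T})$. The main obstacle I anticipate is not the epoch bookkeeping (which transfers almost verbatim from Theorem~\ref{thm:regret_comm_upper_bound}) but carefully tracking the $(NT)^{1/4}$ inflation of $\alpha_{t,i}$ through the batch-size dependence of $\gamma$ and $B_{P}$, and verifying that the bad-epoch contribution genuinely stays below the $(NT)^{3/4}$ good-epoch term so that it can be absorbed into the stated bound.
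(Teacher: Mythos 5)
Your proposal is correct and follows essentially the same route as the paper's own proof: instantiating Corollary~\ref{coro:BP_order} with $\max_{p\in[P]}(n_{p}-n_{p-1})=NT/B=\sqrt{NT}$ to get the inflated radius $\alpha_{t,i}=O\bigl(\tfrac{k_{\mu}+R_{\max}}{c_{\mu}}\sqrt{d\log(NT/\delta)}\,(NT)^{1/4}\bigr)$, running the good/bad-epoch determinant-ratio decomposition with the elliptical-potential bound for good epochs and the uniform $2k_{\mu}S$ bound (times $NT/B$ samples and $O(d\log NT)$ bad epochs) for bad ones, and counting $C_{T}=N\cdot B$ since each global update is a single round of gradient aggregation. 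The only cosmetic difference is that you fix $B=\sqrt{NT}$ from the outset, whereas the paper carries $B$ symbolically and optimizes at the end, which changes nothing substantive.
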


\begin{proof}
Similar to the proof for the previous two variants of \algone{}, we divide the epochs into `good' and `bad' ones according to the determinant ratio, and then bound their cumulative regret separately.

Recall that the instantaneous regret $r_{t,i}$ incurred by client $i\in[N]$ at time step $t\in[T]$ has upper bound
\begin{align*}
    \frac{r_{t,i}}{k_{\mu}} & \leq \bx_{t,\star}^{\top} \theta_{\star}- \bx_{t,i}^{\top} \theta_{\star} \leq \bx_{t,i}^{\top} \tilde{\theta}_{i,t}- \bx_{t,i}^{\top} \theta_{\star} \\
    & = \bx_{t,i}^{\top} (\tilde{\theta}_{i,t} - \hat{\theta}_{t,i}) + \bx_{t,i}^{\top} (\hat{\theta}_{t,i} - \theta_{\star})  \\
    & \leq \lVert \bx_{t,i} \rVert_{V_{t,i}^{-1}} \lVert \tilde{\theta}_{i,t} - \hat{\theta}_{t,i} \rVert_{V_{t,i}} + \lVert \bx_{t,i} \rVert_{V_{t,i}^{-1}} \lVert \hat{\theta}_{t,i} - \theta_{\star} \rVert_{V_{t,i}} \\
    & \leq 2 \alpha_{t,i} \lVert \bx_{t,i} \rVert_{V_{t,i}^{-1}}
\end{align*}
Note that due to the update schedule $\cS$, we have $\max_{p \in [P]}(n_{p}-n_{p-1}) = \frac{NT}{B}$. Then based on Corollary \ref{coro:BP_order}, $\alpha_{t,i}= O(\frac{k_{\mu}+R_{max}}{c_{\mu}}\sqrt{d \log(NT)} \sqrt{\frac{NT}{B}})$, so we have, $\forall t \in [T],i \in [N]$,
\begin{equation*}
    r_{t,i} = O(\frac{k_{\mu}(k_{\mu}+R_{max})}{c_{\mu}}\sqrt{d \log(NT)}\sqrt{\frac{NT}{B}}) \lVert \bx_{t,i} \rVert_{A_{t,i}^{-1}}
\end{equation*}
with probability at least $1-\delta$.

Therefore, the cumulative regret for the `good epochs' is $REG_{good}=O(\frac{k_{\mu}(k_{\mu}+R_{max})}{c_{\mu}}d \frac{NT}{\sqrt{B}}\log(NT))$. 

Using the same argument as in the proof for \algone{}$_{1}$, the cumulative regret for each `bad ' epoch is upper bounded by $2 k_{\mu} S \frac{NT}{B}$.
Since there can be at most $O(d \log{NT})$ `bad epochs', the cumulative regret for all the `bad epochs' is upper bounded by
\begin{align*}
    REG_{bad} = O(d NT\log(NT) \cdot \frac{k_{\mu}S}{B}) 
\end{align*}
Combining the regret incurred in both `good' and `bad' epochs, we have
\begin{align*}
    R_{T} = O\big( \frac{k_{\mu}(k_{\mu}+R_{max})}{c_{\mu}}d \frac{NT}{\sqrt{B}}\log(NT) + d NT\log(NT) \cdot \frac{k_{\mu}S}{B} \big)
\end{align*}
To recover the regret in centralized setting, we can $B=NT$, which leads to $R_{T}=O(\frac{k_{\mu}(k_{\mu}+R_{max})}{c_{\mu}}d \sqrt{NT}\log(NT))$. However, this incurs communication cost $C_{T}=N^{2}T$.
Alternatively, if we set $B=\sqrt{NT}$, we have $R_{T} = O(\frac{k_{\mu}(k_{\mu}+R_{max})}{c_{\mu}}d N^{3/4}T^{3/4}\log(NT))$, and $ C_{T} = O(N^{1.5} \sqrt{T})$.
\end{proof}

\section{Additional Explanation about Figure \ref{fig:real_exp_results}} \label{sec:explain_scatter_plot}
In Section \ref{sec:exp}, we used the scatter plots to present the experiment results. Here we provide more explanation about how to interpret these figures.
As mentioned earlier, each dot in Figure \ref{fig:real_exp_results} denotes the cumulative communication cost (x-axis) and regret (y-axis) that an algorithm (\algone{}, its variants, or DisLinUCB) with certain threshold value of $D$ or $B$ (labeled next to the dot) has obtained at iteration $T$.

Here, Figure \ref{fig:linePlot} shows how the cumulative regret/reward and communication cost of five algorithms change over the course of federated bandit learning in our evaluations on synthetic dataset, (their final results at iteration $T$ are used to plot five dots in Figure \ref{fig:a}). By carefully examining the relationship between their regret and communication cost, we can see that in Figure \ref{fig:linePlot}, \algone{} ($D=5.0$), \algone{}$_{1}$ ($B=10.0$), \algone{}$_{2}$ ($B=10.0$), and \algone{}$_{3}$ ($B=5000.0$) incur similar total communication cost, but \algone{} ($D=5.0$) attains much smaller regret than the others. Meanwhile, \algone{} ($D=5000.0$) attains almost the same regret as \algone{}$_{2}$ ($B=10.0$), but its communication cost is much lower. 

\begin{figure*}[h!]
\centering     
\includegraphics[width=0.95\textwidth]{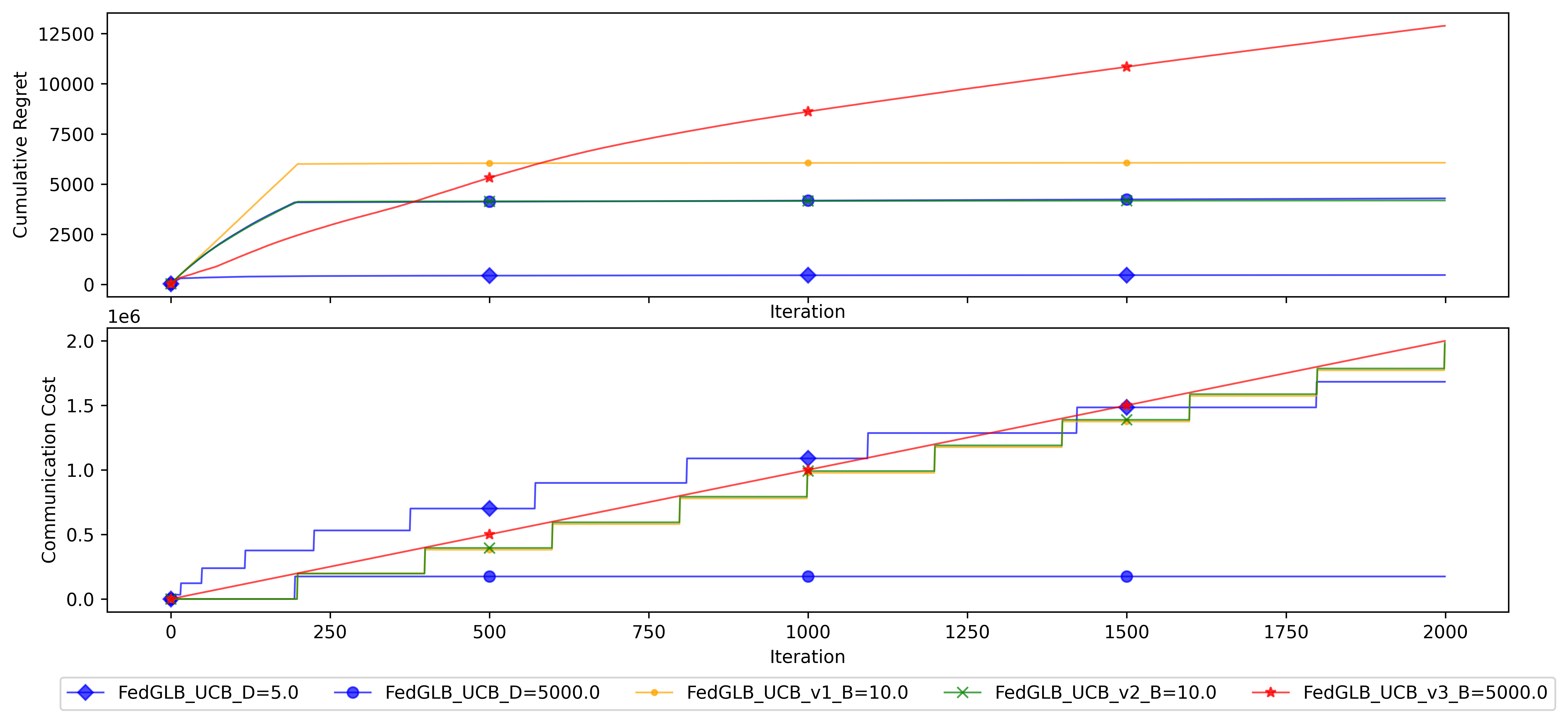}
\caption{Experiment results showing regret and communication cost over time.}
\label{fig:linePlot}
\end{figure*}

Figure \ref{fig:linePlot} also depicts how the communication was controlled in \algone{} under its event triggered protocol (e.g., generally a decreasing frequency of communication comparing to the scheduled updated in its variants). This shows that \algone{} strikes the best regret/reward-communication trade-off among the algorithm instances in comparison. However, this line chart can only accommodate a limited range of trade-off settings for these algorithms, to attain a reasonable visibility.
In comparison, the scatter plots in Figure \ref{fig:a} provide a much more thorough view of how well the algorithms balance regret/reward and communication cost, by covering a large range of trade-off settings.



\end{document}